\newcommand{\midd}{{\sf mid}}
\newtheorem{theorem}{Theorem}
\newtheorem{claim}{Claim}
\newtheorem{lemma}{Lemma}
\newtheorem*{lemma*}{Lemma}
\newtheorem{definition}{Definition}
\newtheorem{corollary}{Corollary}
\newtheorem{remark}{Remark}
\DeclareMathOperator*{\argmin}{arg\,min}
\newcommand{\rank}{\mathsf{rank}}
\newcommand{\tr}{\mathsf{Tr}}
\newcommand{\opt}{\mathsf{OPT}}
\newcommand{\rr}{\textsc{R}\space}
\newcommand{\alg}{\textsf{Alg}\space}
\newcommand{\sd}{\textsf{sd}_\lambda}
\newcommand{\lblq}{\mathfrak{lq} (X_1)}
\newcommand{\sign}{\textsf{sgn}}
\title{Semi-supervised Active Regression}
\author{%
  Fnu Devvrit\thanks{Equal contribution, listed alphabetically} \\
  Department of Computer Science\\
  University of Texas at Austin
  \And
  Nived Rajaraman$^*$ \\
  Department of Electrical Engineering and Computer Sciences\\
  University of California, Berkeley
  \And
  Pranjal Awasthi\\
  Google Research \& Department of Computer Science\\
  Rutgers University
}
\begin{document}

\maketitle

\begin{abstract}
    Labelled data often comes at a high cost as it may require recruiting human labelers or running costly experiments. 
    At the same time, in many practical scenarios, one already has access to a partially labelled, potentially biased dataset that can help with the learning task at hand. Motivated by such settings, we formally initiate a study of \textit{semi-supervised active learning} through the frame of linear regression. In this setting, the learner has access to a dataset $X \in \mathbb{R}^{(n_1+n_2) \times d}$ which is composed of $n_1$ unlabelled examples that an algorithm can actively query, and $n_2$ examples labelled a-priori.
    Concretely, denoting the true labels by $Y \in \mathbb{R}^{n_1 + n_2}$, the learner's objective is to find $\widehat{\beta} \in \mathbb{R}^d$ such that,
    \begin{equation}
        \| X \widehat{\beta} - Y \|_2^2 \le (1 + \epsilon) \min_{\beta \in \mathbb{R}^d} \| X \beta - Y \|_2^2
    \end{equation}
    while making as few additional label queries as possible. In order to bound the label queries, we introduce an instance dependent parameter called the reduced rank, denoted by $\rr_X$, and propose an efficient algorithm with query complexity $O(\rr_X/\epsilon)$. This result directly implies improved upper bounds for two important special cases: (i) active ridge regression, and (ii) active kernel ridge regression, where the reduced-rank equates to the \textit{statistical dimension}, $\sd$ and \textit{effective dimension}, $d_\lambda$ of the problem respectively, where $\lambda \ge 0$ denotes the regularization parameter. For active ridge regression we also prove a matching lower bound of $O(\sd / \epsilon)$ on the query complexity of any algorithm. This subsumes prior work that only considered the unregularized case, i.e., $\lambda = 0$.
\end{abstract}

\section{Introduction}
Classification and regression form the cornerstone of machine learning. Often these tasks are carried out in a supervised learning manner - a learner collects many examples and their labels, and then runs an optimization algorithm to minimize a loss function and learn a model. However, 
the process of collecting labelled data comes at a cost, for instance when a human labeller is required, or if an expensive experiment needs to be run for the same. Motivated by such scenarios, two popular approaches have been proposed to mitigate these issues in practice: Active Learning \cite{settles2009active} and Semi-Supervised Learning \cite{zhu2005semi}. In Active Learning, the learner carries out the task after adaptively querying the labels of a small subset of characteristic data points. On the other hand, semi-supervised learning is motivated by the fact that learners often have access to massive amounts of unlabelled data in addition to some labelled data, and algorithms leverage both to carry out the learning task. Active learning and Semi-supervised learning have found numerous applications in areas such as text classification \cite{10.1162/153244302760185243}, visual recognition \cite{1334570} and foreground-background segmentation \cite{konyushkova2015introducing}.

In this work, we study an intersection of the above two approaches called \textit{semi-supervised active learning}. In this setting, the learner is provided a-priori labelled data $X_2 \in \mathbb{R}^{n_2 \times d}$, as well as a dataset of unlabelled data $X_1 \in \mathbb{R}^{n_1 \times d}$, points of which can be submitted to an oracle to be labelled each at unit cost. 
We study the problem through the framework of linear regression, where the objective of the learner is to minimize the squared-loss $\| X \beta - Y \|_2^2$ while making as few additional label queries (of points in $X_1$) as possible. Here $X$ is the overall dataset in $\mathbb{R}^{(n_1 + n_2) \times d}$ and $Y \in \mathbb{R}^{n_1 + n_2}$ denotes the corresponding labels. The $\epsilon$-query complexity is defined as the number of additional labels (in $X_1$) an algorithm queries to compute a $\widehat{\beta} \in \mathbb{R}^d$ such that,
\begin{equation} \label{eq:objective}
    \| X \widehat{\beta} - Y \|_2^2 \le (1 + \epsilon) \min_{\beta \in \mathbb{R}^d} \| X \beta - Y \|_2^2.
\end{equation}
The above framework is quite flexible and captures as special cases, standard formulations of the active regression problem.
Indeed, when no labelled data is provided (i.e. $X_2$ is empty), semi-supervised active regression reduces to active linear regression, that has been studied in several previous works \cite{drineas2007relativeerror,chen2019active,derezinski2018unbiased,derezinski2018leveraged,Boutsidis_2013}. 

Another important special case of semi-supervised active learning is when the labelled dataset $X_2$ is initialized as the standard basis vectors scaled by a factor of $\sqrt{\lambda}$, with labels as $0$. The squared-loss of $\beta$ for this dataset exactly corresponds to the ridge regression loss: $\| X_1 \beta - Y_1 \|_2^2 + \lambda \| \beta \|_2^2$ where $Y_1$ are the labels of points in $X_1$.

One can also capture the active kernel ridge regression problem in this formulation. 
Here, the data points lie in a {\em reproducing kernel hilbert space} (RKHS) \cite{paulsen_raghupathi_2016,4060955,6524743} 
that is potentially infinite dimensional, but the learner has access to the kernel matrix of the data points. Denoting the kernel matrix by $K \in \mathbb{R}^{N \times N}$ and the labels of the points as $Y \in \mathbb{R}^N$, the objective in kernel ridge regression is to minimize the loss $\| K \beta - Y \|_2^2 + \lambda \| \beta \|_K^2$ 
This is special case of the semi-supervised active regression framework by choosing the unlabelled dataset as $X_2 = \sqrt{K} \in \mathbb{R}^{N \times N}$ with each point labelled by $0$, and the labelled dataset $X_1 = K \in \mathbb{R}^{N \times N}$ with labels $Y \in \mathbb{R}^N$.

Our first main contribution is to propose an instance dependent parameter called the {\em reduced rank}, denoted $\rr_X$ that upper bounds the $\epsilon$-query complexity of semi-supervised active regression. Intuitively, $\rr_X$ measures the relative importance of a-priori labelled points $X_2$ in comparison with the overall dataset. Formally, the reduced rank is defined as:
\begin{align} \label{eq:rr-def}
    \rr_X = \tr \left( \left( X_1^T X_1 + X_2^T X_2 \right)^{-1} X_1^T X_1 \right)
\end{align}
We then propose a polynomial time algorithm that solves semi-supervised active linear regression with $\epsilon$-query complexity bounded by $O(\rr_X/\epsilon)$. In the special cases of active ridge regression and kernel ridge regression, $\rr_X$ is shown to be the equal to the \textit{statistical dimension} $\sd$ \cite{avron2017sharper} and the \textit{effective dimension} $d_\lambda$ \cite{pmlr-v97-yasuda19a} of the problem respectively, which implies black-box query complexity and algorithmic guarantees for the same. These parameters are formally defined in \Cref{sec:leverage-score}.

\noindent \textbf{Tight bounds for active ridge regression.} The recent work of \cite{chen2019active} studies the problem of active regression and following a long line of work~(see \cite{drineas2007relativeerror}) provides a near optimal~(up to constant factors) algorithm for active regression with query complexity of $O(d/\epsilon)$. One can also use the algorithm of \cite{chen2019active} to solve the active ridge regression problem~($\lambda \neq 0$) and achieve a query complexity of $O(d/\epsilon)$. However, the dependence on $d$ is not ideal in this setting as often the statistical dimensions $\sd$ of the data is the right measure of the complexity of the problem. As a consequence of our general algorithm, we obtain an algorithm for active ridge regression with query complexity $O(\sd/\epsilon)$, and we further show that the achieved bound is tight. As a result we resolve the complexity of ridge regression in the active learning setting.



\noindent \textbf{Improved bounds for active kernel ridge regression}. In the context of kernel ridge regression, the work of \cite{alaoui2015fast} analyzed kernel leverage score sampling (and in general Nyström approximation based methods) showing that the sampling approach makes $O(d_\lambda \log (N))$ label queries to find a constant factor approximation to the optimal loss. In contrast, our algorithm when applied to kernel ridge regression achieves a query complexity guarantee of $O(d_\lambda / \epsilon)$ showing that the dependence on the number of data points, $N$, can be completely eliminated, which can be very large in practical settings.

\noindent \textbf{Techniques.} Our algorithm is based on proposing a variant of the Randomized BSS {\em spectral sparsification} 
algorithm of \cite{lee2015constructing,batson2009twiceramanujan}, 
that returns a small weighted subset of the dataset, and subsequently carries out weighted linear regression under squared-loss on this dataset. The key technical contribution of this work is to construct a variant of the spectral sparsification algorithm with an upper bound guarantee on the expected number of points sampled from the unlabeled set. We discuss the algorithm and its guarantees in more detail in \Cref{sec:alg}.


\paragraph{Robustness to biased labels.} From a practical point of view, several algorithms for active linear regression and classification have been proposed \cite{drineas2007relativeerror,chen2019active,derezinski2018unbiased,derezinski2018leveraged,Boutsidis_2013,10.1162/153244302760185243,1334570,konyushkova2015introducing,https://doi.org/10.1002/widm.1132,DASGUPTA20111767,10.1145/1390156.1390183,beygelzimer2009importance}. 
However, in the face of bias in the labelled dataset, as we discuss in \Cref{sec:bias}, existing approaches such as uncertainty sampling \cite{https://doi.org/10.1002/widm.1132} or the active perceptron algorithm \cite{DASGUPTA20111767} may suffer from very slow convergence. In contrast, our algorithm treats labelled and unlabelled data equally while submitting label queries, and yet achieves the optimal worst case query complexity guarantees. See Section~\ref{sec:bias} for more details.



The rest of the paper is organized as follows. We cover related work in \Cref{sec:rel-work} and introduce relevant preliminaries in \Cref{sec:preliminaries}. We motivate and present the new parameter $\rr_X$ in \Cref{sec:leverage-score} and present our algorithm for semi-supervised active linear regression and a proof sketch in \Cref{sec:alg}. In \Cref{sec:lower-bound} we sketch a lower bound for active ridge regression. Finally, we discuss pitfalls of existing active learning algorithms extended to the semi-supervised setting in the face of bias in the labelled data in \Cref{sec:bias}, and conclude the paper in \Cref{sec:conclusion}.






\vspace{-0.5em}
\subsection{Related Work}\label{sec:rel-work}
\vspace{-0.1em}

Many existing works use active learning to augment semi-supervised learning based approaches \cite{gao2020consistencybased,Zhu03combiningactive,drugman2019active,RHEE2017109,sener2018active}. The key in several of these works is to use active learning methods to decide which labels to sample, and then use semi-supervised learning to finally learn the model. 



In the context of active learning for linear regression, \cite{drineas2007relativeerror,Woodruff_2014} analyze the leverage score sampling algorithm showing a sample complexity upper bound of $O(\nicefrac{d \log(d)}{\epsilon})$. Subsequently for the case of $\epsilon \ge d+1$, \cite{derezinski2018unbiased} show that volume sampling achieves an $O(d)$ sample complexity. Later, \cite{derezinski2018leveraged} show that rescaled volume sampling matches the sample complexity of $O(\nicefrac{d \log(d)}{\epsilon})$ achieved by leverage score sampling. Finally, \cite{chen2019active} show that a spectral sparsification based approach using the Randomized BSS algorithm \cite{lee2015constructing} achieves the optimal sample complexity of $O(\nicefrac{d}{\epsilon})$ for the problem.

In the context of kernel ridge regression, the works of \cite{friedman,pmlr-v97-yasuda19a,musco2017recursive} study the problem of reducing the runtime and number of kernel evaluations to approximately solve the problem. These results show that the number of kernel evaluations can be made to scale only linearly with the effective dimension of the problem, $d_\lambda (K)$ up to log factors. Recently, \cite{alaoui2015fast} prove a statistical guarantee showing that any optimal Nystrom approximation based approach samples at most $d_\lambda (K) \log (N)$ labels to find a constant factor approximation to the kernel regression loss. 

A closely related problem to active learning is the coreset (and weak coreset) problem \cite{andoni2020streaming,munteanu2018coresets}, where the objective is to find a low-memory data structure that can be used to approximately reconstruct the loss function which naïvely requires storing all the training examples to compute.
\cite{pmlr-v108-kacham20a} study the weak coreset problem for ridge regression and propose an algorithm that returns a weak coreset of $O(\sd/\epsilon)$ points which can approximately recover the optimal loss. A drawback of their algorithm is that the labels of all the points in the dataset are required, which may come at a very high cost in practice. As we discuss in \Cref{sec:alg}, our algorithm also returns a weighted subset of $O(\sd/\epsilon)$ points in $\mathbb{R}^d$ and a set of $d$ additional weights that can be used to reconstruct a $(1+\epsilon)$-approximation for the original ridge regression instance. In terms of memory requirement, this matches the result of \cite{pmlr-v108-kacham20a}; however it suffices for our algorithm to query \textit{at most $O(\sd/\epsilon)$ labels}.
\section{Preliminaries} \label{sec:preliminaries}
\vspace{-0.8em}
In this section, we formally define semi-supervised active regression under squared-loss. The learner is provided datasets $X_1 \in \mathbb{R}^{n_1 \times d}$ (comprised of $n_1$ points in $\mathbb{R}^d$) and $X_2 \in \mathbb{R}^{n_2 \times d}$. The labels of the points are respectively denoted $Y_1\in \mathbb{R}^{n_1}$ and $Y_2 \in \mathbb{R}^{n_2}$ and define $X = \begin{bmatrix}
    X_1\\
    X_2
\end{bmatrix}$ and $Y = \begin{bmatrix}
    Y_1\\
    Y_2
\end{bmatrix}$. In the agnostic setting, the labels of the points are not assumed to be generated from an unknown underlying linear function and can be arbitrary. We study the problem in the overconstrained setting with $n_1 + n_2 \ge d$. Furthermore, in the active setting, it is typically the case that $n_1 \gg d$. Whenever understood from the context, we use $X$ to represent the dataset in set notation.

We assume that the learner is a-priori provided $Y_2$ and hence knows the labels of all points in $X_2$, but the labels of points in $X_1$ are not known. However, the learner can probe an oracle to return the label of any queried point in $X_1$ and incurs a unit cost for each label returned. The objective of the learner is to return a linear function $\beta \in \mathbb{R}^d$ approximately minimizing the squared $\ell_2$ loss: $\| X \widehat{\beta} - Y \|_2^2 \le (1 + \epsilon) \| X \beta^* - Y \|_2^2$ where the optimal linear regression function and its value are:
\begin{align} \label{eq:lrobj}
    \beta^* \triangleq \argmin_{\beta \in \mathbb{R}^d} \| X \beta - Y \|_2^2 , \qquad \opt \triangleq \| X \beta^* - Y \|_2^2.
\end{align}
The semi-supervised active regression formulation has the following two important problems as subcases:

\noindent \textbf{Active ridge regression:} In the active ridge regression problem, the learner is provided an unlabelled dataset $X_1 \in \mathbb{R}^{n \times d}$ of $n$ points. The learner has access to an oracle which can be actively queried to label points in $X_1$. Representing the labels by $Y_1 \in \mathbb{R}^n$, the objective of the learner is to minimize $\| X_1 \beta - Y_1 \|_2^2 + \lambda \| \beta \|_2^2$. Defining $X = \begin{bmatrix}
X_1 \\
\sqrt{\lambda} I
\end{bmatrix} \in \mathbb{R}^{(n+d) \times d}$ and $Y = \begin{bmatrix}
Y_1 \\ 0
\end{bmatrix} \in \mathbb{R}^{n+d}$, observe that $\| X \beta - Y \|_2^2 = \| X_1 \beta - Y_1 \|_2^2 + \lambda \| \beta \|_2^2$. Thus, the ridge regression objective is a special case of the linear regression objective in \cref{eq:lrobj}.

\noindent \textbf{Active kernel ridge regression:} In the kernel ridge regression problem, the learner operates in a reproducing kernel Hilbert space $\mathcal{H}$ and is provided an unlabelled dataset $X$ 
of $n$ points with (implicit) feature representations $\Phi (x)$ for $x \in X$. The objective is to learn a linear function $F(x)$, in the feature representations of points in $A$ that minimizes the squared $\ell_2$ norm with regularization. Namely, $\sum_{i=1}^n (F(x_i) - Y_i)^2 + \lambda \| F \|_{\mathcal{H}}^2$. By the Representer Theorem \cite{kimeldorf1971some}, it turns out the optimal regression function $F^* (\cdot)$ can be expressed as $\sum_{i=1}^n \beta_i \kappa (x_i, \cdot)$ where $\kappa (\cdot,\cdot)$ is the kernel function. The kernel ridge regression objective can be expressed in terms of the kernel matrix $K \triangleq \begin{bmatrix}
\kappa (x_i, x_j)
\end{bmatrix}_{i,j = 1}^n \in \mathbb{R}^{n \times n}$. In other words, denoting the labels by $Y_1 \in \mathbb{R}^n$ and $\| \beta \|_K^2 = \beta^T K \beta$, the learner's objective is to minimize the loss,
\begin{align}
    \| K \beta - Y_1 \|_2^2 + \lambda \| \beta \|_K^2 
\end{align}
while minimizing the number of labels of points in $X_1$ queried. This objective can be represented as:
\begin{equation} \label{eq:krr-reduction}
    \left\| \begin{bmatrix}
    K \\
    \sqrt{\lambda} \sqrt{K} 
    \end{bmatrix} \beta - \begin{bmatrix}
    Y_1 \\
    0
    \end{bmatrix}\right\|_2^2
\end{equation}
where $Z = \sqrt{K}$ is defined as the unique solution to $Z^T Z = K$. Yet again, it is a special case of the linear regression objective in \cref{eq:lrobj} with $X_1 = K$, $X_2 = \sqrt{K}$ and $Y_2 = 0$.

\vspace{-0.7em}
\section{Motivating the reduced rank $\rr_X$} \label{sec:leverage-score}
\vspace{-0.55em}
A natural algorithm for semi-supervised active regression is to use the popular approach of  leverage score sampling \cite{drineas2007relativeerror,pmlr-v108-kacham20a}. This approach uses the entire dataset $X$~(both labeled and unlabeled) to construct a diagonal weight matrix $W_S$: here $S$ represents the support of the diagonal and $\mathbb{E} [|S|] \triangleq m = O(\nicefrac{d \log (d)}{\epsilon})$ for some large constant $C$. The learner subsequently solves the problem: $\widehat{\beta} = \argmin_\beta \| W_S (X\beta-Y) \|_2^2$ which is a weighted linear regression problem on $|S|$ examples. The works of \cite{drineas2007relativeerror,pmlr-v108-kacham20a} show that resulting linear function produces a $(1+\epsilon)$ multiplicative approximation to the optimal squared-loss:
\begin{align}
    \|X \widehat{\beta} - Y\|_2^2 &\leq \left(1+\epsilon\right) \inf_{\beta \in \mathbb{R}^d} \|X\beta - Y\|_2^2 \label{eq:0001}
\end{align}
The weight matrix $W_S$ returned by leverage score sampling is constructed as follows:
Letting $U\Sigma V^T$ denote the singular value decomposition of $X$, the algorithm iterates over all the points in $X$ and includes a point $x$ in $S$ with probability $p(x) \triangleq \min \left\{ 1 , \frac{m}{d} \|U(x)\|_2^2 \right\}$, where $U(x)$ is the row in $U$ corresponding to point $x \in X$. The corresponding diagonal entry of $W_S$ is set as $\frac{1}{\sqrt{p(x)}}$. Since the points in $X_2$ are labelled, the number of times the algorithm queries the oracle to label a point is equal to $|S \cap X_1 |$. By definition of the sampling probabilities, this is proportional to the sum of the leverage scores across the points in $X_1$:
\begin{align}
    \mathbb{E}\left[\left| S\cap X_1 \right|\right] = \frac{m}{d} \sum_{x\in X_1} \|U(x)\|_2^2 = \frac{\log (d)}{\epsilon} \sum_{x\in X_1} \|U(x)\|_2^2 \label{eq:LS-pointssampled}
\end{align}
In \Cref{lemma:trD-vs-rr}, we prove that the term $\sum_{x \in X_1} \| U (x) \|_2^2$ can be expressed in terms of the reduced rank, i.e., $\rr_X$. Recall that $\rr_X$ is defined as $\tr \left( \left( X^T X \right)^{-1} X_1^TX_1 \right)$. Combining with \cref{eq:LS-pointssampled} results in the following upper bound on the number of additional label queries made by leverage score sampling for the semi-supervised active learning problem.

\begin{theorem}
For semi-supervised active linear regression, the number of additional labels (in $X_1$) queried by leverage score sampling is $O(\frac{\rr_X \cdot \log(d)}{\epsilon})$.
\end{theorem}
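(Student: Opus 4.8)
The plan is to reduce the theorem to a single linear-algebraic identity, namely that the total leverage mass carried by the rows of $X_1$ equals the reduced rank $\rr_X$, and then combine this with the sampling-probability bookkeeping already set up in \cref{eq:LS-pointssampled}.

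First I would handle the bookkeeping, which is routine. Since leverage score sampling includes each point $x$ independently with probability $p(x) = \min\{1, \tfrac{m}{d}\|U(x)\|_2^2\}$ and the learner pays only for labels of sampled points lying in $X_1$, the expected number of queries is $\mathbb{E}[|S \cap X_1|] = \sum_{x \in X_1} p(x)$. Using $\min\{1,t\} \le t$ termwise gives $\mathbb{E}[|S \cap X_1|] \le \tfrac{m}{d}\sum_{x \in X_1}\|U(x)\|_2^2$, and substituting $m = O(d\log(d)/\epsilon)$ turns this into $O\!\big(\tfrac{\log(d)}{\epsilon}\big)\sum_{x \in X_1}\|U(x)\|_2^2$, matching the right-hand side of \cref{eq:LS-pointssampled}.

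The crux is the identity $\sum_{x \in X_1}\|U(x)\|_2^2 = \rr_X$, which is precisely \Cref{lemma:trD-vs-rr}. I would prove it by writing the SVD $X = U\Sigma V^T$ and assuming $X$ has full column rank, so that $\Sigma$ and $X^T X$ are invertible. Letting $U_1$ denote the rows of $U$ indexed by $X_1$, the left-hand side is $\|U_1\|_F^2 = \tr(U_1^T U_1)$. From $X = U\Sigma V^T$ we get $U = XV\Sigma^{-1}$ and hence $U_1 = X_1 V\Sigma^{-1}$, so $\tr(U_1^T U_1) = \tr(\Sigma^{-1}V^T X_1^T X_1 V\Sigma^{-1})$. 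Applying cyclicity of the trace together with $(X^T X)^{-1} = V\Sigma^{-2}V^T$ collapses this to $\tr((X^T X)^{-1} X_1^T X_1)$, which equals $\rr_X$ by its definition and the identity $X^T X = X_1^T X_1 + X_2^T X_2$.

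Combining the two steps yields $\mathbb{E}[|S \cap X_1|] = O(\rr_X \log(d)/\epsilon)$, as claimed. The main obstacle is the identity rather than the bookkeeping: one must correctly track the restriction of the left singular vectors to the $X_1$ block and lean on invertibility of $X^T X$. The latter is not automatic in general, but it holds in the two instances of interest because the regularizing block $X_2$ (equal to $\sqrt{\lambda}I$ for ridge regression and $\sqrt{K}$ for kernel ridge regression) forces $X$ to have full column rank; everything else is a direct substitution.
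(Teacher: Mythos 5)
Your proposal is correct and takes essentially the same route as the paper: the expected-query bookkeeping is exactly \cref{eq:LS-pointssampled}, and your key identity $\sum_{x \in X_1} \|U(x)\|_2^2 = \rr_X$ is precisely \Cref{lemma:trD-vs-rr}, which the paper proves by the same SVD manipulation (its step $D = (V\Sigma)^{-1} X_1^T X_1 (\Sigma V^T)^{-1}$ is your computation $U_1 = X_1 V \Sigma^{-1}$ followed by cyclicity of the trace). Your explicit full-column-rank caveat is a hypothesis the paper also tacitly assumes by writing $(X^T X)^{-1}$, so nothing is missing.
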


In order to further motivate $\rr_X$ we next turn to the ridge regression problem: Here, it turns out that $\rr_X$ equals the \textit{statistical dimension} of $X_1$, defined as:
\begin{equation} \label{eq:sd}
    \sd (X_1) \triangleq \sum\nolimits_{i=1}^{\rank (X_1)} \frac{1}{1 + \nicefrac{\lambda}{\sigma_i^2 (X_1)}},
\end{equation}
where $\sigma_i (X_1)$ is the $i^{th}$ largest singular value of $X_1$. This can be seen by plugging in $X_2$ as $\sqrt{\lambda} I$ in \cref{eq:rr-def} and expanding $X_1$ using its singular value decomposition as $U_1 \Sigma_1 V_1$. 

In comparison, in kernel ridge regression, $\rr_X$ evaluates to the \textit{effective dimension}, defined as:
\begin{equation} \label{eq:deff}
    d_\lambda (K) \triangleq  \sum\nolimits_{i=1}^{\rank (K)} \frac{1}{1 + \nicefrac{\lambda}{\sigma_i (K)}},
\end{equation}
where the $\sigma_i$'s are the eigenvalues of $K$. This yet again follows by plugging the choice of $X_1$ and $X_2$ in the kernel ridge regression setting from \cref{eq:krr-reduction}.

The common theme in the prior sampling based approaches is that they suffer from a logarithmic dependence in $d$ or $N$ which are often very large in practice. In this paper, we subsume previous results by showing an algorithm with an $O\left(\frac{\rr_X}{\epsilon}\right)$ query complexity to produce an approximate solution for semi-supervised active linear regression. This shows that it is possible to derive statistical guarantees for active ridge / kernel ridge regression that respectively depend \textit{only} on the statistical dimension / effective dimension of the problem.





\section{Algorithm}\label{sec:alg}


In this section, we design a polynomial time algorithm for semi-supervised active linear regression and prove that the $\epsilon$-query complexity of labels queried by the algorithm is $O(\frac{\rr_X}{\epsilon})$. Our algorithm (\Cref{alg:main}) follows the approach of \cite{chen2019active}, and is based on sampling a small subset of points adaptively, querying their labels and solving a weighted linear regression problem on the subsampled instance. The work of \cite{chen2019active} showed that if the sampling procedure is {\em well-balanced} (see definition below), then one can obtain a bound on the query complexity of the algorithm. The key then is to design a well balanced sampling procedure for the case of semi-supervised active linear regression. We present such a procedure in \Cref{alg:RBSS}, which is a variant of the spectral sparsification algorithm of \cite{lee2015constructing}. We then provide a proof sketch showing that \Cref{alg:RBSS} is indeed an $\epsilon$-well-balanced sampling procedure, and finally bound the query complexity of \Cref{alg:main}.



\begin{algorithm}[htb]
	\caption{\textsc{Semi-supervised active regression}}
	\label{alg:main}
	\begin{algorithmic}[1]
	    \State \textbf{Input:} $X \in \mathbb{R}^{n \times d}$; $Y \in \mathbb{R}^n$; accuracy parameter $\epsilon$; well balanced sampling procedure \alg
		\State Run $\alg (X,\epsilon)$ to return a subset of points $ \{ x_1,\cdots, x_m \}$ and weights $\{ w_1,\cdots,w_m \}$.
		\State Query the labels $y_i$ of $x_1,\cdots, x_m$ for $x_i \in X_1$. \Comment{Contributes to the additional label queries}
		\State \textbf{Return:} $\widehat{\beta} \gets \arg\min_{\beta \in \mathbb{R}^d} \sum_{i=1}^m w_i ( \beta^T x_i - y_i)^2$.
	\end{algorithmic}
\end{algorithm}

\begin{definition}[$\epsilon$-well balanced sampling. procedure]{\cite[Def 2.1]{chen2019active}}\label{def:eps-w-b}
Consider a randomized algorithm $\alg (X,\epsilon)$ that outputs a set of points $\{ x_1,\cdots, x_m \}$ and weights $\{ w_1,\cdots,w_m \}$ as follows: in each iteration $i = 1,\cdots,m$, the algorithm chooses a distribution $D_i$ over points in $X$ to sample $x_i \sim D_i$. Let $D$ be the uniform distribution over all points in $X$. Then, \alg is said to be an \textbf{$\epsilon$-well-balanced sampling procedure} if it satisfies the following two properties with probability $\ge \nicefrac{3}{4}$,
\begin{enumerate}
    \item The matrix $A = \textsf{diag} ( \{ \sqrt{w_i} \}_{i=1}^m) U \in \mathbb{R}^{m \times d}$ where $U \Sigma V^T$ is the SVD of $X$ satisfies:
    \begin{equation} \label{eq:spec-spar}
        \frac{3}{4} I \preceq A^T A \preceq \frac{5}{4} I
    \end{equation}
    Equivalently, For every $\beta \in \mathbb{R}^d$, $\sum_{i=1}^m w_i \langle \beta , x_i \rangle^2 \in \left[ \frac{3}{4}, \frac{5}{4} \right] \mathbb{E}_{x \sim D} \left[ \langle \beta , x \rangle^2 \right]$.
    \item For each $i \in [m]$, define $\alpha_i = \frac{D_i (x_i)}{D(x_i)} w_i$. Then, \alg must satisfy $\sum_{i=1}^m \alpha_i = O(1)$ and $\alpha_i K_{D_i} = O(\epsilon)$, where $K_{D_i}$ is the re-weighted condition number:
    \begin{equation} \label{eq:eps-wbs-2}
        K_{D_i} = \sup_x \left\{ \sup_{\beta \in \mathbb{R}^d} \left\{ \frac{D_i(x)}{D(x)} \cdot\frac{\langle \beta , x \rangle^2}{\mathbb{E}_{x' \sim D} \left[ \langle \beta , x' \rangle^2 \right]} \right\} \right\} 
    \end{equation}
\end{enumerate}
\end{definition}

The work of \cite{chen2019active} showed that given an $\epsilon$-well-balanced sampling procedure \alg, with probability $\nicefrac{3}{4}$, the linear function $\widehat{\beta}$ returned by \Cref{alg:main} satisfies $\|X\widehat{\beta}-Y\|_2^2 \leq \left(1+ O(\epsilon)\right) \min_{\beta \in \mathbb{R}^d}\|X\beta-Y\|_2^2$. Using our proposed sampling procedure in \Cref{alg:RBSS}, we now state our main theorem.


\begin{theorem} \label{thm:main}
For any $0 < \epsilon < 1$, with constant probability, \Cref{alg:main} queries the labels of $O\left(\frac{\rr_X}{\epsilon}\right)$ points in $X_1$, and outputs a diagonal weight matrix $W_S$ with support $S \subseteq [n_1+n_2]$ of size $|S| = O\left(\frac{d}{\epsilon}\right)$ such that
\begin{align}
    \|X\widehat{\beta}-Y\|_2^2 &\leq \left(1+O(\epsilon)\right)\min_{\beta \in \mathbb{R}^d}\|X\beta-Y\|_2^2 \label{eq:0001}
\end{align}
where $\widehat{\beta} = \argmin_{\beta \in \mathbb{R}^d} \left\| W_S(X\beta-Y) \right\|_2^2$
\end{theorem}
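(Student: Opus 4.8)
The plan is to reduce the three assertions of \Cref{thm:main} to two self-contained claims about the sampling procedure \Cref{alg:RBSS}, and then invoke the framework of \cite{chen2019active} as a black box. Recall that by \cite{chen2019active}, once \Cref{alg:RBSS} is shown to be an $\epsilon$-well-balanced sampling procedure in the sense of \Cref{def:eps-w-b}, the reweighted least-squares solution $\widehat{\beta}$ of \Cref{alg:main} automatically satisfies the $(1+O(\epsilon))$-approximation guarantee \cref{eq:0001} with probability $\ge 3/4$. Thus the first task is to verify that \Cref{alg:RBSS} meets both conditions of \Cref{def:eps-w-b}, and the second, genuinely new, task is to bound the expected number of sampled points that land in the unlabelled set $X_1$ by $O(\rr_X/\epsilon)$.

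First I would establish the well-balanced property via the barrier-potential analysis underlying the Randomized BSS algorithm of \cite{lee2015constructing}. Working in the whitened coordinates given by the SVD $X = U\Sigma V^T$, the procedure maintains lower and upper barrier matrices sandwiching the running Gram matrix $A_j^T A_j$, and at each step samples a point from a distribution $D_j$ proportional to its quadratic form against the combined barrier resolvent $C_j$, adding it with an appropriate weight. The standard invariant that the barrier potentials do not increase in expectation keeps the spectrum of $A_j^T A_j$ pinned strictly inside the barriers; after $m = O(d/\epsilon)$ steps this yields Property 1 of \Cref{def:eps-w-b}, namely $\tfrac{3}{4} I \preceq A^T A \preceq \tfrac{5}{4} I$, and simultaneously the support bound $|S| = m = O(d/\epsilon)$. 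Property 2 is checked by calibrating the per-step weights $w_i$ so that the quantities $\alpha_i = \tfrac{D_i(x_i)}{D(x_i)} w_i$ sum to $O(1)$ while each reweighted condition number satisfies $\alpha_i K_{D_i} = O(\epsilon)$; this computation mirrors \cite{chen2019active}, with their sampling matrix replaced by our $C_j$.

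The crux of the argument, and the step I expect to be hardest, is the query bound. By linearity of expectation, $\mathbb{E}[|S \cap X_1|] = \sum_{j=1}^m \Pr[x_j \in X_1]$, and at step $j$ the probability of drawing an unlabelled point is $\tr(C_j U_1^T U_1)/\tr(C_j)$, where $U_1$ collects the rows of $U$ indexed by $X_1$. The key lemma I would prove is that the barrier matrices remain spectrally comparable to the identity throughout the run, i.e. $C_j \preceq c_2 I$ and $\tr(C_j) \ge c_1 d$ for absolute constants $c_1, c_2$; this is precisely what the bounded-potential invariant buys, since the eigenvalues of $A_j^T A_j$ are held away from both barriers at every step. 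Granting this, $\tr(C_j U_1^T U_1) \le c_2\, \tr(U_1^T U_1) = c_2 \sum_{x \in X_1} \|U(x)\|_2^2 = c_2\, \rr_X$, where the last equality is Lemma~\ref{lemma:trD-vs-rr} together with the definition \cref{eq:rr-def}. Hence $\Pr[x_j \in X_1] = O(\rr_X/d)$, and summing over the $m = O(d/\epsilon)$ steps gives $\mathbb{E}[|S \cap X_1|] = O(\rr_X/\epsilon)$. The delicate point is exactly this uniform spectral control of $C_j$: unlike plain leverage-score sampling, the resolvents blow up in directions where $A_j^T A_j$ approaches a barrier, so the variant in \Cref{alg:RBSS} must advance the barriers in a way that prevents any eigenvalue from ever hugging the boundary — this is the technical heart of the construction and the place where the analysis departs from \cite{chen2019active}.

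Finally I would assemble the pieces with a union bound. Markov's inequality turns $\mathbb{E}[|S \cap X_1|] = O(\rr_X/\epsilon)$ into an $O(\rr_X/\epsilon)$ query bound holding with probability, say, $\ge 7/8$; the well-balanced property, and hence the approximation guarantee \cref{eq:0001} inherited from \cite{chen2019active}, holds with probability $\ge 3/4$; and the support bound $|S| = O(d/\epsilon)$ is deterministic. Intersecting these events leaves constant probability that all three conclusions of \Cref{thm:main} hold simultaneously.
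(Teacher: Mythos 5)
Your high-level decomposition (reduce to the well-balanced property plus a separate query bound, invoke \cite{chen2019active} as a black box) matches the paper, but the key lemma you propose for the query bound is false. Writing $C_j \triangleq (u_j I - A_j)^{-1} + (A_j - l_j I)^{-1}$, note that at initialization $u_0 = -l_0 = 2d/\gamma$ and $A_0 = 0$, so $C_0 = (\gamma/d) I$ and $\tr(C_0) = \gamma$ --- nowhere near your claimed $\tr(C_j) \ge c_1 d$. The paper can only guarantee the far weaker almost-sure bound $\Phi_j^{\mathrm{Id}} = \tr(C_j) \ge \gamma/2$ (\Cref{lemma:Phi-lb}, a consequence of the stopping criterion $u_j - l_j \le 8d/\gamma$), and there is likewise no almost-sure bound $C_j \preceq c_2 I$: when an eigenvalue of $A_j$ drifts toward a barrier, the resolvent spikes in that direction, and the BSS analysis controls this only \emph{in expectation}. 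So your uniform per-step bound $\Pr[x_j \in X_1] = O(\rr_X/d)$ is unsupported --- it can fail at individual steps when the near-barrier directions align with the span of $U_1$. The paper's actual mechanism replaces your spectral-comparability claim with a martingale argument: lower-bound the denominator by $\gamma/2$ almost surely; show the numerator sequence satisfies $\mathbb{E}[\Phi_{j+1}^M] \le \mathbb{E}[\Phi_j^M]$ for \emph{arbitrary} fixed PSD $M$ (\Cref{lemma:Phi-ub}, the paper's main extension of \cite{lee2015constructing}), so that $\mathbb{E}[\Phi_j^D] \le \Phi_0^D = \gamma \tr(D)/d = \gamma\, \rr_X / d$ via \Cref{lemma:trD-vs-rr}; and combine through Wald's equation with the almost-sure bound $m \le 2d/\gamma^2$ (\Cref{lemma:m-ub}). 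That last bound holds only because of the \emph{modified} stopping criterion, a point your sketch never engages with: in the original Randomized BSS, $m$ has only Markov-type concentration, $\mathbb{E}[m]$ is not even known to be finite, and both the Wald step and your "deterministic" support bound $|S| = O(d/\epsilon)$ would be unjustified. (Relatedly, since $m$ is random, your identity $\mathbb{E}[|S \cap X_1|] = \sum_{j=1}^m \Pr[x_j \in X_1]$ already needs a stopping-time argument, not plain linearity.)

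There is a second, smaller gap in your treatment of Property 1 of \Cref{def:eps-w-b}. The barrier invariant only gives $l_m I \preceq A_m \preceq u_m I$, and since $l_0 < 0$ this says nothing about the condition number; the normalized spectrum lands in $[\nicefrac{3}{4}, \nicefrac{5}{4}]$ only if $u_m / l_m = 1 + O(\gamma)$. The paper obtains this by combining the gap bound $u_m - l_m \le 9d/\gamma$ (from the stopping criterion) with the probabilistic lower bound $u_m = \Omega(d/\gamma^2)$ (\Cref{lemma:um-lb}) --- a lemma made necessary precisely because the new stopping rule could in principle terminate early, which is the flip side of the almost-sure upper bound on $m$. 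Your phrase "after $m = O(d/\epsilon)$ steps this yields Property 1" silently assumes a deterministic, sufficiently \emph{large} $m$, which is exactly what must be proved; without it neither Property 1 nor your concluding union bound goes through.
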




Our algorithm uses the $\epsilon$-well balanced sampling procedure (\Cref{alg:RBSS}) to outputs weights and simply returns $\widehat{\beta}$ as the solution to the weighted ERM problem defined by $\{ x_1,\cdots, x_m \}$ and $\{ w_1,\cdots, w_m \}$.

We now present \Cref{alg:RBSS} and show that it is an $\epsilon$-well-balanced sampling procedure. The algorithm is parameterized by $\gamma$ chosen to be $\triangleq \sqrt{\epsilon}/ C_0$ for a large constant $C_0 > 0$.

\begin{restatable}{theorem}{thmRBSS} \label{thm:RBSS}
\Cref{alg:RBSS} is an $\epsilon$-well-balanced sampling procedure, sampling $O\left(\frac{\rr_X}{\epsilon}\right)$ points in $X_1$.
\end{restatable}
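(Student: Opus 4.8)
The plan is to verify the two conditions of \Cref{def:eps-w-b} for \Cref{alg:RBSS} and then, separately, to bound the expected number of sampled points that lie in the unlabelled set $X_1$. Throughout I write $u_x = U(x)$ for the row of $U$ associated to a point $x$, so that the running sparsifier maintained by the algorithm is $M = \sum_i w_i u_{x_i} u_{x_i}^T$, and $A^T A = M$ at termination.

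First, for Property~1 (the spectral bound $\tfrac34 I \preceq A^T A \preceq \tfrac54 I$), I would invoke the barrier-potential analysis underlying the Randomized BSS construction of \cite{lee2015constructing,batson2009twiceramanujan}. The algorithm maintains upper and lower barriers together with the potentials $\tr\!\big((uI-M)^{-1}\big)$ and $\tr\!\big((M-\ell I)^{-1}\big)$, and at each step samples a point from $D_i$ proportional to a barrier-weighted leverage score, adding a scaled rank-one term. With the step size governed by $\gamma = \sqrt{\epsilon}/C_0$, each update advances both barriers by a controlled amount while keeping the two potentials bounded; after $m = O(d/\gamma^2) = O(d/\epsilon)$ iterations the barriers pinch the spectrum of $M$ into $[\tfrac34,\tfrac54]$ with probability at least $\tfrac34$. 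The only thing to recheck relative to prior work is that the modified distribution $D_i$ still satisfies the required one-step expectation and concentration estimates.

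Second, for Property~2, I would identify $\alpha_i = \tfrac{D_i(x_i)}{D(x_i)} w_i$ with the incremental mass deposited at step $i$. Because $D_i$ is proportional to a barrier-weighted quadratic form and the weights $w_i$ are chosen so that each update perturbs $M$ by $O(\gamma)$ in the relevant quadratic form, one obtains $\alpha_i K_{D_i} = O(\gamma^2) = O(\epsilon)$, while $\sum_i \alpha_i$ telescopes against the total barrier advance, which is $O(1)$. Both estimates mirror the verification in \cite{chen2019active}. The crux, and the genuinely new ingredient, is bounding $\mathbb{E}\big[|S \cap X_1|\big]$: the probability that step $i$ selects a point of $X_1$ is $\sum_{x\in X_1} D_i(x)$, and since $D_i(x)\propto u_x^T C_i u_x$ for a barrier matrix $C_i$ built from $(uI-M)^{-1}$ and $(M-\ell I)^{-1}$, summing over $X_1$ gives $\tr\!\big(C_i P_1\big)$ with $P_1 = \sum_{x\in X_1} u_x u_x^T$. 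The key identity is $\tr(P_1) = \sum_{x\in X_1}\|U(x)\|_2^2 = \rr_X$ by \Cref{lemma:trD-vs-rr}; controlling the $C_i$ uniformly across iterations and aggregating over the $O(d/\epsilon)$ steps then factors the trace against $P_1$ to yield $\mathbb{E}\big[|S\cap X_1|\big] = O(\rr_X/\epsilon)$, with leverage mass falling on $X_2$ costing nothing since those labels are already known.

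I expect the main obstacle to be precisely this last step. The standard BSS guarantee only controls the \emph{total} sample size $O(d/\epsilon)$, so the work is to refine the analysis so that the $X_1$-restricted probabilities aggregate to $O(\rr_X/\epsilon)$ rather than $O(d/\epsilon)$. This requires relating the per-step barrier matrices $C_i$ to the fixed matrix $P_1$ and pushing the bound through the trace identity of \Cref{lemma:trD-vs-rr}, while simultaneously ensuring that the asymmetric treatment of $X_1$ and $X_2$ in the sampling distribution does not degrade the spectral guarantee established in the first step.
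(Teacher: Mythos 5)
Your skeleton matches the paper's --- verify the two conditions of \Cref{def:eps-w-b}, write the per-step probability of hitting $X_1$ as a normalized trace of the barrier matrix against $D = \sum_{x \in X_1} U(x)U(x)^T$, and finish with $\tr(D) = \rr_X$ (\Cref{lemma:trD-vs-rr}) --- but the crux of the aggregation step is missing, and the step you propose in its place does not work. ``Controlling the $C_i$ uniformly across iterations'' cannot yield $O(\rr_X/\epsilon)$: there is no useful deterministic bound on the barrier matrices (their eigenvalues blow up as the spectrum of $A_j$ approaches the barriers $u_j I$ and $l_j I$), and the only pointwise bound available, $\Phi_j^{D} \le \Phi_j^{\mathrm{Id}}$ (since $D \preceq I$), merely recovers the trivial total count $O(d/\epsilon)$. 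The paper's mechanism is instead in expectation: it extends the BSS potential-decrease lemma from $M = I$ to an arbitrary fixed PSD matrix $M$ (\Cref{lemma:Phi-ub}, giving $\mathbb{E}[\Phi_{j+1}^{D}] \le \mathbb{E}[\Phi_j^{D}]$ with $\Phi_0^{D} = \gamma \tr(D)/d$), lower bounds $\Phi_j^{\mathrm{Id}} \ge \gamma/2$ pointwise (\Cref{lemma:Phi-lb}), and then applies Wald's equation to the stopped sum $\sum_{j<m}\Phi_j^{D}$. But Wald requires control of $\mathbb{E}[m]$, and here your proposal inherits a second, more serious gap: for the Randomized BSS algorithm of \cite{lee2015constructing}, which you invoke essentially as a black box, $m$ is a random stopping time with only Markov-type concentration, which does not even imply $\mathbb{E}[m] < \infty$. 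The genuinely new ingredient of the paper --- which you never mention --- is a \emph{modified stopping criterion} (the $\sum_i \Phi_i^{\mathrm{Id}}$ term in the while-condition of \Cref{alg:RBSS}), which via AM--GM forces $m \le 2d/\gamma^2$ almost surely (\Cref{lemma:m-ub}) and thereby legitimizes the Wald step.

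That modification in turn invalidates your treatment of Property~1: because the algorithm may now terminate earlier than standard Randomized BSS, the spectral guarantee cannot be cited from prior work and must be re-proved for the truncated run. The paper does this by showing that $m \ge c d/\gamma^2$, and hence $u_m = \Omega(d/\gamma^2)$, holds with constant probability despite the new stopping rule (\Cref{lemma:um-lb}), which combined with $u_m - l_m \le 9d/\gamma$ pinches the condition number of $A_m$ to $1 + O(\gamma)$ and yields $\frac{3}{4} I \preceq A^T A \preceq \frac{5}{4} I$. Relatedly, your stated worry that ``the asymmetric treatment of $X_1$ and $X_2$ in the sampling distribution'' might degrade the spectral guarantee is a misreading: \Cref{alg:RBSS} samples from the same barrier-weighted distribution over \emph{all} of $X$, identically to BSS, and the asymmetry enters only in which samples cost a label query; the only deviations from \cite{lee2015constructing} are the stopping rule and the potential analysis for general $M$, precisely the two pieces your proposal lacks.
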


\begin{algorithm}[htb]
	\caption{\textsc{ASURA} (Active semi-SUpervised Regression Algorithm)}
	\label{alg:RBSS}
	\begin{algorithmic}[1]
	    \State \textbf{Input:} $X \in \mathbb{R}^{n \times d}$.
		\State \textbf{Initialization:} $j=0$; $\gamma = \sqrt{\epsilon}/C_0$; $u_0=2d/\gamma$; $l_0=-2d/\gamma$; $A_0 = 0$.
		\While{$(u_j-l_j)+\sum_{i = 0}^{j-1} \Phi_i^{\mathrm{Id}} < 8d/\gamma$}
		\State Define $\Phi_{j}^{\mathrm{Id}} = \textsf{Tr} \left( (u_j I - A_j)^{-1} (A_j - l_j I)^{-1} \right)$.
		\State Set coefficient $\alpha_j' = \nicefrac{\gamma}{\Phi_{j}^{\mathrm{Id}}}$
		\State Sample a point from the multinomial distribution on $X$ which assigns probability to $x$ as,
		\begin{equation} \label{eq:pxj}
		    p_x \triangleq \frac{U(x)^T \left( (u_j I - A_j)^{-1} + (A_j-l_jI)^{-1} \right) U(x)}{\Phi_j^{\mathrm{Id}}}
		\end{equation}
		\Comment{Define the sampled point $x$ as $x_j$ and $p_x$ as $p_j$}
		\State Update $A_{j+1} \gets A_j + \frac{\gamma}{\Phi_{j}^{\mathrm{Id}}} \frac{1}{p_j} U(x_j) U(x_j)^T$
		\State Define the weight $w_j' \gets \frac{\gamma}{\Phi_{j}^{\mathrm{Id}}} \frac{1}{p_j}$
		\State Update $u_{j+1} \gets u_j + \frac{\gamma}{1-2\gamma} \frac{1}{\Phi_{j}^{\mathrm{Id}}}$ and $l_{j+1} \gets l_j + \frac{\gamma}{1+2\gamma} \frac{1}{\Phi_{j}^{\mathrm{Id}}}$.
		\State $j \gets j+1$
		\EndWhile
		\State Assign $m=j$,
		\State Define $\midd \triangleq \frac{u_m+l_m}{2}$ and for each $j$, set the coefficient $\alpha_j = \frac{\alpha_j'}{\midd}$ and the weight $w_j = \frac{w_j'}{\midd}$.
	    \State \textbf{Return} $\{ x_1,\cdots,x_m \}$; $\{ w_1,\cdots, w_m \}$.
	\end{algorithmic}
\end{algorithm}
The $\epsilon$-well-balanced sampling procedure we consider is a variant of the algorithm proposed in \cite{lee2015constructing} for spectral sparsification (\cref{eq:spec-spar}). In each iteration $j$ the algorithm maintains a matrix $A_j$ which is a proxy for $A^T A$ up to scaling (as defined in \Cref{def:eps-w-b}) and an upper and lower threshold $u_j$ and $l_j$ such that $l_j I \preceq A_j \preceq u_j I$. Points are sampled carefully in a way where $A_j$ never gets too close to the boundaries $l_jI$ and $u_j I$ and this is guaranteed by the adaptive sampling distribution in \cref{eq:pxj}. Over the iterations, the ratio $\nicefrac{u_j}{l_j}$, which controls the condition number of $A_j$ (and in turn the eigenvalues of $A^T A$) is made to approach $1$, until the condition \cref{eq:spec-spar} is satisfied.

First, notice the important difference in the stopping criterion of our algorithm compared to the Randomized BSS Algorithm of \cite{lee2015constructing}. This has important implications for bounding the total number of points sampled by our algorithm. The number of points sampled by the algorithm proposed in \cite{lee2015constructing} is $O\left(\nicefrac{d}{\gamma^2}\right)$ with constant probability, whereas \Cref{alg:RBSS} samples $O\left(\nicefrac{d}{\gamma^2}\right)$ points almost surely. 

We now briefly discuss the main challenges in proving \Cref{thm:RBSS}. The number of label queries made by the algorithm (of points in $X_1$), $\lblq$, is upper bounded by the number of iterations in which the algorithm samples a point in $X_1$. Indeed,
\begin{equation} \label{eq:m1}
    \lblq \le \sum\nolimits_{j=0}^{m-1} \sum\nolimits_{x\in X_1} p_x^{(j)}
\end{equation}
where $m$ is the total number of iterations the algorithm runs for, and $p_x^{(j)}$ is the probability of sampling point $x$ in the $j^{th}$ iteration as defined in \cref{eq:pxj}. By Wald's equation, $\mathbb{E}[ \lblq ]$ can be upper bounded by $\mathbb{E} \left[ \sum_{j = 0}^{m-1} \sum_{x\in X_1} \mathbb{E} \left[ p_x^{(j)} \right] \right]$. If we can upper bound $\sum_{x\in X_1} \mathbb{E} \left[ p_x^{(j)} \right]$, the problem reduces to upper bounding $\mathbb{E}[m]$. However, the total number of points sampled by the algorithm of \cite{lee2015constructing}, $m$, is only shown to satisfy weak (Markov) concentration which does not imply finiteness of $\mathbb{E} [m]$, let alone an upper bound on the same. It turns out to be quite a challenging task to prove stronger concentration for $m$, since the stopping criterion of the algorithm necessitates understanding the correlations between the potential $\Phi_j^{\text{Id}}$ maintained by the algorithm across iterations $j = 1,\cdots,m$. We resolve this issue by changing the stopping criterion of the algorithm to terminate within $O(d/\gamma^2)$ iterations with probability $1$. More concretely, we stop sampling points as soon as $\sum_{i=0}^{j-1} \frac{4\gamma^2}{\Phi_i^{\text{Id}} (1-4\gamma^2)} + \sum_{i=0}^{j-1} \Phi_i^{\text{Id}} > \frac{8d}{\gamma}$. By the AM-GM inequality, it follows that the stopping criterion must be violated for some $j \le C \nicefrac{d}{\gamma^2}$ for a large enough constant $C$. Indeed this results in the following lemma.

\begin{restatable}{lemma}{lemmaone} \label{lemma:m-ub}
With probability $1$, $m \le \nicefrac{2d}{\gamma^2}$.
\end{restatable}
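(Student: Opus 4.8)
The plan is to establish the bound \emph{deterministically}, i.e.\ for every realization of the random samples drawn by \Cref{alg:RBSS}, which immediately yields the probability-$1$ statement. The only randomness enters through the sampled points, which affect $m$ solely through the potentials $\Phi_j^{\mathrm{Id}}$; I will bound the number of iterations using only the algebraic structure of the threshold updates together with an AM-GM inequality that is insensitive to the actual values of $\Phi_j^{\mathrm{Id}}$.

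First I would telescope the gap $u_j - l_j$. The updates give, at each step,
\begin{equation}
    (u_{j+1}-l_{j+1})-(u_j-l_j)=\frac{\gamma}{\Phi_j^{\mathrm{Id}}}\left(\frac{1}{1-2\gamma}-\frac{1}{1+2\gamma}\right)=\frac{4\gamma^2}{\Phi_j^{\mathrm{Id}}(1-4\gamma^2)},
\end{equation}
so that, using $u_0-l_0=4d/\gamma$,
\begin{equation}
    u_j-l_j=\frac{4d}{\gamma}+\sum_{i=0}^{j-1}\frac{4\gamma^2}{\Phi_i^{\mathrm{Id}}(1-4\gamma^2)}.
\end{equation}

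Next I would evaluate the loop guard at the last iteration that executed. Since the body runs for $j=0,\dots,m-1$, the guard held at $j=m-1$, i.e.\ $(u_{m-1}-l_{m-1})+\sum_{i=0}^{m-2}\Phi_i^{\mathrm{Id}}<8d/\gamma$. Substituting the telescoped gap and cancelling the initial $4d/\gamma$ leaves
\begin{equation}
    \sum_{i=0}^{m-2}\left(\frac{4\gamma^2}{\Phi_i^{\mathrm{Id}}(1-4\gamma^2)}+\Phi_i^{\mathrm{Id}}\right)<\frac{4d}{\gamma}.
\end{equation}
The crux is then a term-by-term AM-GM bound: for every $\Phi_i^{\mathrm{Id}}>0$,
\begin{equation}
    \frac{4\gamma^2}{\Phi_i^{\mathrm{Id}}(1-4\gamma^2)}+\Phi_i^{\mathrm{Id}}\ge 2\sqrt{\frac{4\gamma^2}{1-4\gamma^2}}=\frac{4\gamma}{\sqrt{1-4\gamma^2}}\ge 4\gamma.
\end{equation}
Summing this over the $m-1$ terms gives $4\gamma(m-1)<4d/\gamma$, hence $m<d/\gamma^2+1\le 2d/\gamma^2$, where the final step uses $\gamma^2<1\le d$. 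As every inequality in this chain holds pointwise over the randomness, $m\le 2d/\gamma^2$ with probability $1$.

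I do not expect a genuine obstacle here; the argument is a short telescoping-plus-AM-GM computation. The two points that require care are the off-by-one bookkeeping at termination (invoking the guard at $j=m-1$ rather than $j=m$) and the strict positivity of each $\Phi_i^{\mathrm{Id}}$, which is needed both for the AM-GM step and for the inverses in the updates to exist. Positivity follows from the invariant $l_jI\prec A_j\prec u_jI$ maintained throughout the run, which makes $(u_jI-A_j)^{-1}$ and $(A_j-l_jI)^{-1}$ positive definite and hence $\Phi_j^{\mathrm{Id}}=\tr\big((u_jI-A_j)^{-1}(A_j-l_jI)^{-1}\big)>0$. It is worth emphasizing that no concentration argument is invoked: this sure bound is precisely the payoff of modifying the stopping criterion relative to \cite{lee2015constructing}, replacing a Markov-type tail on $m$ with a deterministic cap that guarantees $\mathbb{E}[m]$ is finite and controlled.
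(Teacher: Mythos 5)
Your proof is correct and takes essentially the same route as the paper's: both telescope the gap via $(u_{j+1}-l_{j+1})-(u_j-l_j)=\frac{4\gamma^2}{\Phi_j^{\mathrm{Id}}(1-4\gamma^2)}$, substitute into the stopping criterion to cancel the initial $4d/\gamma$, and apply AM--GM termwise to $\frac{4\gamma^2}{\Phi_i^{\mathrm{Id}}(1-4\gamma^2)}+\Phi_i^{\mathrm{Id}}$ to obtain a deterministic cap on $m$. The only differences are cosmetic: you keep the sharper per-term constant $4\gamma$ where the paper settles for $2\gamma$, and you make explicit the off-by-one bookkeeping at termination and the positivity of $\Phi_i^{\mathrm{Id}}$, both of which the paper leaves implicit.
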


As a consequence of this lemma, \Cref{alg:RBSS} now necessarily samples fewer points than the algorithm of \cite{lee2015constructing}. In spite of this, we guarantee that it satisfies the $\epsilon$-well-balanced sampling condition.


\subsection{\Cref{alg:RBSS} is an $\epsilon$-well-balanced sampling procedure}

We now address the problem of arguing that in spite of sampling fewer points, the algorithm is still an $\epsilon$-well balanced sampling procedure with reasonable probability. We provide a brief sketch which addresses the first condition in \cref{eq:spec-spar}.

The key result here is to show that the stopping criterion of \Cref{alg:RBSS} guarantees that the number of points sampled by the algorithm \textit{also satisfies $m \ge c \nicefrac{d}{\gamma^2}$} with constant probability, for a sufficiently small constant $c$. This implies that in the final iteration, $u_m$ is also $\Omega (\nicefrac{d}{\gamma^2})$ using the observation that $u_j$ increases by at least a constant in each iteration, which we prove in the Appendix.

\begin{restatable}{lemma}{lemmatwo} \label{lemma:um-lb}
For $\gamma < \nicefrac{1}{4}$ and any $0 \le p < 1$, with probability at least $1-p$, $u_m \geq \nicefrac{p^2d}{8\gamma^2}$.
\end{restatable}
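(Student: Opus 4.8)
The plan is to combine a clean deterministic lower bound on $u_m$ in terms of the iteration count $m$ with a probabilistic lower bound on $m$ itself; the squaring of $m$ in the deterministic bound is exactly what produces the $p^2$ in the statement. Write $S_1 = \sum_{j=0}^{m-1} 1/\Phi_j^{\mathrm{Id}}$ and $S_2 = \sum_{j=0}^{m-1}\Phi_j^{\mathrm{Id}}$. First I would unroll the barrier update $u_{j+1} = u_j + \frac{\gamma}{1-2\gamma}\frac{1}{\Phi_j^{\mathrm{Id}}}$ from $u_0 = 2d/\gamma$ to obtain $u_m = \frac{2d}{\gamma} + \frac{\gamma}{1-2\gamma}S_1 \ge \gamma S_1$. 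Next, by Cauchy--Schwarz, $m^2 = \big(\sum_j 1\big)^2 \le S_1 S_2$, so $S_1 \ge m^2/S_2$. Finally, the while-loop guard $(u_j - l_j) + \sum_{i<j}\Phi_i^{\mathrm{Id}} < 8d/\gamma$, together with $u_j - l_j \ge u_0 - l_0 = 4d/\gamma$, caps the running sum and forces $S_2 \le 8d/\gamma$. Chaining these yields the deterministic inequality $u_m \ge \gamma S_1 \ge \gamma m^2/S_2 \ge \gamma^2 m^2/(8d)$.

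Given this, the lemma reduces to showing $m \ge pd/\gamma^2$ with probability at least $1-p$: substituting into $u_m \ge \gamma^2 m^2/(8d)$ immediately gives $u_m \ge p^2 d/(8\gamma^2)$. So the second, and main, step is a lower bound on $m$. To this end I would track the stopping potential $T_j = (u_j - l_j) + \sum_{i<j}\Phi_i^{\mathrm{Id}}$, which starts at $T_0 = 4d/\gamma$ and must reach $8d/\gamma$ before the loop halts, so its total increase is at least $4d/\gamma$. Each iteration increases it by $\Delta_j = \frac{4\gamma^2}{(1-4\gamma^2)\Phi_j^{\mathrm{Id}}} + \Phi_j^{\mathrm{Id}}$, using the telescoping identity $ (u_{j+1}-l_{j+1}) - (u_j - l_j) = \frac{4\gamma^2}{(1-4\gamma^2)\Phi_j^{\mathrm{Id}}}$. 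Since $m \le 2d/\gamma^2$ almost surely by \Cref{lemma:m-ub}, $m$ being small forces the increments $\Delta_j$ to be large, which can only happen when $\Phi_j^{\mathrm{Id}}$ strays far from its typical $\Theta(\gamma)$ scale. A Markov-type inequality applied to a suitable aggregate of the $\Phi_j^{\mathrm{Id}}$ (exploiting the almost-sure cap $m \le 2d/\gamma^2$) then produces the $(1-p)$-probability bound $m \ge pd/\gamma^2$; it is the Markov step that ties the confidence level $1-p$ to the strength $p$ of the conclusion.

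The hard part is controlling $\Phi_j^{\mathrm{Id}}$ across iterations. Using the identity $\Phi_j^{\mathrm{Id}} = (\Phi_j^u + \Phi_j^l)/(u_j - l_j)$ with $\Phi_j^u = \tr\big((u_j I - A_j)^{-1}\big)$ and $\Phi_j^l = \tr\big((A_j - l_j I)^{-1}\big)$, bounding $\Phi_j^{\mathrm{Id}}$ amounts to controlling the evolution of the barrier potentials $\Phi_j^u + \Phi_j^l$ under the random rank-one updates $A_{j+1} = A_j + \frac{\gamma}{\Phi_j^{\mathrm{Id}}}\frac{1}{p_j}U(x_j)U(x_j)^T$. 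As the excerpt flags, these potentials are correlated across iterations and $m$ is a stopping time, so the expectation feeding the Markov bound cannot be read off term by term. Establishing it — via a (super)martingale control of the potentials combined with Wald-type reasoning and the almost-sure bound on $m$ — is where the real difficulty of this lemma lies, whereas the deterministic reduction above is routine.
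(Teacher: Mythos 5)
Your deterministic half is exactly the paper's: unrolling $u_m \ge \gamma \sum_{j<m} 1/\Phi_j^{\mathrm{Id}}$, Cauchy--Schwarz, and the stopping criterion's cap $\sum_{j<m}\Phi_j^{\mathrm{Id}} \le 8d/\gamma$ give $u_m \ge \gamma^2 m^2/(8d)$, reducing the lemma to $\mathrm{Pr}\bigl(m < pd/\gamma^2\bigr) \le p$. But that probability bound is the entire probabilistic content of the lemma, and you leave it as a plan rather than a proof: you never exhibit the ``suitable aggregate'' for your Markov step, nor any bound on its expectation, and you explicitly defer the key difficulty (``(super)martingale control \ldots combined with Wald-type reasoning'') to future work. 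Worse, the machinery you gesture at is not needed and points in the wrong direction. The paper dissolves the stopping-time issue by working over the \emph{fixed} horizon $g = pd/\gamma^2$: on the event $\{m < g\}$, the stopping condition at termination forces $\sum_{j=0}^{g-1}\bigl(\tfrac{\gamma^2}{(1-4\gamma^2)\Phi_j^{\mathrm{Id}}} + \Phi_j^{\mathrm{Id}}\bigr) \ge 4d/\gamma$ (increments are positive, so extending the sum past $m$ up to the deterministic index $g-1$ only enlarges it). Since $g$ is deterministic, plain Markov applies term by term: $\mathbb{E}[\Phi_j^{\mathrm{Id}}] \le \mathbb{E}[\Phi_0^{\mathrm{Id}}] = \gamma$ by \Cref{lemma:Phi-ub}, and---crucially---the $1/\Phi_j^{\mathrm{Id}}$ terms require no expectation control whatsoever, because \Cref{lemma:Phi-lb} gives the \emph{deterministic} bound $\Phi_j^{\mathrm{Id}} \ge \gamma/2$, hence $1/\Phi_j^{\mathrm{Id}} \le 2/\gamma$ almost surely. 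This yields $\mathrm{Pr}(m < g) \le \bigl(g\gamma + \tfrac{2\gamma g}{1-4\gamma^2}\bigr)\big/(4d/\gamma) \le g\gamma^2/d = p$ for $\gamma < 1/4$. So no Wald's equation, no supermartingale optional-stopping argument, and no use of the almost-sure cap $m \le 2d/\gamma^2$ from \Cref{lemma:m-ub} is needed for this lemma; the correlations across iterations that you flag as ``the real difficulty'' are exactly what the fixed-horizon trick sidesteps.

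Two smaller corrections. First, your ``identity'' $\Phi_j^{\mathrm{Id}} = (\Phi_j^u + \Phi_j^l)/(u_j - l_j)$ is false: by the paper's definition, $\Phi_j^{\mathrm{Id}} = \tr\bigl((u_j I - A_j)^{-1}\bigr) + \tr\bigl((A_j - l_j I)^{-1}\bigr)$ is simply the sum $\Phi_j^u + \Phi_j^l$, with no division by the gap (the quantity $4d/(u_j - l_j)$ appears only as a \emph{lower} bound on $\Phi_j^{\mathrm{Id}}$ in \Cref{lemma:Phi-lb}). Second, your heuristic that small $m$ forces $\Phi_j^{\mathrm{Id}}$ to stray from its $\Theta(\gamma)$ scale is the right intuition, but the quantitative version is one-sided: deviations of $\Phi_j^{\mathrm{Id}}$ \emph{below} $\gamma/2$ are impossible deterministically, and deviations \emph{above} are controlled in expectation by \Cref{lemma:Phi-ub}; once you see this asymmetry, the single Markov bound above falls out and the lemma is complete.
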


Moreover, by the stopping criterion of the algorithm, we are guaranteed that the gap, $u_m - l_m$ cannot be too large and is $\le \nicefrac{9d}{\gamma}$. This is a nice feature to have, since the matrix $A^T A$ (in \Cref{def:eps-w-b}) has condition number upper bounded by $\frac{u_m}{l_m}$. For sufficiently small $\gamma$, we have that $u_m - l_m \le \nicefrac{9d}{\gamma}$ and $l_m \approx u_m = \Omega (\nicefrac{d}{\gamma^2})$ (from \Cref{lemma:um-lb}), automatically implying that $\nicefrac{u_m}{l_m} = 1 + O(\gamma)$. Thus the normalized eigenvalues of $A^T A$ lie in the interval $\nicefrac{3}{4}$ and $\nicefrac{5}{4}$ for sufficiently small $\gamma$. This results in the following lemma showing that \Cref{alg:RBSS} satisfies the first condition in \Cref{def:eps-w-b} with constant probability.

\begin{lemma}\label{lem:AstartA}
With probability at least $3/4$,
\begin{equation}
    (1 - O(\gamma)) I \preceq A^T A \preceq (1 + O(\gamma)) I.
\end{equation}
\end{lemma}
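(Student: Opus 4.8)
The plan is to reduce the spectral statement about $A^T A$ to the elementary ratio bound $u_m/l_m = 1 + O(\gamma)$, and then to derive that bound from the two preceding lemmas together with the stopping criterion. First I would connect the matrix $A$ of \Cref{def:eps-w-b} to the matrix $A_m$ accumulated by \Cref{alg:RBSS} at termination. Since $A = \textsf{diag}(\{\sqrt{w_j}\}_{j=1}^m) U$, we have $A^T A = \sum_{j} w_j\, U(x_j) U(x_j)^T$; substituting $w_j = w_j'/\midd$ with $w_j' = \frac{\gamma}{\Phi_j^{\mathrm{Id}}}\frac{1}{p_j}$ shows that $\sum_j w_j' U(x_j) U(x_j)^T$ is exactly the accumulated matrix $A_m$ produced by the update in \Cref{alg:RBSS}. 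Hence $A^T A = A_m/\midd$.

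Next I would invoke the barrier invariant maintained throughout the algorithm, namely $l_j I \preceq A_j \preceq u_j I$, which at termination gives $l_m I \preceq A_m \preceq u_m I$. Dividing by $\midd = (u_m + l_m)/2$ yields
\begin{equation}
\frac{2 l_m}{u_m + l_m}\, I \preceq A^T A \preceq \frac{2 u_m}{u_m + l_m}\, I .
\end{equation}
Writing $r = l_m/u_m \le 1$, the extreme eigenvalues are $\frac{2r}{1+r} \le 1$ and $\frac{2}{1+r} \ge 1$; these are respectively increasing and decreasing in $r$ and both equal $1$ at $r = 1$, so a first-order expansion shows that $r \ge 1 - O(\gamma)$ already forces both to lie in $[1 - O(\gamma), 1 + O(\gamma)]$. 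The lemma thus reduces to proving $r = 1 - O(\gamma)$, equivalently $(u_m - l_m)/u_m = O(\gamma)$.

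To obtain this I would combine two facts. The stopping criterion of \Cref{alg:RBSS}, which caps the gap, guarantees $u_m - l_m \le 9d/\gamma$, while \Cref{lemma:um-lb}, applied with a constant choice such as $p = \tfrac14$, gives $u_m \ge p^2 d/(8\gamma^2) = d/(128\gamma^2)$ with probability at least $3/4$. Dividing, on this event $\frac{u_m - l_m}{u_m} \le \frac{9 d/\gamma}{d/(128\gamma^2)} = O(\gamma)$, so $r \ge 1 - O(\gamma)$ and the spectral bound follows. The invariant and the gap bound hold deterministically, so the whole conclusion holds on the same probability-$3/4$ event coming from \Cref{lemma:um-lb}.

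The main obstacle is not this final algebra but the barrier invariant $l_j I \preceq A_j \preceq u_j I$ that I used silently: it must be shown to hold deterministically across all iterations, which is the crux of the BSS-type analysis. This requires verifying that the sampling distribution $p_x$ in \cref{eq:pxj} together with the threshold increments of $u_j$ and $l_j$ (both scaled by $1/\Phi_j^{\mathrm{Id}}$) are matched so that a single rank-one update $\frac{\gamma}{\Phi_j^{\mathrm{Id}} p_j} U(x_j)U(x_j)^T$ can never push an eigenvalue of $A_j$ past the moving upper barrier $u_{j+1}$ nor below $l_{j+1}$; this is precisely what the potential $\Phi_j^{\mathrm{Id}}$ is engineered to control, and is where the Sherman--Morrison and potential-monotonicity computations of the appendix enter. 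I would also double-check that the constant $9$ in $u_m - l_m \le 9d/\gamma$ is exactly what the stopping criterion produces, since it feeds directly into the final $O(\gamma)$ estimate.
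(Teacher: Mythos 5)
Your proposal is correct and follows essentially the same route as the paper's proof: identify $A^T A = A_m/\midd$, use the deterministic barrier invariant $l_m I \preceq A_m \preceq u_m I$ (the paper's \Cref{lemma:bounded}), bound the gap $u_m - l_m \le 9d/\gamma$ via the stopping criterion together with $\Phi_{m-1}^{\mathrm{Id}} \ge \gamma/2$ (\Cref{lemma:umlm-ub}), and lower-bound $u_m = \Omega(d/\gamma^2)$ with probability $3/4$ via \Cref{lemma:um-lb}, so that $u_m/l_m = 1 + O(\gamma)$. The paper merely routes the final algebra through an explicit condition-number lemma with concrete constants ($u_m/l_m \le 1 + 3456\gamma$, yielding $1 \pm 1728\gamma$), whereas you argue the same ratio bound directly, and you correctly flag the two ingredients (the invariant and the constant $9$) that the paper indeed verifies separately in the appendix.
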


Following a similar proof as in \cite[]{chen2019active} we also show that the second condition in \cref{eq:eps-wbs-2} for an $\epsilon$-well-balanced sampling procedure is satisfied. Intuitively, since the number of iterations of the algorithm, $m$, is upper bounded now, in principle it is ``easier'' for the algorithm to satisfy $\sum_{i=1}^m \alpha_i = O(1)$ and hence the second condition.

\subsection{Bounding the number of additional labels queries of the algorithm}
Finally, we upper bound the number of times the algorithm queries the labels of points in $X_1$. Define the notation $\lblq$ as the number of additional label queries made by the algorithm. We first introduce some notation that will be useful for the exposition.

\begin{definition}
For a PSD matrix $M$, define the potential function $\Phi_j^{M} = \textsf{Tr} ( M (u_j I - A_j)^{-1} + M (A_j - l_j I)^{-1}))$. 
\end{definition}
We first compute the probability of sampling a point among $X_1$ in each iteration and prove the following lemma.

\begin{lemma} \label{lem:query_complexity}
In each iteration $j = 0,\cdots,m-1$, $\sum_{x \in X_1} p_x^{(j)} = \nicefrac{\Phi_j^D}{\Phi_j^{\text{Id}}}$, where $D = \sum_{x \in X_1} U(x) (U(x))^T$.
\end{lemma}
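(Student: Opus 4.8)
The plan is to reduce the claim to a single bookkeeping exercise with the cyclic invariance of the trace. First I would note that in the sampling rule \cref{eq:pxj} the denominator $\Phi_j^{\mathrm{Id}}$ is a fixed scalar that does not depend on the sampled point, so it factors out of the sum over $X_1$:
\begin{equation}
    \sum_{x \in X_1} p_x^{(j)} = \frac{1}{\Phi_j^{\mathrm{Id}}} \sum_{x \in X_1} U(x)^T \left( (u_j I - A_j)^{-1} + (A_j - l_j I)^{-1} \right) U(x).
\end{equation}
Abbreviating $M_j = (u_j I - A_j)^{-1} + (A_j - l_j I)^{-1}$, it then suffices to show that the numerator equals the potential $\Phi_j^D$.

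Next I would turn each quadratic form into a trace. Since $U(x)^T M_j U(x)$ is a scalar it equals its own trace, and cyclic invariance gives $U(x)^T M_j U(x) = \tr\!\left( M_j\, U(x) U(x)^T \right)$. Summing over $x \in X_1$ and using linearity of the trace to move the sum inside, I obtain
\begin{equation}
    \sum_{x \in X_1} U(x)^T M_j U(x) = \tr\!\left( M_j \sum_{x \in X_1} U(x) U(x)^T \right) = \tr( M_j D ),
\end{equation}
where $D = \sum_{x \in X_1} U(x) U(x)^T$ is exactly the matrix named in the statement. Expanding $M_j$ and applying cyclic invariance once more to each of the two resulting terms yields $\tr(M_j D) = \tr\!\left( D (u_j I - A_j)^{-1} + D (A_j - l_j I)^{-1} \right)$, which is precisely the definition of $\Phi_j^D$. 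Dividing through by $\Phi_j^{\mathrm{Id}}$ establishes the lemma.

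I do not anticipate a genuine obstacle here: the argument is a routine trace manipulation, and the only care required is in matching the two summands produced by expanding $M_j$ against the two-term definition of the general potential $\Phi_j^M$ (one barrier term for $u_j I - A_j$ and one for $A_j - l_j I$). As a consistency check worth recording, the identical computation carried out over the full dataset $X$ rather than $X_1$ uses $\sum_{x \in X} U(x) U(x)^T = U^T U = I_d$ (orthonormality of the columns of $U$), so the numerator becomes $\tr(M_j) = \Phi_j^{\mathrm{Id}}$ and hence $\sum_{x \in X} p_x^{(j)} = 1$; this confirms that \cref{eq:pxj} is a genuine probability distribution and pins down $\Phi_j^{\mathrm{Id}}$ as $\Phi_j^{I}$, the $M = I$ instance of the general potential.
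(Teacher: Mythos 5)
Your proof is correct and matches the paper's own argument essentially line for line: both factor out the scalar $\Phi_j^{\mathrm{Id}}$, rewrite each quadratic form $U(x)^T M_j U(x)$ as $\tr\left( M_j\, U(x) U(x)^T \right)$ by cyclic invariance, and sum over $x \in X_1$ to obtain $\tr(M_j D) = \Phi_j^D$. Your closing normalization check (summing over all of $X$ gives $\sum_{x \in X} p_x^{(j)} = 1$ via $U^T U = I_d$) is a sensible sanity observation the paper does not state explicitly, but the core route is identical.
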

This implies that the expected number of label queries is upper bounded by:
\begin{equation} \label{eq:lqub}
    \mathbb{E} [\lblq] \le \mathbb{E} \left[ \sum\nolimits_{j=0}^{m-1} \nicefrac{\Phi_j^D}{\Phi_j^{\mathrm{Id}}} \right].
\end{equation}
Next we show that with probability $1$, $\Phi_j^{\text{Id}} \ge \gamma/2$. Plugging into \cref{eq:lqub}, the number of label queries can then be bounded by: $\nicefrac{2}{\gamma} \mathbb{E} [ \sum_{j=0}^{m-1} \Phi_j^D ]$. Observe that the random variable $m-1$ is a stopping time for the martingale sequence $\Phi_0^D,\Phi_1^D,\dots$ Therefore, by Wald's equation we upper bound $\lblq$ by $\nicefrac{2}{\gamma} \mathbb{E} [ \sum_{j=0}^{m-1} \mathbb{E} [\Phi_j^D] ]$. This is further upper bounded using the following lemma:

\begin{restatable}[Bounding the potential]{lemma}{potbound} \label{lemma:Phi-ub}
For any fixed PSD matrix $M \succeq 0$, $\mathbb{E} [\Phi_{j+1}^M] \le \mathbb{E} [\Phi_{j}^M ]$. 
\end{restatable}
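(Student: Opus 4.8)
The plan is to prove the stronger one-step bound $\mathbb{E}[\Phi_{j+1}^M \mid \mathcal{F}_j] \le \Phi_j^M$, where $\mathcal{F}_j$ conditions on all randomness through iteration $j$ (so $A_j, u_j, l_j$ are fixed and only the sampled point $x_j \sim p$ is random); the stated inequality then follows by taking total expectations. Writing $\Phi_j^M = \mathsf{Tr}(M(u_j I - A_j)^{-1}) + \mathsf{Tr}(M(A_j - l_j I)^{-1})$, I would treat the upper- and lower-barrier terms separately, since the updates $A_{j+1} = A_j + c_j\, U(x_j)U(x_j)^T$ with $c_j = \gamma/(\Phi_j^{\mathrm{Id}} p_j)$, $u_{j+1} = u_j + \delta_u$, and $l_{j+1} = l_j + \delta_l$ each decompose into a threshold shift composed with a rank-one modification.

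For the upper barrier, write $u_{j+1}I - A_{j+1} = B_\delta - c_j U(x_j)U(x_j)^T$ with $B_\delta = u_j I - A_j + \delta_u I$, apply the Sherman--Morrison formula to the rank-one subtraction, and take the expectation over $x_j \sim p$. Two facts collapse the expectation: first, $p_x c_x = \gamma/\Phi_j^{\mathrm{Id}}$ is \emph{independent of $x$}, so this constant factors out; second, $\sum_{x \in X} U(x)U(x)^T = U^T U = I$, which turns the expected rank-one contribution into $\frac{\gamma}{(1-2\gamma)\Phi_j^{\mathrm{Id}}}\,\mathsf{Tr}(M B_\delta^{-2})$ after bounding the Sherman--Morrison denominator. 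In parallel, the resolvent identity applied to the shift $+\delta_u I$ gives $\mathsf{Tr}(M B_\delta^{-1}) \le \mathsf{Tr}(M(u_j I - A_j)^{-1}) - \delta_u\,\mathsf{Tr}(M B_\delta^{-2})$, using the commuting-matrix ordering $(u_j I - A_j)^{-1} \succeq B_\delta^{-1}$ to pass from $\mathsf{Tr}(M B_\delta^{-1}(u_j I - A_j)^{-1})$ to $\mathsf{Tr}(M B_\delta^{-2})$. Since $\delta_u = \frac{\gamma}{(1-2\gamma)\Phi_j^{\mathrm{Id}}}$, the two $\mathsf{Tr}(M B_\delta^{-2})$ terms cancel exactly, yielding $\mathbb{E}[\mathsf{Tr}(M(u_{j+1}I - A_{j+1})^{-1}) \mid \mathcal{F}_j] \le \mathsf{Tr}(M(u_j I - A_j)^{-1})$.

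The lower barrier is handled symmetrically: I would write $A_{j+1} - l_{j+1}I = C_\delta + c_j U(x_j)U(x_j)^T$ with $C_\delta = A_j - l_j I - \delta_l I$, apply Sherman--Morrison to the rank-one \emph{addition} (which now decreases the potential), and match the resulting subtracted term $\frac{\gamma}{(1+2\gamma)\Phi_j^{\mathrm{Id}}}\,\mathsf{Tr}(M C_\delta^{-2})$ against the increase $+\delta_l\,\mathsf{Tr}(M C_\delta^{-2})$ produced by the downward shift $-\delta_l I$. The choice $\delta_l = \frac{\gamma}{(1+2\gamma)\Phi_j^{\mathrm{Id}}}$ again forces exact cancellation, giving $\mathbb{E}[\mathsf{Tr}(M(A_{j+1} - l_{j+1}I)^{-1}) \mid \mathcal{F}_j] \le \mathsf{Tr}(M(A_j - l_j I)^{-1})$. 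Summing the two barrier bounds yields $\mathbb{E}[\Phi_{j+1}^M \mid \mathcal{F}_j] \le \Phi_j^M$.

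The main obstacle is controlling the Sherman--Morrison denominators, which is genuinely harder for the lower barrier. For the upper barrier the denominator is $1 - c_x U(x)^T B_\delta^{-1} U(x)$; since $B_\delta^{-1} \preceq (u_j I - A_j)^{-1}$, the definition of $p_x$ gives $c_x U(x)^T B_\delta^{-1} U(x) \le \gamma$, so the denominator is $\ge 1-\gamma \ge 1-2\gamma$, exactly matching $\delta_u$. For the lower barrier the shift moves the wrong way, $C_\delta^{-1} \succeq (A_j - l_j I)^{-1}$, so the resolvent at the sampled point can grow; here I would establish $C_\delta^{-1} \preceq 2(A_j - l_j I)^{-1}$ by verifying $\delta_l \le \tfrac12 \lambda_{\min}(A_j - l_j I)$. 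This follows from $\delta_l = \frac{\gamma}{(1+2\gamma)\Phi_j^{\mathrm{Id}}}$ together with the eigenvalue bound $\Phi_j^{\mathrm{Id}} \ge 1/[(u_j - \mu)(\mu - l_j)]$ for each eigenvalue $\mu$ of $A_j$ (and the invariant keeping the spectrum strictly inside $(l_j, u_j)$), whence $c_x U(x)^T C_\delta^{-1} U(x) \le 2\gamma$ and the denominator is $\le 1+2\gamma$, precisely what the cancellation requires.
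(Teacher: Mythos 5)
Your proposal is correct and takes essentially the same route as the paper's proof: Sherman--Morrison applied separately to each barrier, the observation that $p_x c_x = \gamma/\Phi_j^{\mathrm{Id}}$ is constant so the expected rank-one update is $\tfrac{\gamma}{\Phi_j^{\mathrm{Id}}} I$, and the denominator bounds $\le \gamma$ (upper barrier) and $\le 2\gamma$ (lower barrier) --- your verification that $\delta_l \le \tfrac{1}{2}\lambda_{\min}(A_j - l_j I)$ is precisely the content of the paper's \Cref{lemma:wjwjT-l}. The only cosmetic difference is the final step, where the paper absorbs the threshold-shift terms via convexity of the function $f(t)$ while you make the same first-order bound explicit through the resolvent identity and commuting-matrix monotonicity; these yield the identical cancellation.
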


The above lemma is proved in \cite{lee2015constructing} for the case of $M=I$. We extend it to the general case by following the same approach. As a result we have the following.
\begin{equation} \label{eq:final-bound}
    \mathbb{E} [\lblq] \le \frac{2}{\gamma} \mathbb{E} \left[ m \mathbb{E} [\Phi_0^D] \right] = \mathbb{E} \left[ m \right] \frac{\gamma \tr (D)}{d}
\end{equation}
The last equation is a short calculation noting the fact that $u_0 = - l_0 = \frac{2d}{\gamma}$ and $A_0 = 0$ in $\Phi_0^D$. Finally, recall that we define $D$ as $\sum_{x \in X_1} U(x) (U(x))^T$. The following lemma relates $D$ to $\rr_X$.


\begin{restatable}[Relating $D$ to $\rr_X$]{lemma}{Drrx}
\label{lemma:trD-vs-rr}
With $D = \sum_{x \in X_1} U(x) (U(x))^T$, $\textsf{Tr} (D) = \rr_X$.
\end{restatable}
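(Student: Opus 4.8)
The plan is to reduce everything to the thin singular value decomposition $X = U\Sigma V^{T}$ that already underlies the definition of $U(x)$, and then to match the two traces by the cyclic property. I would write $U = \begin{bmatrix} U_1 \\ U_2 \end{bmatrix}$, splitting the rows of $U$ according to the blocks $X_1 \in \mathbb{R}^{n_1 \times d}$ and $X_2 \in \mathbb{R}^{n_2 \times d}$ of $X$, so that $U_1$ collects exactly the rows $U(x)$ for $x \in X_1$. With this notation the first observation is purely definitional: since $D = \sum_{x \in X_1} U(x) U(x)^{T}$ is a sum of outer products of the rows of $U_1$, we have $D = U_1^{T} U_1$, and hence $\tr(D) = \tr(U_1^{T} U_1) = \sum_{x \in X_1} \|U(x)\|_2^2$, the sum of leverage scores restricted to $X_1$ (this also reconnects with the discussion around \cref{eq:LS-pointssampled}).

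Second, I would rewrite $\rr_X$ in the same coordinates. Using $X_1 = U_1 \Sigma V^{T}$ (the first $n_1$ rows of the SVD) gives $X_1^{T} X_1 = V \Sigma U_1^{T} U_1 \Sigma V^{T}$, while $U^{T} U = I_d$ gives $X^{T} X = X_1^{T} X_1 + X_2^{T} X_2 = V \Sigma^{2} V^{T}$, whence $(X^{T} X)^{-1} = V \Sigma^{-2} V^{T}$. Substituting both into $\rr_X = \tr\big( (X^{T}X)^{-1} X_1^{T} X_1 \big)$ and using $V^{T} V = I_d$ yields
\begin{equation}
    \rr_X = \tr\big( V \Sigma^{-2} V^{T} \, V \Sigma U_1^{T} U_1 \Sigma V^{T} \big) = \tr\big( V \Sigma^{-1} U_1^{T} U_1 \Sigma V^{T} \big).
\end{equation}
Now I would apply the cyclic property of the trace to move $\Sigma V^{T}$ to the front, and use $V^{T} V = I_d$ together with $\Sigma^{-1} \Sigma = I_d$ to collapse the singular values:
\begin{equation}
    \rr_X = \tr\big( \Sigma V^{T} V \Sigma^{-1} U_1^{T} U_1 \big) = \tr\big( U_1^{T} U_1 \big) = \tr(D),
\end{equation}
which is exactly the claim.

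The only genuine subtlety, and the step I would treat most carefully, is the rank-deficient case $\rank(X) < d$, in which $X^{T} X$ is singular and the inverse in the definition of $\rr_X$ must be interpreted as a pseudoinverse (equivalently, one restricts to the column space of $V$). There I would take $U\Sigma V^{T}$ to be the compact SVD, so that $\Sigma$ is invertible on the $\rank(X)$-dimensional range and $(X^{T}X)^{\dagger} = V \Sigma^{-2} V^{T}$ on that range, and observe that every cancellation above—$V^{T}V = I$ and $\Sigma^{-1}\Sigma = I$ restricted to the range—goes through verbatim, since the rows of $U_1$ lie in the same subspace. Beyond this bookkeeping everything is a routine SVD manipulation, so I expect no real obstacle.
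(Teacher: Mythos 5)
Your proof is correct and follows essentially the same route as the paper's: both express $D$ as the Gram matrix of the rows of $U$ indexed by $X_1$ (you via the block $U_1$, the paper via a selector matrix $S$), rewrite $X_1^T X_1$ and $(X^T X)^{-1}$ through the SVD $X = U\Sigma V^T$, and collapse via cyclicity of the trace. Your explicit treatment of the rank-deficient case with the compact SVD and pseudoinverse is a small bonus the paper omits (its step $(V\Sigma)^{-1}$ tacitly assumes $X$ has full column rank), but it does not change the argument.
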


Finally, putting together \Cref{lemma:m-ub,lemma:trD-vs-rr} with \cref{eq:final-bound}, we get the bound:
\begin{equation}
    \mathbb{E} [\lblq] \le \frac{2d}{\gamma^2} \frac{\gamma}{d} \rr_X = \frac{2\rr_X}{\gamma}.
\end{equation}
This completes the proof sketch for \Cref{thm:RBSS}.

\begin{remark}
In the case of ridge linear regression, \Cref{alg:main} can be used to construct a weak-coreset for $\beta^*$. The advantage of the algorithm over \cite{pmlr-v108-kacham20a} is that the weak coreset can be constructed without having oracle access to the set of all labels. In particular, the algorithm returns a coreset which uses $O(d \sd/\epsilon)$ bits of memory: the weight matrix $W_S$ requires $\frac{d}{\epsilon}$ bits of memory to store and the subset of $\sd/\epsilon$ points among $X_1$ requires $\frac{d \sd}{\epsilon}$ bits of memory to store. Solving $\argmin_{\beta \in \mathbb{R}^d} \| W_S X \beta - W_S Y \|_2^2$ returns a $(1 + \epsilon)$ approximation to the optimal loss. The memory requirement of our algorithm is of the same order as the algorithm proposed in \cite{pmlr-v108-kacham20a}.
\end{remark}
\section{Lower bound for active ridge regression} \label{sec:lower-bound}
\begin{theorem}\label{thm:lowerb}
For any $\epsilon\leq\nicefrac{1}{100}$, $d$ and any $\lambda \in [1,50]$, there exists a ridge regression instance $(X,Y)$ such that any active learner which outputs $\widehat{\beta}$ satisfying $\|X \widehat{\beta} - Y \|_2^2 + \lambda\| \widehat{\beta} \|_2^2 \leq (1+0.001\epsilon) \argmin_{\beta \in \mathbb{R}^d} \|X \beta - Y\|_2^2 + \lambda\| \beta \|_2^2$ with probability $\ge 3/4$, must query $\Omega (\nicefrac{\sd (X)}{\epsilon})$ labels.
\end{theorem}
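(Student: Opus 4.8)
The plan is to construct a single distribution over hard instances and finish with Yao's minimax principle. I would take the data matrix $X$ to consist of $d$ mutually orthogonal groups: group $k$ contains $n = \Theta(1/\epsilon)$ identical points, each equal to $\sqrt{\lambda/n}\,e_k$. Then $X^TX = \lambda I$, so every singular value of $X$ equals $\sqrt\lambda$ and $\sd(X) = \sum_{i=1}^d \tfrac{1}{1+\lambda/\lambda} = d/2 = \Theta(d)$, and the query lower bound $\Omega(\sd(X)/\epsilon)$ becomes $\Omega(d/\epsilon) = \Omega(dn)$. The labels are drawn i.i.d. Rademacher, $y_i \in \{\pm 1\}$. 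Because the groups live in orthogonal coordinates, both the ridge objective and its optimum decouple across the $d$ blocks, and since $y_i^2 = 1$ we get the deterministic upper bound $\opt = \sum_k \opt_k \le \sum_i y_i^2 = dn$. This reduces everything to $d$ independent scalar ridge problems, each on $n$ copies of the scalar $a = \sqrt{\lambda/n}$, where in block $k$ the optimum is $\beta_k^* = \tfrac{a T_k}{2\lambda}$ with $T_k = \sum_{i\in\text{block }k} y_i$, and the excess loss of any output $\widehat\beta_k$ is exactly $2\lambda(\widehat\beta_k - \beta_k^*)^2$.

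The core of the argument is a conditional-variance lower bound that applies to \emph{every} estimator. A learner that has queried $m_k$ labels in block $k$ knows $T_k$ only up to the sum $T_k^{\mathrm{un}}$ of the $n-m_k$ unqueried labels, which is independent of everything it has seen and has conditional variance $n-m_k$. By the bias--variance decomposition, conditioning on the realized query set,
\begin{equation}
\mathbb{E}\!\left[\,2\lambda(\widehat\beta_k-\beta_k^*)^2 \mid \text{queries}\,\right] \;\ge\; 2\lambda\,\mathrm{Var}(\beta_k^*\mid \text{queries}) \;=\; \frac{a^2(n-m_k)}{2\lambda} \;=\; \frac{n-m_k}{2n}.
\end{equation}
Summing over blocks, and using that the total number of queries $M = \sum_k m_k$ is at most the budget, the conditional expected excess loss is at least $\tfrac{1}{2n}(dn - M) \ge d/4$ whenever $M \le dn/2$. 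Since $n=\Theta(1/\epsilon)$ makes the failure threshold $0.001\epsilon\cdot\opt \le 0.001\epsilon\,dn$ only a tiny fraction of $d/4$, a budget below $dn/2 = \Omega(\sd(X)/\epsilon)$ forces the expected excess to exceed the threshold by a large constant factor. Note this already shows that adaptive allocation across blocks is useless, since the bound depends on $\{m_k\}$ only through their sum.

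To promote this in-expectation statement to a constant lower bound on the failure probability I would use a second-moment (Paley--Zygmund) argument. Writing the per-block excess as $Z_k = \tfrac{a^2}{2\lambda}(c_k - T_k^{\mathrm{un}})^2$, where $c_k$ is the query-measurable shift implicitly chosen by the learner, the elementary bound $\mathbb{E}[(c-W)^4] \le 3(\mathbb{E}[(c-W)^2])^2$ for mean-zero $W$ with $\mathbb{E}[W^4]\le 3(\mathbb{E}W^2)^2$ shows that each $Z_k$ has $\mathbb{E}[Z_k^2] \le 3(\mathbb{E}Z_k)^2$, \emph{uniformly in $c_k$}. Since the blocks are conditionally independent given the query set, this relative-variance bound survives summation and yields $\mathbb{E}[\mathrm{excess}^2] \le 4(\mathbb{E}\,\mathrm{excess})^2$, so Paley--Zygmund lower bounds $\Pr[\mathrm{excess} > 0.001\epsilon\,dn]$ by a constant, pushed past $1/4$ by taking the budget below, say, $dn/4$ and $n$ a sufficiently small multiple of $1/\epsilon$. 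As $\opt \le dn$ deterministically, this event implies $\|X\widehat\beta-Y\|_2^2 + \lambda\|\widehat\beta\|_2^2 > (1+0.001\epsilon)\opt$, i.e.\ failure. Finally, Yao's minimax principle converts this average-case hardness for deterministic learners into the theorem: any randomized, adaptive learner making $o(\sd(X)/\epsilon)$ queries fails with probability $>1/4$ on some fixed instance in the support, contradicting the $3/4$ success requirement.

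The main obstacle is the promotion from expectation to constant failure probability \emph{uniformly in $d$} --- in particular the $d=1$ regime, where no averaging over blocks is available and one must rely purely on anti-concentration of a single Rademacher sum $T_k^{\mathrm{un}}$; the relative fourth-moment bound is exactly what makes the Paley--Zygmund step go through there, and ensuring it holds uniformly over the learner's arbitrary (possibly adversarial) shift $c_k$ is the delicate point. A secondary point requiring care is confirming that the conditional-variance lower bound genuinely applies to every estimator, adaptive and randomized alike, which is handled cleanly by conditioning on the realized query set before invoking the bias--variance identity.
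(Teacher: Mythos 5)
Your proposal is correct in substance, but it takes a genuinely different route from the paper. The paper plants a random signal $\widetilde{\beta}$ with coordinates $\pm(1+\lambda)$, observed through Gaussian noise of variance $\nicefrac{(1+\lambda)}{\epsilon}$, on an instance normalized so that $X^TX = I$ and hence $\sd(X) = \nicefrac{d}{(1+\lambda)}$; it then builds an exponentially large packing $\mathcal{M}$ of the sign hypercube and closes with a Fano-style argument in which each label query reveals at most $\tfrac{1}{2}\log\left(1+\epsilon(1+\lambda)\right) \le \tfrac{1}{2}\epsilon(1+\lambda)$ bits, giving $m \gtrsim \nicefrac{d}{\epsilon(1+\lambda)} = \nicefrac{\sd(X)}{\epsilon}$. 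You instead scale the points so that $X^TX = \lambda I$, making $\sd(X) = d/2$, use pure Rademacher labels with no planted signal, and derive the bound from a conditional bias--variance inequality plus Paley--Zygmund; this avoids packing and mutual information entirely, handles adaptivity and randomization cleanly by conditioning on the transcript, and is more elementary and self-contained. The trade-off is that your construction engineers away the $\lambda$-dependence: in your instance $\sd(X) = \Theta(d)$ for every $\lambda$, so while the literal existential statement is satisfied (for $\lambda \in [1,50]$ both constructions have $\sd = \Theta(d)$ in any case), yours does not exhibit the $\nicefrac{1}{(1+\lambda)}$ scaling of the query complexity that the paper's instance displays. Two small repairs are needed but both are immediate: first, the inequality $\mathbb{E}[(c-W)^4] \le 3\left(\mathbb{E}[(c-W)^2]\right)^2$ requires $\mathbb{E}[W^3] = 0$ in addition to kurtosis at most $3$ (true here, since a Rademacher sum is symmetric); second, your own chain actually yields the conditional bound $\mathbb{E}[S^2] \le 3(\mathbb{E}[S])^2$, since $\mathrm{Var}(Z_k) \le 2(\mathbb{E}[Z_k])^2$ summed under conditional independence, and this matters --- with the constant $4$ you state, Paley--Zygmund gives only $(1-\theta)^2/4 < 1/4$, which cannot be pushed past $1/4$ by shrinking the budget or $n$, whereas with the true constant $3$ you get failure probability at least $(1-\theta)^2/3 > 1/4$ for small $\theta$, which contradicts the required $3/4$ success probability and closes the proof.
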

The lower bound instance we consider is an adaptation of the construction from \cite{chen2019active} where the authors proved an $O(d/\epsilon)$ sample complexity lower bound for active linear regression ($\lambda=0$ setting). For a large value of $n$, the lower bound is composed of $n$ copies of each standard basis vector $ \frac{1}{\sqrt{n}} e_i$ for each $i = 1,\cdots,d$. The label $Y_x$ of each point $x \in X$ is generated by sampling from a normal distribution with mean $\langle x , \widetilde{\beta} \rangle$ and variance $\nicefrac{(1+\lambda)}{\epsilon}$. Here $\widetilde{\beta}$ itself is a random vector with each coordinate independently chosen as $1 + \lambda$ or $-(1 + \lambda)$.
A simple calculation shows that as $n \to \infty$, the optimal solution to ridge regression almost surely is $\textsf{sgn} (\widetilde{\beta})$ ($\textsf{sgn} (\cdot)$ is applied co-ordinate wise).

In the above instance, the linear regression problem on the overall dataset decomposes into $d$ one-dimensional linear regression problems along each of the standard basis vectors. Moreover, along each dimension, it suffices for the learner to estimate the sign of the co-ordinate of $\widetilde{\beta}$ given access to some number of training examples from a normal distribution with variance $\approx 1/\epsilon$. Any learner requires $O(1/\epsilon)$ labels along each dimension to be able to decide the sign of $\widetilde{\beta}$ with constant probability along that dimension and failing to do so, a learner incurs a constant additive error to \textsf{OPT}. The dependence on the statistical dimension stems from the fact that with the presence of regularization, the value of \textsf{OPT} is larger by a factor of $\approx 1 + \lambda$. Therefore it suffices to sample fewer points (by a factor of $1+\lambda$) to guarantee the same $1+\epsilon$ multiplicative approximation to the loss.

\section{Biases in labels} \label{sec:bias}
\vspace{-0.9em}

In this section, we compare our algorithm with practical active learning algorithms extended to the semi-supervised setting (with a prior labelled dataset). As we discuss below, such algorithms may suffer from slow convergence in the presence of bias in the distribution of labelled points. Bias in labelled datasets is a well documented issue in practice \cite{Chawla_2005,elkan2008,zhu2005semi,SUN2020105306, potamias2012warmstart,geva2019modeling, gururangan2018annotation, shah-etal-2020-predictive}. Previous approaches to tackle this problem propose to ``de-bias'' the dataset or counteract the effect of bias by making distributional assumptions on the biased and unbiased data \cite{Zhang_2020,Chawla_2005,SUN2020105306,elkan2008}. 
In practice, the popular uncertainty sampling algorithm \cite{https://doi.org/10.1002/widm.1132} and the active perceptron algorithm \cite{DASGUPTA20111767} among others are built on the following algorithmic framework: the algorithm maintains a set $D$ of labelled points and a regression / classification function $f$ which are updated over time. In each iteration, the algorithm performs two steps: (I) learn a function $f$ to predict well on the current set of labelled points $D$, and (II) Based on the learned function $f$, carefully choose points in the dataset, query their labels and add them to $D$. Concretely, the uncertainty sampling algorithm proposes to query the labels of points near the decision boundary and adds them to the labelled dataset $D$ in step (II).

A natural extension of this algorithm to the semi-supervised active learning setting is to initialize the dataset $D$ as the set of a-priori labelled points $X_1$. This approach can suffer from poor generalization if the labelled dataset is biased. Indeed, consider the classification example in Figure \ref{fig:1} where the labelled dataset is biased in that the optimal decision boundary on $X_1$ and that on the overall dataset $X_1 \cup X_2$ are not well aligned. To mitigate the effect of this bias, the learner is forced to query the labels of many points in the shaded region in Figure \ref{fig:2}. 
Even worse, the natural uncertainty sampling algorithm which queries labels of points near the decision boundary fails to converge to the optimal classifier because points in the vicinity of the initial decision boundary do not provide any information as to how to improve the classifier. While we discuss these issues in the context of classification for ease of visualization, such examples can be easily seen to exist in the case of regression as well. Here, our algorithm samples provably samples far fewer points, even in the worst case.



\begin{figure*}[t!]
    \centering
    \begin{subfigure}[t]{0.3\textwidth}
        \centering
        \includegraphics[height=3.5cm]{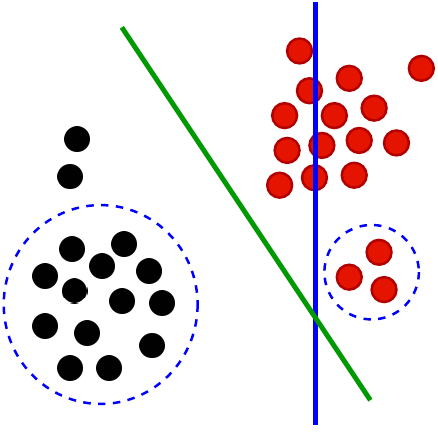}
        \caption{The circled regions denote the set of labelled points which are biased. Blue line is the optimal decision boundary on the labelled dataset, green line is on the entire dataset}
        \label{fig:1}
    \end{subfigure}%
    \quad~ 
    \begin{subfigure}[t]{0.3\textwidth}
        \centering
        \includegraphics[height=3.5cm]{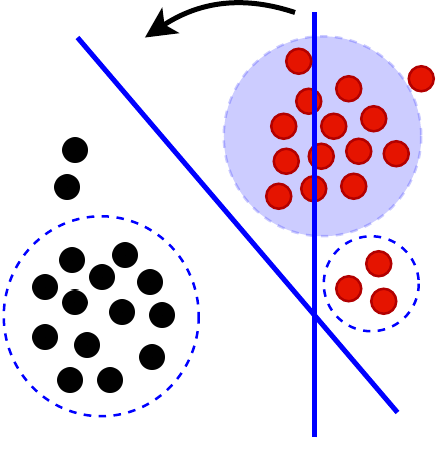}
        \caption{Sampling many points in the shaded blue region eventually pushes the decision boundary towards the optimal one}
        \label{fig:2}
    \end{subfigure}
    \quad~
    \begin{subfigure}[t]{0.3\textwidth}
        \centering
        \includegraphics[height=3.5cm]{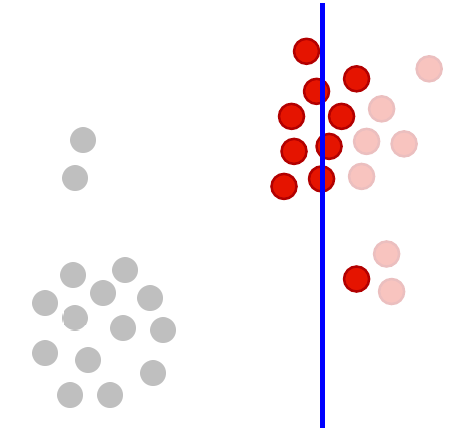}
        \caption{Sampling points near the original decision boundary fails catastrophically - points on either side of the decision boundary have the same labels and do not push it in either direction}
        \label{fig:3}
    \end{subfigure}
\end{figure*}
\vspace{-1em}
\section{Conclusion} \label{sec:conclusion}
\vspace{-0.7em}
In this paper, we introduce the semi-supervised active learning problem and prove a query complexity upper bound of $\nicefrac{\rr_X}{\epsilon}$ in the case of linear regression. In the special case of active ridge regression, this implies a sample complexity upper bound of $O(\nicefrac{\sd}{\epsilon})$ labels which we prove is optimal. The problem also generalizes active kernel ridge regression, where we show a query complexity upper bound of $O(\nicefrac{d_\lambda}{\epsilon})$ improving the results of \cite{alaoui2015fast}. We leave it to future work to prove instance dependent guarantees for semi-supervised active learning under other loss functions such as hinge and log-loss.


{
\small
\bibliographystyle{plainnat}
\bibliography{refs}

\begin{thebibliography}{42}
\providecommand{\natexlab}[1]{#1}
\providecommand{\url}[1]{\texttt{#1}}
\expandafter\ifx\csname urlstyle\endcsname\relax
  \providecommand{\doi}[1]{doi: #1}\else
  \providecommand{\doi}{doi: \begingroup \urlstyle{rm}\Url}\fi

\bibitem[Alaoui and Mahoney(2015)]{alaoui2015fast}
Ahmed~El Alaoui and Michael~W. Mahoney.
\newblock Fast randomized kernel methods with statistical guarantees, 2015.

\bibitem[Andoni et~al.(2020)Andoni, Burns, Li, Mahabadi, and
  Woodruff]{andoni2020streaming}
Alexandr Andoni, Collin Burns, Yi~Li, Sepideh Mahabadi, and David~P. Woodruff.
\newblock Streaming complexity of svms, 2020.

\bibitem[Avron et~al.(2017)Avron, Clarkson, and Woodruff]{avron2017sharper}
Haim Avron, Kenneth~L. Clarkson, and David~P. Woodruff.
\newblock Sharper bounds for regularized data fitting, 2017.

\bibitem[Batson et~al.(2009)Batson, Spielman, and
  Srivastava]{batson2009twiceramanujan}
Joshua Batson, Daniel~A. Spielman, and Nikhil Srivastava.
\newblock Twice-ramanujan sparsifiers, 2009.

\bibitem[Beygelzimer et~al.(2009)Beygelzimer, Dasgupta, and
  Langford]{beygelzimer2009importance}
Alina Beygelzimer, Sanjoy Dasgupta, and John Langford.
\newblock Importance weighted active learning, 2009.

\bibitem[Boutsidis et~al.(2013)Boutsidis, Drineas, and
  Magdon-Ismail]{Boutsidis_2013}
Christos Boutsidis, Petros Drineas, and Malik Magdon-Ismail.
\newblock Near-optimal coresets for least-squares regression.
\newblock \emph{IEEE Transactions on Information Theory}, 59\penalty0
  (10):\penalty0 6880–6892, Oct 2013.
\newblock ISSN 1557-9654.
\newblock \doi{10.1109/tit.2013.2272457}.
\newblock URL \url{http://dx.doi.org/10.1109/TIT.2013.2272457}.

\bibitem[Chawla and Karakoulas(2005)]{Chawla_2005}
N.~V. Chawla and G.~Karakoulas.
\newblock Learning from labeled and unlabeled data: An empirical study across
  techniques and domains.
\newblock \emph{Journal of Artificial Intelligence Research}, 23:\penalty0
  331–366, Mar 2005.
\newblock ISSN 1076-9757.
\newblock \doi{10.1613/jair.1509}.
\newblock URL \url{http://dx.doi.org/10.1613/jair.1509}.

\bibitem[Chen and Price(2019)]{chen2019active}
Xue Chen and Eric Price.
\newblock Active regression via linear-sample sparsification, 2019.

\bibitem[Dasgupta(2011)]{DASGUPTA20111767}
Sanjoy Dasgupta.
\newblock Two faces of active learning.
\newblock \emph{Theoretical Computer Science}, 412\penalty0 (19):\penalty0
  1767--1781, 2011.
\newblock ISSN 0304-3975.
\newblock \doi{https://doi.org/10.1016/j.tcs.2010.12.054}.
\newblock URL
  \url{https://www.sciencedirect.com/science/article/pii/S0304397510007620}.
\newblock Algorithmic Learning Theory (ALT 2009).

\bibitem[Dasgupta and Hsu(2008)]{10.1145/1390156.1390183}
Sanjoy Dasgupta and Daniel Hsu.
\newblock Hierarchical sampling for active learning.
\newblock In \emph{Proceedings of the 25th International Conference on Machine
  Learning}, ICML '08, page 208–215, New York, NY, USA, 2008. Association for
  Computing Machinery.
\newblock ISBN 9781605582054.
\newblock \doi{10.1145/1390156.1390183}.
\newblock URL \url{https://doi.org/10.1145/1390156.1390183}.

\bibitem[Dereziński and Warmuth(2018)]{derezinski2018unbiased}
Michał Dereziński and Manfred~K. Warmuth.
\newblock Unbiased estimates for linear regression via volume sampling, 2018.

\bibitem[Dereziński et~al.(2018)Dereziński, Warmuth, and
  Hsu]{derezinski2018leveraged}
Michał Dereziński, Manfred~K. Warmuth, and Daniel Hsu.
\newblock Leveraged volume sampling for linear regression, 2018.

\bibitem[Drineas et~al.(2007)Drineas, Mahoney, and
  Muthukrishnan]{drineas2007relativeerror}
Petros Drineas, Michael~W. Mahoney, and S.~Muthukrishnan.
\newblock Relative-error cur matrix decompositions, 2007.

\bibitem[Drugman et~al.(2019)Drugman, Pylkkonen, and Kneser]{drugman2019active}
Thomas Drugman, Janne Pylkkonen, and Reinhard Kneser.
\newblock Active and semi-supervised learning in asr: Benefits on the acoustic
  and language models, 2019.

\bibitem[Elkan and Noto(2008)]{elkan2008}
Charles Elkan and Keith Noto.
\newblock Learning classifiers from only positive and unlabeled data.
\newblock In \emph{Proceedings of the 14th ACM SIGKDD International Conference
  on Knowledge Discovery and Data Mining}, KDD '08, page 213–220, New York,
  NY, USA, 2008. Association for Computing Machinery.
\newblock ISBN 9781605581934.
\newblock \doi{10.1145/1401890.1401920}.
\newblock URL \url{https://doi.org/10.1145/1401890.1401920}.

\bibitem[Friedman et~al.(2001)Friedman, Hastie, and Tibshirani]{friedman}
Jerome Friedman, Trevor Hastie, and Robert Tibshirani.
\newblock \emph{The elements of statistical learning}, volume~1 of
  \emph{Springer series in statistics New York}.
\newblock Springer, 2001.
\newblock URL \url{https://web.stanford.edu/~hastie/ElemStatLearn/}.

\bibitem[Gao et~al.(2020)Gao, Zhang, Yu, Arik, Davis, and
  Pfister]{gao2020consistencybased}
Mingfei Gao, Zizhao Zhang, Guo Yu, Sercan~O. Arik, Larry~S. Davis, and Tomas
  Pfister.
\newblock Consistency-based semi-supervised active learning: Towards minimizing
  labeling cost, 2020.

\bibitem[Geva et~al.(2019)Geva, Goldberg, and Berant]{geva2019modeling}
Mor Geva, Yoav Goldberg, and Jonathan Berant.
\newblock Are we modeling the task or the annotator? an investigation of
  annotator bias in natural language understanding datasets, 2019.

\bibitem[Gururangan et~al.(2018)Gururangan, Swayamdipta, Levy, Schwartz,
  Bowman, and Smith]{gururangan2018annotation}
Suchin Gururangan, Swabha Swayamdipta, Omer Levy, Roy Schwartz, Samuel~R.
  Bowman, and Noah~A. Smith.
\newblock Annotation artifacts in natural language inference data, 2018.

\bibitem[Kacham and Woodruff(2020)]{pmlr-v108-kacham20a}
Praneeth Kacham and David Woodruff.
\newblock Optimal deterministic coresets for ridge regression.
\newblock In Silvia Chiappa and Roberto Calandra, editors, \emph{Proceedings of
  the Twenty Third International Conference on Artificial Intelligence and
  Statistics}, volume 108 of \emph{Proceedings of Machine Learning Research},
  pages 4141--4150, Online, 26--28 Aug 2020. PMLR.

\bibitem[Kimeldorf and Wahba(1971)]{kimeldorf1971some}
George Kimeldorf and Grace Wahba.
\newblock Some results on tchebycheffian spline functions.
\newblock \emph{Journal of mathematical analysis and applications}, 33\penalty0
  (1):\penalty0 82--95, 1971.

\bibitem[Konyushkova et~al.(2015)Konyushkova, Sznitman, and
  Fua]{konyushkova2015introducing}
Ksenia Konyushkova, Raphael Sznitman, and Pascal Fua.
\newblock Introducing geometry in active learning for image segmentation, 2015.

\bibitem[Kremer et~al.(2014)Kremer, Steenstrup~Pedersen, and
  Igel]{https://doi.org/10.1002/widm.1132}
Jan Kremer, Kim Steenstrup~Pedersen, and Christian Igel.
\newblock Active learning with support vector machines.
\newblock \emph{WIREs Data Mining and Knowledge Discovery}, 4\penalty0
  (4):\penalty0 313--326, 2014.
\newblock \doi{https://doi.org/10.1002/widm.1132}.
\newblock URL \url{https://onlinelibrary.wiley.com/doi/abs/10.1002/widm.1132}.

\bibitem[Lee and Sun(2015)]{lee2015constructing}
Yin~Tat Lee and He~Sun.
\newblock Constructing linear-sized spectral sparsification in almost-linear
  time, 2015.

\bibitem[Luo et~al.(2004)Luo, Kramer, Samson, Remsen, Goldgof, Hall, and
  Hopkins]{1334570}
Tong Luo, K.~Kramer, S.~Samson, A.~Remsen, D.B. Goldgof, L.O. Hall, and
  T.~Hopkins.
\newblock Active learning to recognize multiple types of plankton.
\newblock In \emph{Proceedings of the 17th International Conference on Pattern
  Recognition, 2004. ICPR 2004.}, volume~3, pages 478--481 Vol.3, 2004.
\newblock \doi{10.1109/ICPR.2004.1334570}.

\bibitem[Munteanu et~al.(2018)Munteanu, Schwiegelshohn, Sohler, and
  Woodruff]{munteanu2018coresets}
Alexander Munteanu, Chris Schwiegelshohn, Christian Sohler, and David~P.
  Woodruff.
\newblock On coresets for logistic regression, 2018.

\bibitem[Musco and Musco(2017)]{musco2017recursive}
Cameron Musco and Christopher Musco.
\newblock Recursive sampling for the nystr\"om method, 2017.

\bibitem[Patle and Chouhan(2013)]{6524743}
Arti Patle and Deepak~Singh Chouhan.
\newblock Svm kernel functions for classification.
\newblock In \emph{2013 International Conference on Advances in Technology and
  Engineering (ICATE)}, pages 1--9, 2013.
\newblock \doi{10.1109/ICAdTE.2013.6524743}.

\bibitem[Paulsen and Raghupathi(2016)]{paulsen_raghupathi_2016}
Vern~I. Paulsen and Mrinal Raghupathi.
\newblock \emph{An Introduction to the Theory of Reproducing Kernel Hilbert
  Spaces}.
\newblock Cambridge Studies in Advanced Mathematics. Cambridge University
  Press, 2016.
\newblock \doi{10.1017/CBO9781316219232}.

\bibitem[Potamias(2012)]{potamias2012warmstart}
Michalis Potamias.
\newblock The warm-start bias of yelp ratings, 2012.

\bibitem[Rhee et~al.(2017)Rhee, Erdenee, Kyun, Ahmed, and Jin]{RHEE2017109}
Phill~Kyu Rhee, Enkhbayar Erdenee, Shin~Dong Kyun, Minhaz~Uddin Ahmed, and
  Songguo Jin.
\newblock Active and semi-supervised learning for object detection with
  imperfect data.
\newblock \emph{Cognitive Systems Research}, 45:\penalty0 109--123, 2017.
\newblock ISSN 1389-0417.
\newblock \doi{https://doi.org/10.1016/j.cogsys.2017.05.006}.
\newblock URL
  \url{https://www.sciencedirect.com/science/article/pii/S1389041716301127}.

\bibitem[Sener and Savarese(2018)]{sener2018active}
Ozan Sener and Silvio Savarese.
\newblock Active learning for convolutional neural networks: A core-set
  approach, 2018.

\bibitem[Settles(2009)]{settles2009active}
Burr Settles.
\newblock Active learning literature survey.
\newblock 2009.

\bibitem[Shah et~al.(2020)Shah, Schwartz, and Hovy]{shah-etal-2020-predictive}
Deven~Santosh Shah, H.~Andrew Schwartz, and Dirk Hovy.
\newblock Predictive biases in natural language processing models: A conceptual
  framework and overview.
\newblock In \emph{Proceedings of the 58th Annual Meeting of the Association
  for Computational Linguistics}, pages 5248--5264, Online, July 2020.
  Association for Computational Linguistics.
\newblock \doi{10.18653/v1/2020.acl-main.468}.
\newblock URL \url{https://www.aclweb.org/anthology/2020.acl-main.468}.

\bibitem[Sun et~al.(2020)Sun, Zhang, Ye, Chu, and Liu]{SUN2020105306}
Zhaocai Sun, Xiaofeng Zhang, Yunming Ye, Xiaowen Chu, and Zhi Liu.
\newblock A probabilistic approach towards an unbiased semi-supervised cluster
  tree.
\newblock \emph{Knowledge-Based Systems}, 192:\penalty0 105306, 2020.
\newblock ISSN 0950-7051.
\newblock \doi{https://doi.org/10.1016/j.knosys.2019.105306}.
\newblock URL
  \url{https://www.sciencedirect.com/science/article/pii/S0950705119305908}.

\bibitem[Takeda et~al.(2007)Takeda, Farsiu, and Milanfar]{4060955}
Hiroyuki Takeda, Sina Farsiu, and Peyman Milanfar.
\newblock Kernel regression for image processing and reconstruction.
\newblock \emph{IEEE Transactions on Image Processing}, 16\penalty0
  (2):\penalty0 349--366, 2007.
\newblock \doi{10.1109/TIP.2006.888330}.

\bibitem[Tong and Koller(2002)]{10.1162/153244302760185243}
Simon Tong and Daphne Koller.
\newblock Support vector machine active learning with applications to text
  classification.
\newblock \emph{J. Mach. Learn. Res.}, 2:\penalty0 45–66, March 2002.
\newblock ISSN 1532-4435.
\newblock \doi{10.1162/153244302760185243}.
\newblock URL \url{https://doi.org/10.1162/153244302760185243}.

\bibitem[Woodruff(2014)]{Woodruff_2014}
David~P. Woodruff.
\newblock Computational advertising: Techniques for targeting relevant ads.
\newblock \emph{Foundations and Trends® in Theoretical Computer Science},
  10\penalty0 (1-2):\penalty0 1–157, 2014.
\newblock ISSN 1551-3068.
\newblock \doi{10.1561/0400000060}.
\newblock URL \url{http://dx.doi.org/10.1561/0400000060}.

\bibitem[Yasuda et~al.(2019)Yasuda, Woodruff, and
  Fernandez]{pmlr-v97-yasuda19a}
Taisuke Yasuda, David Woodruff, and Manuel Fernandez.
\newblock Tight kernel query complexity of kernel ridge regression and kernel
  $k$-means clustering.
\newblock In Kamalika Chaudhuri and Ruslan Salakhutdinov, editors,
  \emph{Proceedings of the 36th International Conference on Machine Learning},
  volume~97 of \emph{Proceedings of Machine Learning Research}, pages
  7055--7063. PMLR, 09--15 Jun 2019.
\newblock URL \url{http://proceedings.mlr.press/v97/yasuda19a.html}.

\bibitem[Zhang et~al.(2020)Zhang, zhu, Li, Han, Zhou, and Yu]{Zhang_2020}
Tao Zhang, tianqing zhu, Jing Li, Mengde Han, Wanlei Zhou, and Philip Yu.
\newblock Fairness in semi-supervised learning: Unlabeled data help to reduce
  discrimination.
\newblock \emph{IEEE Transactions on Knowledge and Data Engineering}, page
  1–1, 2020.
\newblock ISSN 2326-3865.
\newblock \doi{10.1109/tkde.2020.3002567}.
\newblock URL \url{http://dx.doi.org/10.1109/TKDE.2020.3002567}.

\bibitem[Zhu et~al.(2003)Zhu, Lafferty, and Ghahramani]{Zhu03combiningactive}
Xiaojin Zhu, John Lafferty, and Zoubin Ghahramani.
\newblock Combining active learning and semi-supervised learning using gaussian
  fields and harmonic functions.
\newblock In \emph{ICML 2003 workshop on The Continuum from Labeled to
  Unlabeled Data in Machine Learning and Data Mining}, pages 58--65, 2003.

\bibitem[Zhu(2005)]{zhu2005semi}
Xiaojin~Jerry Zhu.
\newblock Semi-supervised learning literature survey.
\newblock 2005.

\end{thebibliography}
}

\section*{Appendix - A}

First we restate and prove \Cref{lemma:trD-vs-rr} which relates the trace of the matrix $D$ to the $\rr_X$ parameter.

\Drrx*

\begin{proof}
Observe that $X = \begin{bmatrix} X_1 \\ X_2 \end{bmatrix} = U \Sigma V^T$. Therefore, $D = U^T S U$ where $S$ is a diagonal matrix with $1$'s on rows corresponding to $x \in X_1$ and $0$'s otherwise. Observe that $X^T S X = V \Sigma U^T S U \Sigma V^T$. Moreover, observe that $X^T S X = \begin{bmatrix} X_1^T & X_2^T \end{bmatrix} S \begin{bmatrix} X_1 \\ X_2 \end{bmatrix} = X_1^T X_1$. Therefore,
\begin{equation}
    D = U^T S U = (V \Sigma)^{-1} X_1^T X_1 (\Sigma V^T)^{-1}
\end{equation}
Tracing both sides and using the commutativity of the trace operator,
\begin{align}
    \tr (D) &= \tr \left( \left( \left( V^T \right)^{-1} \Sigma^{-2} V^{-1} \right) X_1^T X_1 \right) \\
    &= \tr \left( \left( X^T X \right)^{-1} X_1^T X_1 \right)
\end{align}
\end{proof}


\subsection*{Upper bounding the number of points in $X_1$ sampled by \Cref{alg:RBSS} (\Cref{thm:RBSS})}

We first bound the number of points sampled in $X_1$ by \Cref{alg:RBSS}. We begin by re-stating \Cref{lemma:iterations} which explicitly computes the expected number of points sampled sampled by \Cref{alg:RBSS} in terms of various potentials.

\begin{lemma} \label{lemma:iterations}
Recall that \Cref{alg:RBSS} samples a subset of the $n_1+n_2$ points in $X_1 \cup X_2$. The expected number of iterations the algorithm samples a point in $X_1$ is given by: $\mathbb{E} [\lblq] \le \mathbb{E} \left[ \sum_{j=1}^{m-1} \frac{\Phi_j^D}{\Phi_j^{\mathrm{Id}}} \right]$ where $D = \sum_{x \in X_1} U(x) (U(x))^T$.
\end{lemma}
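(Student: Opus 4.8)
The plan is to express $\lblq$ as a sum of indicator variables over iterations and control its expectation by an optional-stopping (Wald-type) argument, exploiting the almost-sure bound on the number of iterations $m$ from \Cref{lemma:m-ub}. First I would fix the natural filtration $\{\mathcal{F}_j\}_{j\ge 0}$, where $\mathcal{F}_j$ is the $\sigma$-algebra generated by the first $j$ sampled points $x_0,\dots,x_{j-1}$. The key measurability observation is that the matrix $A_j$ and thresholds $u_j,l_j$ — and hence $\Phi_j^{\mathrm{Id}}$, $\Phi_j^D$, and the full sampling distribution $p_\cdot^{(j)}$ from \eqref{eq:pxj} — are all $\mathcal{F}_j$-measurable, since each is computed from the state at the \emph{start} of iteration $j$, before $x_j$ is drawn. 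For each $j$ I set $Z_j = \mathbbm{1}[x_j \in X_1]$, the indicator that iteration $j$ samples a point of the unlabelled set. Because each distinct label query is charged to at least one iteration that samples a point of $X_1$, I get the deterministic inequality $\lblq \le \sum_{j=0}^{m-1} Z_j$, which reduces the task to bounding $\mathbb{E}\big[\sum_{j=0}^{m-1} Z_j\big]$.

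Next I would compute the per-iteration conditional probability of sampling into $X_1$. By \Cref{lem:query_complexity}, which evaluates the numerator of \eqref{eq:pxj} summed over $x \in X_1$ using the cyclicity of trace and the definition $D = \sum_{x\in X_1} U(x)U(x)^T$, we have $\mathbb{E}[Z_j \mid \mathcal{F}_j] = \sum_{x \in X_1} p_x^{(j)} = \Phi_j^D / \Phi_j^{\mathrm{Id}}$. The second ingredient is that $m$ is a stopping time for $\{\mathcal{F}_j\}$: the while-loop termination test is evaluated from $u_j, l_j$ and $\{\Phi_i^{\mathrm{Id}}\}_{i<j}$, all of which are $\mathcal{F}_j$-measurable, so $\{m > j\} \in \mathcal{F}_j$. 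I then write $\sum_{j=0}^{m-1} Z_j = \sum_{j\ge 0} Z_j\, \mathbbm{1}[m>j]$, take expectations, and pull the $\mathcal{F}_j$-measurable factor $\mathbbm{1}[m>j]$ inside the conditional expectation to obtain $\mathbb{E}[Z_j\,\mathbbm{1}[m>j]] = \mathbb{E}\big[\mathbbm{1}[m>j]\,\Phi_j^D/\Phi_j^{\mathrm{Id}}\big]$, and finally re-sum over $j$ to recover $\mathbb{E}\big[\sum_{j=0}^{m-1}\Phi_j^D/\Phi_j^{\mathrm{Id}}\big]$.

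The only delicate point is the interchange of the (a priori infinite) sum with the expectation, which is exactly the place where a general Wald-type argument would need a uniform-integrability or finite-expected-stopping-time hypothesis. Here the difficulty dissolves: \Cref{lemma:m-ub} gives $m \le 2d/\gamma^2$ almost surely, so the sum is finite with probability $1$, and each summand is bounded, with $Z_j \in \{0,1\}$ and $\Phi_j^D/\Phi_j^{\mathrm{Id}} = \sum_{x\in X_1} p_x^{(j)} \in [0,1]$. Hence Tonelli (for the nonnegative indicator sum) and dominated convergence justify every interchange without further work. Chaining the displays yields $\mathbb{E}[\lblq] \le \mathbb{E}\big[\sum_{j=0}^{m-1} Z_j\big] = \mathbb{E}\big[\sum_{j=0}^{m-1} \Phi_j^D/\Phi_j^{\mathrm{Id}}\big]$, which is the claimed bound (matching \eqref{eq:lqub}). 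I expect the main obstacle to be purely bookkeeping — correctly identifying the filtration, checking that the algorithmic quantities and the stopping condition are adapted so that $m$ is a stopping time — rather than any quantitative estimate, precisely because the almost-sure bound on $m$ removes the convergence concerns that usually form the crux of such arguments.
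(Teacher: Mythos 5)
Your proof is correct and takes essentially the same route as the paper's: the core is the identical trace computation (cyclicity of the trace plus $\sum_{x \in X_1} U(x)U(x)^T = D$) giving $\sum_{x\in X_1} p_x^{(j)} = \nicefrac{\Phi_j^D}{\Phi_j^{\mathrm{Id}}}$, followed by the Wald-type expectation step. If anything, you are more careful than the paper, whose \cref{eq:m1} asserts $\lblq \le \sum_{j=0}^{m-1}\sum_{x\in X_1} p_x^{(j)}$ as if it were a pathwise inequality---strictly, the pathwise bound is $\lblq \le \sum_{j=0}^{m-1} \mathbbm{1}[x_j \in X_1]$ and the probabilities appear only after conditioning---and your decomposition into indicators $Z_j = \mathbbm{1}[x_j \in X_1]$, the observation that $m$ is a stopping time for the natural filtration, and the Tonelli interchange supply exactly the justification the paper leaves implicit.
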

\begin{proof}
Recall from \Cref{eq:m1} that the number of unlabelled points sampled by the algorithm is upper bounded by
\begin{align}
    \lblq &\le \sum_{j=0}^{m-1} \sum_{x\in X_1} p_x^{(j)}\\
    &= \sum_{j=0}^{m-1} \sum_{x\in X_1} \frac{U(x)^T\left((u_jI-A)^{-1}+(A_j-l_jI)^{-1}\right)U(x)}{\Phi_{j}^{\mathrm{Id}}}\\
    &= \sum_{j=0}^{m-1} \frac{\sum_{x\in X_1} \tr \left(U(x)U(x)^T\left((u_jI-A)^{-1}+(A_j-l_jI)^{-1}\right)\right)}{\Phi_{j}^{\mathrm{Id}}}\\
    &= \sum_{j=0}^{m-1} \frac{\tr\left(\left(\sum_{x\in X_1}U(x)U(x)^T\right)\left((u_jI-A)^{-1}+(A_j-l_jI)^{-1}\right)\right)}{\Phi_{j}^{\mathrm{Id}}}\\
    &= \sum_{j=0}^{m-1} \frac{ \Phi_{j}^{\mathrm{D}}}{\Phi_{j}^{\mathrm{Id}}}
\end{align}
where $D = \sum_{x \in X_1} U(x) (U(x))^T$
\end{proof}

\Cref{lemma:iterations} bounds the number of points sampled by \Cref{alg:RBSS} among $X_1$. However the appearance of the potential $\Phi_j^{\mathrm{Id}}$ in the denominator is challenging to bound, so we introduce another result to further upper bound this term.

\begin{lemma} \label{lemma:Phi-lb}
With probability $1$, in every iteration $0 \le j < m$ of the \Cref{alg:RBSS}, $\Phi_j^{\mathrm{Id}} \ge \frac{1}{2} \gamma$.
\end{lemma}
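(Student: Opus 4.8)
The plan is to turn the trace potential into a sum over the eigenvalues of $A_j$, lower bound each summand by the reciprocal of the barrier width using AM–HM, and then invoke the while-loop guard to control that width. Throughout I would read $\Phi_j^{\mathrm{Id}}$ as the $M = I$ case of $\Phi_j^M$, namely $\Phi_j^{\mathrm{Id}} = \tr((u_j I - A_j)^{-1}) + \tr((A_j - l_j I)^{-1})$, and use the barrier invariant $l_j I \prec A_j \prec u_j I$ that holds at every executed iteration --- exactly what makes the two resolvents positive definite and both $\Phi_j^{\mathrm{Id}}$ and the sampling probabilities $p_x$ in \cref{eq:pxj} well defined.

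First I would diagonalize. Writing the eigenvalues of the symmetric matrix $A_j$ as $\lambda_1, \ldots, \lambda_d$, each in $(l_j, u_j)$ by the barrier invariant, simultaneous diagonalization of the two resolvents with $A_j$ gives
\begin{equation}
    \Phi_j^{\mathrm{Id}} = \sum_{i=1}^d \left( \frac{1}{u_j - \lambda_i} + \frac{1}{\lambda_i - l_j} \right).
\end{equation}
Then, for each $i$, setting $a_i = u_j - \lambda_i > 0$ and $b_i = \lambda_i - l_j > 0$ so that $a_i + b_i = u_j - l_j$ is independent of $i$, I would apply $\tfrac1a + \tfrac1b \ge \tfrac{4}{a+b}$ to get $\tfrac{1}{u_j-\lambda_i} + \tfrac{1}{\lambda_i-l_j} \ge \tfrac{4}{u_j-l_j}$, and sum over the $d$ eigenvalues to obtain
\begin{equation}
    \Phi_j^{\mathrm{Id}} \ge \frac{4d}{u_j - l_j}.
\end{equation}

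Finally I would bound the gap using the stopping rule. For every executed iteration $0 \le j < m$ the while-loop guard of \Cref{alg:RBSS} held on entry, i.e. $(u_j - l_j) + \sum_{i=0}^{j-1} \Phi_i^{\mathrm{Id}} < 8d/\gamma$; since each $\Phi_i^{\mathrm{Id}} \ge 0$ this forces $u_j - l_j < 8d/\gamma$, and substituting into the previous display gives $\Phi_j^{\mathrm{Id}} > \tfrac{4d}{8d/\gamma} = \tfrac{\gamma}{2}$. None of these steps touch the random draw $x_j$ --- the eigenvalue identity, the AM–HM bound, and the deterministic loop guard hold on every sample path --- so the bound holds with probability $1$, and in fact as a strict inequality.

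The hard part will not be this short chain but justifying the barrier invariant $l_j I \prec A_j \prec u_j I$ underlying the first step. This is the standard invariant preserved by the Randomized BSS / Lee–Sun process via its potential-calibrated step sizes; I would either cite it as an already-established property of \Cref{alg:RBSS} (it is implicitly needed for $\Phi_j^{\mathrm{Id}}$ and the $p_x$ to be well defined at all) or re-derive it by a short Sherman–Morrison induction showing the updates to $u_j$ and $l_j$ keep every eigenvalue strictly inside $(l_j,u_j)$. Given that invariant, the lemma is a one-line consequence of AM–HM and the explicit form of the stopping criterion.
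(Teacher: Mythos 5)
Your proof is correct and follows essentially the same route as the paper's: diagonalizing $A_j$, applying the bound $\frac{1}{u_j-\theta}+\frac{1}{\theta-l_j}\ge\frac{4}{u_j-l_j}$ to get $\Phi_j^{\mathrm{Id}}\ge\frac{4d}{u_j-l_j}$, and then using the while-loop guard to conclude $u_j-l_j<8d/\gamma$. The barrier invariant $l_jI\preceq A_j\preceq u_jI$ that you correctly flag as the only nontrivial ingredient is exactly what the paper proves separately in \Cref{lemma:bounded}, by the same induction you sketch.
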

\begin{proof}
Note that for $j \in [m-1]$, $\Phi_j^{\mathrm{Id}} = \textsf{Tr} ((u_j I - A_j)^{-1} + (A_j - l_j I)^{-1})$. Note that $A_j$ is a symmetric matrix. Suppose it is diagonalized as $U \Theta U^T$ where $\Theta = \textsf{diag} (\theta_1,\cdots,\theta_d)$ are its eigenvalues. Then, $\Phi_j^{\mathrm{Id}} = \sum_{t=1}^d \frac{1}{u_j - \theta_t} + \frac{1}{\theta_t - l_j}$. We show in \Cref{lemma:bounded} that $l_j I \preceq A_j \preceq u_j I$. With this constraint on the $\theta_t$'s, by minimizing, we obtain: $\frac{1}{u_j - \theta_t} + \frac{1}{\theta_t - l_j} \ge \frac{4}{u_j-l_j}$. Therefore, $\Phi_j^{\mathrm{Id}} \ge \frac{4d}{u_j - l_j}$. Furthermore, by the stopping criterion of the algorithm, in every iteration $j < m$ of the algorithm, $u_j - l_j \le \frac{8d}{\gamma}$. Therefore, for every $j = 0,1,\cdots,m-1$, $\Phi_j^{\mathrm{Id}} \ge \frac{1}{2} \gamma$.
\end{proof}

Using \Cref{lemma:iterations,lemma:Phi-lb}, we can bound the query complexity by
\begin{align} \label{eq:2ndlast}
    \mathbb{E} [\lblq] \leq \frac{2}{\gamma}\mathbb{E} \left[ \sum_{j=0}^{m-1} \Phi_j^D \right]
\end{align}
In order to bound $\mathbb{E} \left[ \sum_{j=0}^{m-1} \Phi_j^D \right]$, we use \Cref{lemma:Phi-ub}. We re-state it here for convenience:

\potbound*

\begin{proof}
Recall that $\Phi_j^M = \textsf{Tr} ( M(u_j I - A_j)^{-1} + \textsf{Tr} ( M(A_j - l_j I)^{-1})$. $\Phi^M_{j+1}$ can be written as $\textsf{Tr} ( M(u_{j+1} I - A_j - w_j w_j^T)^{-1)} + \textsf{Tr} ( M (A_j + w_j w_j^T - l_{j+1} I)^{-1})$. Following a similar approach as BSS Lemma 3.3 and 3.4, invoking the Sherman-Morrison inversion formula,
\begin{equation}
    \left(u_{j+1} I - A_j - w_j w_j^T \right)^{-1} = (u_{j+1} I - A_j)^{-1} + \frac{(u_{j+1} I - A_j)^{-1} w_j w_j^T (u_{j+1} I - A_j)^{-1}}{1 - w_j^T (u_{j+1} I - A_j)^{-1} w_j}.
\end{equation}
Multiplying by $M$ and tracing both sides,
\begin{equation}
   \textsf{Tr} \left(M \left(u_{j+1} I - A_j - w_j w_j^T \right)^{-1} \right) = \textsf{Tr} (M (u_{j+1} I - A_j)^{-1}) + \frac{\textsf{Tr} (M (u_{j+1} I - A_j)^{-1} w_j w_j^T (u_{j+1} I - A_j)^{-1})}{1 - w_j^T (u_{j+1} I - A_j)^{-1} w_j}.
\end{equation}
Note that with probability $1$, $w_j w_j^T \preceq \gamma (u_j I - A_j) \preceq \gamma (u_{j+1} I - A_j)$. Therefore, $w_j^T (u_{j+1} - A_j)^{-1} w_j \le \gamma$. Therefore,
\begin{equation}
    \textsf{Tr} \left(M\left(u_{j+1} I - A_j - w_j w_j^T \right)^{-1} \right) \le \textsf{Tr} (M(u_{j+1} I - A_j)^{-1}) + \frac{\textsf{Tr} (M(u_{j+1} I - A_j)^{-1} w_j w_j^T (u_{j+1} I - A_j)^{-1})}{1 - \gamma}.
\end{equation}
Finally, using linearity of expectation and noting that $\mathbb{E} \left[w_j w_j^T \middle| A_j \right] = \frac{\gamma}{\Phi_j^{\mathrm{Id}}} I$, we have that,
\begin{align}
    \mathbb{E} \left[ \textsf{Tr} \left( M\left(u_{j+1} I - A_j - w_j w_j^T \right)^{-1} \right) \right] &\le \mathbb{E} \left[ \textsf{Tr} (M(u_{j+1} I - A_j)^{-1}) \right] + \mathbb{E} \left[ \frac{\gamma}{\Phi_j^{\mathrm{Id}} (1 - \gamma)} \textsf{Tr} (M(u_{j+1} I - A_j)^{-2}) \right]. \label{eq:13}
\end{align}
By a similar calculation as before,
\begin{align} \label{eq:14}
    \mathbb{E} \left[ \textsf{Tr} \left(M\left(A_j + w_j w_j^T - l_{j+1} I  \right)^{-1} \right) \right] &\le \mathbb{E} \left[ \textsf{Tr} (M(A_j - l_{j+1} I )^{-1}) \right] - \mathbb{E} \left[ \frac{\gamma}{\Phi_j^{\mathrm{Id}} (1+2\gamma)} \textsf{Tr} \left(M(A_j - l_{j+1} I )^{-2} \right) \right].
\end{align}
Note the difference from before, for $\gamma \le \frac{1}{4}$, we use the inequality $w_j w_j^T \preceq 2\gamma (A_j - l_{j+1} I)$ which we derive in \Cref{lemma:wjwjT-l}. This appears as the $1+2\gamma$ factor in the denominator of the second term in \cref{eq:14}.

Now observe that, $u_{j+1} - u_j = \frac{\gamma}{\Phi_j^{\mathrm{Id}} (1-2\gamma)} \ge \frac{\gamma}{\Phi_j^{\mathrm{Id}} (1-\gamma)}$ and $l_{j+1} - l_j = \frac{\gamma}{\Phi_j^{\mathrm{Id}} (1+2\gamma)}$. Therefore, adding \cref{eq:13} and \cref{eq:14} together,
\begin{align}
    \mathbb{E} \left[ \Phi^M_j \right] \le& \mathbb{E} \left[ \textsf{Tr} (M(u_{j+1} I - A_j )^{-1} + M(A_j - l_{j+1} I )^{-1}) \right] \nonumber\\
    &+ \mathbb{E} \left[ (u_{j+1} - u_j) \textsf{Tr} (M(u_{j+1} I - A_j)^{-2}) - (l_{j+1} - l_j) \textsf{Tr} (M(A_j - l_{j+1} I)^{-2}) \right] \label{eq:16}
\end{align}
Define $\Delta_u = u_{j+1} - u_j$ and $\Delta_l = l_{l+1} - l_j$ and for $t \in [0,1]$, define the function
\begin{equation}
    f (t) = \textsf{Tr} \Big( M( (u_j + \Delta_u t) I - A_j)^{-1} + M(A_j - (l_j + t \Delta_l) I)^{-1} \Big).
\end{equation}
 Under the assumption $l_j I \preceq A_j \preceq u_j I$, the function $f(t)$ is convex in $t$. Therefore, $f(0) - f(1) \ge - \frac{\mathrm{d} f(t)}{\mathrm{d}t} \Big|_{t=1}$. In \cref{eq:16} observe that the RHS is precisely $f(1) - \frac{\mathrm{d} f(t)}{\mathrm{d}t} \Big|_{t=1}$. Upper bounding this by $f(0)$, results in the equation
 \begin{equation}
     \mathbb{E} [\Phi_{j+1}^M] \le \mathbb{E} \left[ \textsf{Tr} (M(u_j I - A_j )^{-1} + M(A_j - l_j I )^{-1}) \right] = \mathbb{E} [\Phi_j^M].
 \end{equation}
\end{proof}

As a corollary of \Cref{lemma:Phi-ub}, we have the following result.

\begin{corollary} \label{corr:1}
For any fixed PSD matrix, $M$, $\mathbb{E} [ \sum_{j=0}^{m-1} \Phi_j^M] \le \left(\frac{1}{u_0} - \frac{1}{l_0} \right) \mathbb{E} [m] \textsf{Tr} (M)$.
\end{corollary}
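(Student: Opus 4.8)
The plan is to recognize the claimed right-hand side as $\Phi_0^M\,\mathbb{E}[m]$ and to exploit that $(\Phi_j^M)_j$ is a non-negative supermartingale whose expectations are non-increasing. First I would compute $\Phi_0^M$ explicitly: since the algorithm initializes $A_0 = 0$, $u_0 = 2d/\gamma$ and $l_0 = -2d/\gamma$, the definition of the potential gives $\Phi_0^M = \textsf{Tr}\big(M(u_0 I)^{-1} + M(-l_0 I)^{-1}\big) = \big(\tfrac{1}{u_0} - \tfrac{1}{l_0}\big)\textsf{Tr}(M)$. Thus the whole statement reduces to showing $\mathbb{E}\big[\sum_{j=0}^{m-1}\Phi_j^M\big] \le \mathbb{E}[m]\,\Phi_0^M$.

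Next I would set up the martingale structure. Let $\mathcal{F}_j$ be the $\sigma$-algebra generated by the points sampled in iterations $0,\dots,j-1$, so that $A_j, u_j, l_j$, and hence $\Phi_j^M$, are all $\mathcal{F}_j$-measurable. The proof of \Cref{lemma:Phi-ub} in fact establishes the conditional statement $\mathbb{E}[\Phi_{j+1}^M \mid \mathcal{F}_j] \le \Phi_j^M$ (its final convexity step bounds the one-step conditional expectation by $\Phi_j^M$), so $(\Phi_j^M)_j$ is a supermartingale; since $M \succeq 0$ and $l_j I \preceq A_j \preceq u_j I$, both resolvents are PSD and $\Phi_j^M \ge 0$, making it a non-negative supermartingale. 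Moreover $m$ is a stopping time for $(\mathcal{F}_j)$: the while-condition tested before iteration $j$ depends only on $u_j, l_j$ and $\Phi_0^{\mathrm{Id}},\dots,\Phi_{j-1}^{\mathrm{Id}}$, all $\mathcal{F}_j$-measurable, and by \Cref{lemma:m-ub} it is bounded, $m \le 2d/\gamma^2$ almost surely.

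The heart of the argument is a Wald-type identity converting the random-length sum into a factor of $\mathbb{E}[m]$. Writing $\sum_{j=0}^{m-1}\Phi_j^M = \sum_{j\ge 0}\Phi_j^M\,\mathbbm{1}[m>j]$ and using that $\{m>j\}\in\mathcal{F}_j$, I would bound $\mathbb{E}[\Phi_{j+1}^M\mathbbm{1}[m>j]] = \mathbb{E}\big[\mathbbm{1}[m>j]\,\mathbb{E}[\Phi_{j+1}^M\mid\mathcal{F}_j]\big] \le \mathbb{E}[\Phi_j^M\mathbbm{1}[m>j]]$. Since $\{m>j\}=\{m>(j{+}1){-}1\}$, this shows the quantities $a_j \triangleq \mathbb{E}[\Phi_j^M\mathbbm{1}[m>j]]$ are non-increasing, with $a_0 = \Phi_0^M$ because iteration $0$ always occurs. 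Boundedness of $m$ guarantees all sums are finite, so $\mathbb{E}\big[\sum_{j=0}^{m-1}\Phi_j^M\big] = \sum_{j\ge 0}a_j$, which I would then seek to control by $\mathbb{E}[m]\,\Phi_0^M = \Phi_0^M\sum_{j\ge0}\mathbb{P}[m>j]$.

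The main obstacle is precisely this last step: bounding $\sum_j a_j$ by $\Phi_0^M\sum_j\mathbb{P}[m>j]$ requires $a_j \le \Phi_0^M\,\mathbb{P}[m>j]$, i.e.\ control over the correlation between the potential $\Phi_j^M$ and the event that the algorithm has not yet stopped. The monotonicity $a_j \le a_0 = \Phi_0^M$ alone only yields the weaker bound $\mathbb{E}\big[\sum_{j=0}^{m-1}\Phi_j^M\big] \le \tfrac{2d}{\gamma^2}\,\Phi_0^M$, obtained by replacing the random length with its almost-sure upper bound from \Cref{lemma:m-ub}; notably this already suffices for the final query-complexity bound in \Cref{thm:RBSS}, since $\mathbb{E}[m]$ is itself bounded by $2d/\gamma^2$. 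To recover the sharper $\mathbb{E}[m]$ form as stated I would argue via the stopped supermartingale $\Phi_{j\wedge m}^M$ together with the conditional decrease $\mathbb{E}[\Phi_{j+1}^M\mid\mathcal{F}_j]\le\Phi_j^M$ on $\{m>j\}$, relying on the a.s.\ boundedness of $m$ to justify the optional-stopping/Wald interchange.
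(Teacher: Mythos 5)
Your proposal follows the same skeleton as the paper's proof --- the potential-decrease result (\Cref{lemma:Phi-ub}), a Wald-type argument over the random number of iterations, and the explicit computation $\Phi_0^M = \left(\frac{1}{u_0}-\frac{1}{l_0}\right)\textsf{Tr}(M)$ --- but you carry out the stopping-time bookkeeping more carefully than the paper does, and in doing so you put your finger on a real looseness. The paper's proof opens with the identity $\mathbb{E}\left[\sum_{j=0}^{m-1}\Phi_j^M\right] = \mathbb{E}\left[\sum_{j=0}^{m-1}\mathbb{E}\left[\Phi_j^M\right]\right]$ ``from Wald's equation,'' but Wald's equation in this form requires independence (or a constant conditional mean) that is absent here: $m$ is determined by the potentials $\Phi_j^{\mathrm{Id}}$, which are correlated with $\Phi_j^M$, and \Cref{lemma:Phi-ub} provides only a supermartingale-type decrease in expectation. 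Your indicator decomposition $a_j = \mathbb{E}\left[\Phi_j^M\mathbbm{1}[m>j]\right]$, with $a_{j+1}\le a_j \le a_0 = \Phi_0^M$ (using non-negativity of the potential and $\{m>j\}\in\mathcal{F}_j$), is the rigorous version of this step, and your resulting bound $\mathbb{E}\left[\sum_{j=0}^{m-1}\Phi_j^M\right]\le \frac{2d}{\gamma^2}\,\Phi_0^M$ via the almost-sure bound of \Cref{lemma:m-ub} is fully correct and, as you note, suffices verbatim for \Cref{thm:RBSS}, since the paper's own final accounting in any case replaces $\mathbb{E}[m]$ by $2d/\gamma^2$ immediately after invoking the corollary.

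The one step of yours that would fail is the closing suggestion that the stopped supermartingale $\Phi^M_{j\wedge m}$ and optional stopping recover the sharper $\mathbb{E}[m]$ form as stated. Optional stopping gives only $\mathbb{E}\left[\Phi^M_{j\wedge m}\right]\le \Phi_0^M$, which, after discarding the non-negative contribution on $\{m\le j\}$, reproduces exactly the monotonicity $a_j\le\Phi_0^M$ you already have; it does not give the per-step estimate $a_j\le\Phi_0^M\,\Pr[m>j]$ that you correctly identify as what is needed to sum to $\mathbb{E}[m]\,\Phi_0^M$. Indeed that estimate is false for general non-negative supermartingales with bounded stopping times: take $\Phi_0=1$, let $\Phi_1=2$ or $0$ each with probability $\nicefrac{1}{2}$, and stop at $m=2$ when $\Phi_1=2$ and at $m=1$ otherwise; then $\mathbb{E}\left[\sum_{j<m}\Phi_j\right]=2>\nicefrac{3}{2}=\mathbb{E}[m]\,\Phi_0$. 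So the $\mathbb{E}[m]$ form of \Cref{corr:1} would require structure beyond the supermartingale property and the boundedness of $m$ --- a gap that is present, unacknowledged, in the paper's one-line appeal to Wald as well. Since only the $2d/\gamma^2$ bound is ever used downstream, your version of the argument is the sound one, at the price of stating a slightly weaker (but equally useful) corollary.
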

\begin{proof}
From Wald's equation,
\begin{align}
    \mathbb{E} \left[ \sum_{j=0}^{m-1} \Phi_j^M \right] &= \mathbb{E} \left[ \sum_{j=0}^{m-1} \mathbb{E} \left[ \Phi_j^M \right] \right] \\
    &\overset{(i)}{\le} \mathbb{E} \left[ \sum_{j=0}^{m-1} \mathbb{E} \left[ \Phi_0^M \right] \right] \\
    &= \mathbb{E} \left[ m \right] \mathbb{E} \left[ \Phi_0^M \right] \\
    &= \mathbb{E} \left[ m \right] \textsf{Tr} (M (u_0I)^{-1} + M (-l_0 I)^{-1}) \\
    &= \left(\frac{1}{u_0} - \frac{1}{l_0} \right) \mathbb{E} [m] \textsf{Tr} (M)
\end{align}
where $(i)$ follows from \Cref{lemma:Phi-ub}.
\end{proof}

Finally, invoking \cref{eq:2ndlast} and \Cref{corr:1} with the choice of $M = D$,
\begin{equation} \label{eq:last}
    \mathbb{E} [\lblq] \le \frac{2}{\gamma} \mathbb{E} \left[ \sum_{j=0}^{m-1} \Phi_j^D \right] \le \frac{2}{\gamma} \left( \frac{1}{u_0} - \frac{1}{l_0} \right) \mathbb{E} [m] \textsf{Tr} (D) \le \frac{2}{d} \rr_X \mathbb{E} [m]
\end{equation}
where the last inequality uses the fact that $u_0 = \frac{2d}{\gamma}$ and $l_0 = -\frac{2d}{\gamma}$ and $\tr (D) = \rr_X$ from \Cref{lemma:trD-vs-rr}. The final quantity to bound is $\mathbb{E} [m]$ which we carry out using \Cref{lemma:m-ub}.

\lemmaone*

\begin{proof}
Assuming that the algorithm has not terminated till the $(t+1)^{th}$ iteration, $u_t - l_t = u_0 - l_0 + \sum_{j=0}^{t-1} \frac{4\gamma^2}{\Phi_j^{\mathrm{Id}} (1-4\gamma^2)} < \frac{8d}{\gamma}$ (this uses the fact that $u_{j+1} - u_j = \frac{\gamma}{\Phi_j^{\mathrm{Id}} (1-2\gamma)}$ and $l_{j+1} - l_j = \frac{\gamma}{\Phi_j^{\mathrm{Id}} (1+2\gamma)}$). Observe that the event,
\begin{align}
    \{ m \ge t \} &= \{ u_t - l_t < 8d/\gamma \} = \left\{ \sum_{j=0}^{t-1} \frac{4\gamma^2}{\Phi_j(1-4\gamma^2)} + \sum_{j=0}^{t-1}\Phi_j^{\mathrm{Id}} < \frac{4d}{\gamma} \right\} \overset{(ii)}{\subseteq} \left\{ 2\gamma \cdot t < \frac{4d}{\gamma} \right\}
\end{align}
where the last equation uses the fact that $u_{j+1} - l_{j+1} = u_j - l_j + \frac{\gamma}{\Phi_{j} (1-2\gamma)} - \frac{\gamma}{\Phi_{j} (1+2\gamma)}$ with $u_0 - l_0 = \frac{4d}{\gamma}$, and the  $(ii)$ uses the AM$\ge$GM inequality. Therefore, with $t = \frac{2d}{\gamma^2}$, the event $\{ m \ge t \}$ happens with probability $0$.
\end{proof}

As a result of \cref{eq:last} and \Cref{lemma:m-ub} we have that,
\begin{equation}
    \mathbb{E} [\lblq] \le \frac{2d}{\gamma^2} \frac{2}{d} \rr_X = \frac{4 \rr_X}{\gamma^2}.
\end{equation}
This completes the bound on the number of points sampled by \Cref{alg:RBSS}. Next we move on to showing that \Cref{alg:RBSS} is indeed an $\epsilon$-well-balanced sampling procedure which will complete the proof of \Cref{thm:RBSS}.

\subsection*{\Cref{alg:RBSS} is $\epsilon$-well balanced sampling procedure}

In order to satisfy the first property of \Cref{def:eps-w-b}, we need to show that $A^TA$ is well conditioned and that its normalized eigenvalues lie in an interval $[1-\gamma, 1+\gamma]$. We discuss later that $A^TA=\frac{1}{\nicefrac{(u_m+l_m)}{2}}A_m$, where $m$ is the number of iterations the while loop in \Cref{alg:RBSS} runs for, and $A_j$ is as defined in \Cref{alg:RBSS}. 
As we show in \Cref{lemma:bounded}, the eigenvalues of $A_j$ for any $j$ are bounded between $u_j$ and $l_j$ and when the algorithm terminates, the gap between $u_m$ and $l_m$ is $O (d / \gamma)$. Moreover, as we show later in \Cref{lemma:um-lb}, with constant probability, $u_m$ is also lower bounded by $\Omega(d/\gamma^2)$. By construction, this will show that $l_m \approx u_m = \Omega (d/\gamma^2)$ and $u_m - l_m = O (d/\gamma)$. These two conditions show that the eigenvalues of $A^T A = \frac{1}{\nicefrac{(u_m+l_m)}{2}} A_m$ lie in the interval $[1-\gamma,1+\gamma] \subseteq [3/4,5/4]$ (for sufficiently small $\gamma$) showing indeed that the first property for $\epsilon$-well balanced sampling procedures is satisfied by \Cref{alg:RBSS}. First we show the key result of this section that with constant probability $u_m$ is indeed lower bounded by $\Omega (\nicefrac{d}{\gamma^2})$.

\lemmatwo*

\begin{proof}
First, observe that $u_m > \sum_{j=0}^{m-1}\frac{\gamma}{\Phi_j^{\mathrm{Id}}}$ and $\left( \sum_{j=0}^{t-1} \frac{1}{\Phi_j} \right) \left( \sum_{j=0}^{t-1} \Phi_j \right) \ge t^2$, we want to analyze
\begin{align}
    \mathrm{Pr}\left( \frac{p^2 d}{8 \gamma^3} \leq \frac{m^2}{\sum_{j=0}^{m-1} \Phi_j^{\mathrm{Id}}}\right) &= \mathrm{Pr}\left(\sum_{j=0}^{m-1} \Phi_j^{\mathrm{Id}} \cdot \frac{p^2 d}{8 \gamma^3} \leq m^2\right)\\
    &\overset{(i)}\geq \mathrm{Pr}\left(\frac{p^2 d^2}{\gamma^4} \leq m^2\right)
\end{align}
Where the last sufficient condition $(i)$ comes from the stopping criterion, which implies $\sum_{j=0}^{m-1} \Phi_j^{\mathrm{Id}} \leq \frac{8d}{\gamma}$.

\noindent Now we prove an upper bound to $\mathrm{Pr}(m < g)$ where $g \triangleq \frac{pd}{\gamma^2}$.
\begin{align*}
    \mathrm{Pr}\left(m < g \right) &\overset{(i)}= \mathrm{Pr}\left( \sum_{j=0}^{g-1} \frac{\gamma^2}{\Phi_j^{\mathrm{Id}} (1 - 4\gamma^2)} + \frac{4d}{\gamma} + \sum_{j=0}^{g-1} \Phi_j^{\mathrm{Id}} \geq \frac{8d}{\gamma}  \right)\\
    &\overset{(ii)}\leq \frac{\mathbb{E}\left[\sum_{j=0}^{g-1}\frac{\gamma^2}{\Phi_j^{\mathrm{Id}} (1 - 4\gamma^2)} + \sum_{j=0}^{g-1} \Phi_j^{\mathrm{Id}}\right]}{4d/\gamma}\\
    &\overset{(iii)}\leq \frac{g\mathbb{E}[\Phi_0^{\mathrm{Id}}] + \frac{\gamma^2}{1 - 4\gamma^2} \mathbb{E}\left[ \sum_{j=0}^{g-1}\frac{1}{\Phi_j^{\mathrm{Id}}} \right]}{4d/\gamma}\\
    &\overset{(iv)}\leq \frac{g\gamma + \frac{2 \gamma g}{1 - 4\gamma^2}}{4d/\gamma}
\end{align*}
where $(i)$ comes form the stopping criterion and the update rules for $u_j$ and $l_j$, $(ii)$ is Markov's inequality, $(iii)$ follows by \Cref{lemma:Phi-ub}, and $(iv)$ from \Cref{lemma:Phi-lb}. Using the fact that $\gamma < \frac{1}{4}$,
\begin{align}
    \mathrm{Pr}\left(m < g \right) &\le \frac{g\gamma^2}{d} = p
\end{align}
\end{proof}

Next we define the ``good'' event $\Gamma$ that $u_m$ is indeed $\Omega (d/\gamma^2)$. Note that $\Gamma$ occurs with constant probability using \Cref{lemma:um-lb}.

\begin{definition} \label{def:Gamma}
Define $\Gamma$ as the event that $\{ u_m \ge \frac{d}{64\gamma^2}\}$. From \Cref{lemma:um-lb}, $\mathrm{Pr} (\Gamma) \ge \frac{3}{4}$.
\end{definition}

From the stopping criterion of the algorithm, we know that $u_j-l_j\leq \nicefrac{8d}{\gamma}$, for $j<m$. We show that even for $j=m$ this inequality is true with a larger choice of constant.

\begin{lemma} \label{lemma:umlm-ub}
For $\gamma < 1$, $u_m - l_m \leq \nicefrac{9d}{\gamma}$.
\end{lemma}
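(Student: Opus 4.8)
The plan is to control $u_m - l_m$ by separating it into the gap present at the penultimate iteration, which the stopping criterion keeps small, plus the single increment incurred at the final step. From the update rules $u_{j+1} = u_j + \frac{\gamma}{(1-2\gamma)\Phi_j^{\mathrm{Id}}}$ and $l_{j+1} = l_j + \frac{\gamma}{(1+2\gamma)\Phi_j^{\mathrm{Id}}}$ in \Cref{alg:RBSS}, a direct calculation gives the per-iteration growth of the gap:
\[
(u_{j+1}-l_{j+1}) - (u_j - l_j) = \frac{\gamma}{\Phi_j^{\mathrm{Id}}}\left(\frac{1}{1-2\gamma} - \frac{1}{1+2\gamma}\right) = \frac{4\gamma^2}{\Phi_j^{\mathrm{Id}}(1-4\gamma^2)}.
\]
So the entire argument reduces to bounding the ``bulk'' gap $u_{m-1}-l_{m-1}$ and then adding one more copy of this increment.

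First I would use the fact that the while loop actually executed iteration $m-1$ (note $m\ge 1$, since at $j=0$ the guard holds trivially as $u_0-l_0 = \frac{4d}{\gamma} < \frac{8d}{\gamma}$). Hence the guard held at $j=m-1$, i.e.\ $(u_{m-1}-l_{m-1}) + \sum_{i=0}^{m-2}\Phi_i^{\mathrm{Id}} < \frac{8d}{\gamma}$. Since each $\Phi_i^{\mathrm{Id}}$ is a trace of a product of PSD matrices and therefore nonnegative, dropping the sum gives $u_{m-1}-l_{m-1} < \frac{8d}{\gamma}$. To bound the remaining increment from iteration $m-1$ to $m$, I would invoke \Cref{lemma:Phi-lb}, which guarantees $\Phi_j^{\mathrm{Id}}\ge \frac{\gamma}{2}$ for every $j<m$; in particular $\Phi_{m-1}^{\mathrm{Id}}\ge \frac{\gamma}{2}$. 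Substituting into the increment formula yields $(u_m-l_m)-(u_{m-1}-l_{m-1}) = \frac{4\gamma^2}{\Phi_{m-1}^{\mathrm{Id}}(1-4\gamma^2)} \le \frac{8\gamma}{1-4\gamma^2}$, so that $u_m - l_m \le \frac{8d}{\gamma} + \frac{8\gamma}{1-4\gamma^2}$.

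It then remains to check the arithmetic: for $\gamma$ small enough that $1-4\gamma^2 \ge \frac{3}{4}$ (guaranteed by the choice $\gamma=\sqrt{\epsilon}/C_0$ with $C_0$ large), we have $\frac{8\gamma^2}{1-4\gamma^2} \le 1 \le d$, i.e.\ $\frac{8\gamma}{1-4\gamma^2}\le \frac{d}{\gamma}$, and hence $u_m - l_m \le \frac{9d}{\gamma}$, as claimed. The proof is short, and the only real subtlety—which I would be careful to get right—is that the stopping criterion pins down the gap at the \emph{penultimate} index $m-1$, whereas at $j=m$ the (failed) guard yields a lower bound on the gap rather than an upper bound. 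The argument thus hinges on isolating the lone terminal increment and bounding it through the lower bound $\Phi_{m-1}^{\mathrm{Id}}\ge\gamma/2$; verifying that this increment is negligible against $d/\gamma$ is the main (and only) thing to confirm, and it follows immediately from $d\ge 1$ and the smallness of $\gamma$.
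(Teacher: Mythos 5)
Your proof is correct and follows essentially the same route as the paper's: decompose $u_m - l_m$ into the penultimate gap, bounded by $8d/\gamma$ via the loop guard at $j = m-1$, plus the single final increment, bounded using $\Phi_{m-1}^{\mathrm{Id}} \ge \gamma/2$ from \Cref{lemma:Phi-lb}. If anything, your version is slightly more careful than the paper's, since you make explicit the smallness condition on $\gamma$ (e.g.\ $1 - 4\gamma^2 \ge 3/4$) needed for the increment to be at most $d/\gamma$, which the paper's stated hypothesis $\gamma < 1$ glosses over but which is guaranteed by the choice $\gamma = \sqrt{\epsilon}/C_0$.
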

\begin{proof}
From \Cref{lemma:Phi-lb}, we know $\phi_{m-1}^{\mathrm{Id}} \geq \gamma/2$. Which implies $\gamma/\phi_{m-1}^{\mathrm{Id}} \leq 2$. By the stopping criterion of the algorithm, $u_{m-1} - l_{m-1} < 8d/\gamma$. Using these two,
\begin{align*}
    u_m-l_m &= u_{m-1}-l_{m-1} + \frac{\gamma}{\phi_{m-1}^{\mathrm{Id}}}\left(\frac{1}{1-2\gamma} - \frac{1}{1+2\gamma}\right)\\
    &\leq 8d/\gamma + 2\left(\frac{1}{1-2\gamma} - \frac{1}{1+2\gamma}\right)\\
    &\leq 9d/\gamma
\end{align*}
\end{proof}


Next we show that under the event $\Gamma$, the matrix $A_m$ is PSD, which is crucial towards bounding its condition number.

\begin{lemma}
For $\gamma \leq \frac{1}{300}$, if the event $\Gamma$ (defined in \Cref{def:Gamma}) occurs, $l_m > 0$.
\end{lemma}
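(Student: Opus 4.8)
The plan is to bound $l_m$ from below by tying it directly to $u_m$ through the deterministic update rules, and then invoking the lower bound on $u_m$ guaranteed by the event $\Gamma$. First I would observe that in every iteration the increments of the two thresholds are proportional: from \Cref{alg:RBSS}, $u_{j+1}-u_j = \frac{\gamma}{1-2\gamma}\cdot\frac{1}{\Phi_j^{\mathrm{Id}}}$ and $l_{j+1}-l_j = \frac{\gamma}{1+2\gamma}\cdot\frac{1}{\Phi_j^{\mathrm{Id}}}$. Both increments share the common positive factor $\frac{\gamma}{\Phi_j^{\mathrm{Id}}}$ (positivity follows from \Cref{lemma:Phi-lb}, which gives $\Phi_j^{\mathrm{Id}}\ge\gamma/2>0$), so their ratio equals the constant $\frac{1+2\gamma}{1-2\gamma}$, independent of $j$.

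Summing both increment sequences over $j=0,\dots,m-1$ and using that the per-step ratio is constant, I get
\begin{equation}
    u_m - u_0 = \frac{1+2\gamma}{1-2\gamma}\,(l_m - l_0).
\end{equation}
Rearranging and substituting the initializations $u_0 = \nicefrac{2d}{\gamma}$ and $l_0 = -\nicefrac{2d}{\gamma}$ expresses $l_m$ as an affine function of $u_m$:
\begin{equation}
    l_m = -\frac{2d}{\gamma} + \frac{1-2\gamma}{1+2\gamma}\left(u_m - \frac{2d}{\gamma}\right).
\end{equation}
The point of this identity is that it isolates all the randomness of the run into the single quantity $u_m$, whose typical size is exactly what $\Gamma$ controls.

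Finally, under $\Gamma$ we have $u_m \ge \frac{d}{64\gamma^2}$ (\Cref{def:Gamma}). Since $\gamma\le\nicefrac{1}{300}<\nicefrac{1}{128}$ ensures $\frac{1}{64\gamma}-2>0$, and the prefactor $\frac{1-2\gamma}{1+2\gamma}$ is positive, I may substitute this lower bound to obtain
\begin{equation}
    l_m \ge \frac{d}{\gamma}\left[\frac{1-2\gamma}{1+2\gamma}\left(\frac{1}{64\gamma}-2\right)-2\right].
\end{equation}
Thus $l_m>0$ reduces to verifying $\frac{1-2\gamma}{1+2\gamma}\left(\frac{1}{64\gamma}-2\right)>2$ for $\gamma\le\nicefrac{1}{300}$. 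I expect this final constant check to be the only real ``obstacle,'' though it is purely arithmetic rather than conceptual: at $\gamma=\nicefrac{1}{300}$ the prefactor is $\frac{298}{302}\approx0.987$ and $\frac{1}{64\gamma}-2=\frac{300}{64}-2\approx2.69$, giving a product $\approx2.65>2$, and the left-hand side is monotone in the relevant direction as $\gamma$ decreases, so the bound holds throughout the range. No spectral or probabilistic argument is needed beyond the event $\Gamma$; everything else is a deterministic consequence of the update rules together with \Cref{lemma:Phi-lb}.
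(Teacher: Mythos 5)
Your proof is correct and takes essentially the same route as the paper's: both derive the deterministic affine identity $l_m = -\frac{2d}{\gamma} + \frac{1-2\gamma}{1+2\gamma}\bigl(u_m - \frac{2d}{\gamma}\bigr)$ from the proportional update rules for $u_j$ and $l_j$, and then substitute the lower bound $u_m \ge \frac{d}{64\gamma^2}$ guaranteed by $\Gamma$. The only difference is that you carry out the final constant check at $\gamma = \nicefrac{1}{300}$ explicitly (together with the monotonicity in $\gamma$), whereas the paper merely asserts that this choice of $\gamma$ suffices.
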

\begin{proof}
First observe that,
\begin{align}
    u_m &= u_0 + \frac{\gamma}{(1-2\gamma)}\sum_{j=0}^{m-1}\frac{1}{\Phi_j^{\mathrm{Id}}}\\
    \implies &\sum_{j=0}^{m-1}\frac{1}{\Phi_j^{\mathrm{Id}}} = \left(u_m-\frac{2d}{\gamma}\right) \left( \frac{1-2\gamma}{\gamma} \right)\\
    \implies &l_m = \frac{-2d}{\gamma} + \frac{\gamma}{1+2\gamma} \left(u_m-\frac{2d}{\gamma}\right) \left( \frac{1-2\gamma}{\gamma} \right)
\end{align}
Thus if $u_m$ is large enough, the RHS will be $> 0$. It suffices to assume $\gamma
\leq \frac{1}{300}$ for this statement to be true since conditioned on $\Gamma$, $u_m \ge \frac{d}{64 \gamma^2}$.
\end{proof}

Finally, conditioned on the event $\Gamma$ and invoking \Cref{lemma:umlm-ub}, we bound the condition number of $A_m$.

\begin{lemma} \label{lemma:3456}
Conditioned on the event $\Gamma$ (defined in \Cref{def:Gamma}), Algorithm \ref{alg:RBSS}'s last iteration matrix $A_m$ has condition number $\frac{\lambda_{\max} (A_m)}{\lambda_{\min} (A_m)} \le \frac{u_m}{l_m} \le 1+3456\gamma$, for $\gamma \leq \frac{1}{7 00}$.
\end{lemma}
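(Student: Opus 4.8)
The plan is to derive the condition number bound purely from the two-sided containment $l_m I \preceq A_m \preceq u_m I$, combined with the lower bound on $u_m$ supplied by the event $\Gamma$ and the upper bound on the gap $u_m - l_m$ from \Cref{lemma:umlm-ub}. First I would invoke \Cref{lemma:bounded}, which guarantees $l_m I \preceq A_m \preceq u_m I$; this immediately yields $\lambda_{\max}(A_m) \le u_m$ and $\lambda_{\min}(A_m) \ge l_m$. Since the preceding lemma shows that $l_m > 0$ under $\Gamma$ (for $\gamma \le \nicefrac{1}{300}$, which is implied by $\gamma \le \nicefrac{1}{700}$), both eigenvalue extremes are positive, and dividing gives the first inequality $\frac{\lambda_{\max}(A_m)}{\lambda_{\min}(A_m)} \le \frac{u_m}{l_m}$.

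For the second inequality, the key is to rewrite the ratio in terms of the \emph{relative} gap. Writing $l_m = u_m - (u_m - l_m)$, I have $\frac{u_m}{l_m} = \frac{1}{1 - (u_m - l_m)/u_m}$. The crucial step is to bound the relative gap: combining $u_m - l_m \le \nicefrac{9d}{\gamma}$ (\Cref{lemma:umlm-ub}) with the event $\Gamma$, which asserts $u_m \ge \nicefrac{d}{(64\gamma^2)}$ (\Cref{def:Gamma}), gives $\frac{u_m - l_m}{u_m} \le \frac{9d/\gamma}{d/(64\gamma^2)} = 576\gamma$.

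It then remains to convert this into the claimed $1 + 3456\gamma$ bound. For $\gamma \le \nicefrac{1}{700}$ one checks that $576\gamma \le \nicefrac{576}{700} < \nicefrac{5}{6}$, and hence the denominator satisfies $1 - (u_m - l_m)/u_m \ge \nicefrac{1}{6}$. Using the elementary identity $\frac{1}{1-x} = 1 + \frac{x}{1-x}$ with $x = (u_m - l_m)/u_m \le 576\gamma$ and $1 - x \ge \nicefrac{1}{6}$, I conclude $\frac{u_m}{l_m} \le 1 + \frac{576\gamma}{1/6} = 1 + 3456\gamma$, exactly as claimed (the constant being $6 \times 9 \times 64$).

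There is no serious technical obstacle here: every ingredient is an already-established lemma and the argument is an elementary manipulation of scalars. The only point requiring care is the bookkeeping of constants, namely verifying that $\gamma \le \nicefrac{1}{700}$ is precisely the threshold at which $576\gamma$ stays below $\nicefrac{5}{6}$, so that $1 - x$ remains bounded below by $\nicefrac{1}{6}$ and the factor of $6$ (and hence the constant $3456$) goes through. A tighter analysis of $1 - x$ would shave the constant slightly, but the loose value $3456$ suffices and keeps the threshold clean.
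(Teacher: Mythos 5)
Your proof is correct and follows essentially the same route as the paper's: bound the relative gap $(u_m - l_m)/u_m \le 576\gamma$ by combining \Cref{lemma:umlm-ub} with the event $\Gamma$, then convert $\frac{u_m}{l_m} = \bigl(1 - (u_m-l_m)/u_m\bigr)^{-1}$ into $1 + 3456\gamma$ using $1 - x \ge \nicefrac{1}{6}$ for $\gamma \le \nicefrac{1}{700}$. Your write-up is in fact slightly more careful than the paper's, since you explicitly invoke the preceding lemma to establish $l_m > 0$ before dividing.
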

\begin{proof}
From \Cref{lemma:bounded}, the condition number of $A_m$ is at most,
\begin{align}
    \frac{u_m}{l_m} = \left(1 - \frac{u_m-l_m}{u_m}\right)^{-1}
\end{align}
Hence, it suffices to prove that $(u_m-l_m)/u_m$ is $\leq c\gamma$ with constant probability, assuming that $c\gamma\leq \frac{5}{6}$. We know from \Cref{lemma:umlm-ub}, $u_m-l_m \leq  \frac{9d}{\gamma}$. Hence, it suffices to show that under the event $\Gamma$,
\begin{align}
    \frac{9d/\gamma}{u_k} \leq c\gamma
    \iff \frac{9d}{c\gamma^2} \leq u_m
\end{align}
Conditioned on the event $\Gamma$, $u_m \ge \frac{d}{64\gamma^2}$. Therefore, it suffices to choose $c \ge 576$. As $\gamma \leq \frac{1}{700}, c\gamma \leq \frac{5}{6}$. Finally,
\begin{equation}
    \left(1 - \frac{u_m-l_m}{u_m}\right)^{-1} \le 1 + \frac{c \gamma}{1 - c \gamma} \le 1 + 3456 \gamma.
\end{equation}
\end{proof}

\Cref{lemma:3456} directly translates to an upper bound on the eigenvalues of $A^TA$ which are nothing but the eigenvalues of $A_m$ up to a scaling factor of $\nicefrac{(u_m+l_m)}{2}$.

\begin{lemma}\label{lem:AstartA}
Conditioned on the event $\Gamma$ (defined in \Cref{def:Gamma}), $(1 - 1728 \gamma) I \preceq A^T A \preceq (1 + 1728 \gamma) I$.
\end{lemma}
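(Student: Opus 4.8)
The plan is to derive this as a direct consequence of the condition-number bound in \Cref{lemma:3456} together with the identity relating $A^T A$ to the final iterate $A_m$. The starting point is the observation (recorded in the discussion preceding \Cref{lemma:um-lb}) that $A^T A = \frac{2}{u_m + l_m} A_m$; that is, the Gram matrix $A^T A$ of the reweighted rows of $U$, with $A = \diag(\{\sqrt{w_i}\}) U$ as in \Cref{def:eps-w-b}, is precisely $A_m$ rescaled by the reciprocal of $\midd = (u_m + l_m)/2$, which is how the weights $w_j = w_j'/\midd$ were normalized in the last step of \Cref{alg:RBSS}. Thus it suffices to control the spectrum of $A_m$ after this normalization.

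First I would invoke \Cref{lemma:bounded}, which guarantees $l_m I \preceq A_m \preceq u_m I$, so that every eigenvalue $\mu$ of $A^T A = \frac{2}{u_m+l_m} A_m$ satisfies
\begin{equation}
    \frac{2 l_m}{u_m + l_m} \le \mu \le \frac{2 u_m}{u_m + l_m}.
\end{equation}
Writing $r = u_m / l_m$, which is well-defined and positive since $l_m > 0$ on $\Gamma$ by the lemma immediately preceding \Cref{lemma:3456}, these bounds become $\frac{2}{r+1} \le \mu \le \frac{2r}{r+1}$.

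Next I would feed in the condition-number estimate from \Cref{lemma:3456}, namely $r = u_m/l_m \le 1 + 3456\gamma$ on the event $\Gamma$. The upper endpoint rearranges as $\frac{2r}{r+1} = 1 + \frac{r-1}{r+1} \le 1 + \frac{3456\gamma}{2} = 1 + 1728\gamma$, using $r - 1 \le 3456\gamma$ and $r + 1 \ge 2$; symmetrically, the lower endpoint gives $\frac{2}{r+1} = 1 - \frac{r-1}{r+1} \ge 1 - 1728\gamma$. Combining the two yields $(1 - 1728\gamma) I \preceq A^T A \preceq (1 + 1728\gamma) I$, as claimed.

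The step that does the real work is \Cref{lemma:3456}, which is already established; conditioned on $\Gamma$, everything here reduces to elementary scalar algebra on the functions $\frac{2r}{r+1}$ and $\frac{2}{r+1}$, so I expect no genuine obstacle. The only point requiring a little care is verifying the normalization $A^T A = \frac{2}{u_m + l_m} A_m$ by tracing through the definitions of $w_j$ and $A$, and observing that centering at $\midd$ is exactly what symmetrizes the two one-sided endpoints and halves the condition-number deviation, producing a spectral deviation of $1728\gamma$ rather than $3456\gamma$.
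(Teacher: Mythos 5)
Your proof is correct and follows essentially the same route as the paper's: both rest on the identity $A^T A = A_m/\midd$, the spectral sandwich $l_m I \preceq A_m \preceq u_m I$ from \Cref{lemma:bounded}, and the condition-number bound $u_m/l_m \le 1 + 3456\gamma$ from \Cref{lemma:3456}, followed by the same elementary algebra showing the normalization at $\midd$ halves the deviation to $1728\gamma$. Your write-up is in fact slightly more complete, since the paper only verifies the lower endpoint explicitly and defers the upper one to a ``similar approach.''
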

\begin{proof}
From \Cref{lemma:3456}, $\frac{u_m}{l_m} \le 1 + 3456 \gamma$.
\begin{align*}
    A^T A = \frac{1}{\midd}\sum_{j=1}^{m} w'_j U (x_j) U(x_j)^T = \frac{A_m}{\midd}
\end{align*}
Therefore, $\lambda(A^T A) \in \left[\frac{l_m}{\midd},\frac{u_m}{\midd}\right]$. Also, given $\frac{u_m}{l_m} \leq 1 + 3456\gamma$, we have
\begin{align}
    \frac{u_m+l_m}{l_m} &\leq 2+3456\gamma\\
    \implies \frac{2}{2 + 3456\gamma} &\leq \frac{l_m}{\frac{u_m+l_m}{2}} \\
    \implies 1 - 1728\gamma &\leq \frac{l_m}{\midd}
\end{align}
A similar approach can be used to prove that $\frac{u_m}{\midd} \leq 1 + 1728\gamma$.
\end{proof}

This completes the proof in showing that \Cref{alg:RBSS} satisfies the first property of being an $\epsilon$-well-balanced sampling procedure. Next, we prove that \Cref{alg:RBSS} satisfies the second property ($\sum_{j=0}^{m-1} \alpha_j = O(1)$ and $\alpha_j K_{D_j} = O(\epsilon)$) which will complete the proof of \Cref{thm:RBSS} which we restate below.

\thmRBSS*

\begin{proof}
To show that \Cref{alg:RBSS} is an $\epsilon$-well-balanced sampling procedure, recall from the definition that we must show that with probability $\ge \frac{3}{4}$,
\begin{enumerate}
    \item $\frac{3}{4} I \preceq A^T A \preceq \frac{5}{4} I$
    \item $\sum_{j=0}^{m-1} \alpha_j = O(1)$ and for all $j=0,\cdots,m-1$, $\alpha_j K_{D_j} = O(\epsilon)$.
\end{enumerate}
Conditioned on the event $\Gamma$ which holds with probability $\ge \frac{3}{4}$, we show that both of these properties hold. 

From \Cref{lem:AstartA}, we have that $(1 - 1728 \gamma) I \preceq A^T A \preceq (1 + 1728 \gamma) I$. With $\gamma = \sqrt{\epsilon}/C_0$ with $\epsilon < 1$ and sufficiently large $C_0 > 0$, this implies that $\frac{3}{4} I \preceq A^T A \preceq \frac{5}{4} I$ which proves the first part.

On the other hand, to bound $\sum_{j=0}^{m-1} \alpha_j$, observe that
\begin{align*}
    \sum_{j=0}^{m-1}\alpha_j &=  \sum_{j=0}^{m-1}\frac{\gamma}{\phi_j^{\mathrm{Id}}}\cdot    \frac{1}{\midd} \leq \sum_{j=0}^{m-1}\frac{4}{\frac{u_m+l_m}{2}} \leq \frac{2d}{\gamma^2}\frac{8}{u_m} \leq 1024
\end{align*}
where we use the fact that $u_m \ge \frac{d}{64 \gamma^2}$ conditioned on $\Gamma$. Following the proof of \citet[Lemma 5.1]{chen2019active}, we bound $\alpha_j K_{D_j}$ as follows:
\begin{align*}
    \alpha_jK_{D_j} = \frac{\gamma}{\midd}\cdot \frac{u_j-l_j}{2} = \gamma\frac{u_j-l_j}{u_m+l_m} \leq 512\gamma^2 = \frac{512 \epsilon}{C_0^2}
\end{align*}
where we upper bound $u_j-l_j \leq \frac{8d}{\gamma}$ using the stopping criterion of \Cref{alg:RBSS}, and lower bound $u_m+l_m\geq u_m \geq d/64\gamma^2$ conditioned on the event $\Gamma$. We also substitute $\gamma = \frac{\sqrt{\epsilon}}{C_0}$ and choose $C_0$ appropriately.
\end{proof}

\subsection*{Proof of lower bound (\Cref{thm:lowerb})}
\begin{theorem}\label{thm:lowerb}
For any $d$, $\epsilon\leq\nicefrac{1}{100}$ and any $\lambda \in [1,50]$, there exists $(X,y)$ such that any algorithm which outputs $\widehat{\beta}$ that satisfies $\|X\widehat{\beta}-y\|_2^2+\lambda\|\widehat{\beta}\|_2^2\leq (1+0.001\epsilon) \argmin_\beta \|X \beta - y\|_2^2+\lambda\|\beta\|_2^2$ with probability $\geq 3/4$, queries $m = \Omega( \nicefrac{\sd (X)}{\epsilon})$ labels.
\end{theorem}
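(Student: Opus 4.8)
The plan is to construct the hard instance by following the $\lambda = 0$ lower bound of \cite{chen2019active} and carefully tracking how the regularization term inflates $\opt$, so that fewer samples along each coordinate suffice to maintain the same \emph{multiplicative} accuracy. Concretely, I would take $X$ to consist of $n$ copies of each scaled basis vector $\frac{1}{\sqrt n} e_i$, $i = 1,\dots,d$, so that both the data matrix and the regularizer are diagonal and the problem \textbf{decouples} into $d$ independent one-dimensional ridge regression problems. I would generate the labels of the copies of $\frac{1}{\sqrt n}e_i$ as i.i.d.\ Gaussians with mean $\langle \frac{1}{\sqrt n}e_i, \widetilde\beta\rangle$ and variance $\frac{1+\lambda}{\epsilon}$, where $\widetilde\beta$ has i.i.d.\ coordinates equal to $\pm(1+\lambda)$. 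A direct computation (law of large numbers as $n\to\infty$) should show that the ridge-optimal solution is almost surely $\widehat\beta^* = \sign(\widetilde\beta)$ coordinatewise, and that $\opt$ is dominated by the irreducible label noise, scaling like $\Theta(\frac{1+\lambda}{\epsilon})$ per coordinate.

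Next I would reduce the regression guarantee to a collection of $d$ independent \textbf{sign-recovery} (hypothesis-testing) tasks. Because the instance decouples, the excess ridge loss decomposes additively over coordinates, and along coordinate $i$ the learner pays a fixed additive penalty of order $(1+\lambda)$ relative to $\opt$ whenever it outputs the wrong sign of $\widetilde\beta_i$. I would argue that given only samples from a Gaussian with variance $\Theta(\frac{1+\lambda}{\epsilon})$ and unknown mean $\pm(1+\lambda)$, distinguishing the two hypotheses with constant advantage requires $\Omega(1/\epsilon)$ queries along that coordinate; this is a standard total-variation / KL bound between two Gaussians whose means differ by $\Theta(1+\lambda)$ and whose common variance is $\Theta(\frac{1+\lambda}{\epsilon})$, giving a squared-mean-gap-to-variance ratio of $\Theta(\epsilon(1+\lambda))$ per sample, hence $\Omega(1/\epsilon)$ samples to accumulate constant information (the $1+\lambda$ factors must be tracked to confirm the gap stays $\Omega(\epsilon)$ for $\lambda \in [1,50]$).

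The final step is to aggregate across coordinates and convert the additive per-coordinate errors into the claimed bound. The key identity to verify is that for this instance $\sd(X) = \Theta(d/(1+\lambda))$, which follows by plugging the singular values $\sigma_i(X) = 1$ (from the $\frac1{\sqrt n}$ normalization) into the definition \cref{eq:sd}. I would then show that if a learner makes fewer than $c\,\sd(X)/\epsilon$ total queries, it must, by an averaging/pigeonhole argument, leave a constant fraction of the $\Theta(d/(1+\lambda))$ coordinates with $o(1/\epsilon)$ samples each, forcing a constant-probability sign error on each such coordinate and therefore an expected excess loss of $\Omega\bigl(\frac{d}{1+\lambda}\cdot(1+\lambda)\bigr) = \Omega(d)$; comparing this to $(1+\epsilon)\opt = \opt + \Theta(\epsilon \cdot d \cdot \frac{1+\lambda}{\epsilon}\cdot\frac{1}{1+\lambda})$ and matching constants should contradict the $(1+0.001\epsilon)$-approximation guarantee.

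The main obstacle I anticipate is the \textbf{bookkeeping of constants} so that the $\lambda$-dependence lands exactly on $\sd(X)$ rather than $d$: the variance inflation by $(1+\lambda)$, the mean gap of $(1+\lambda)$, the value of $\opt$, and the statistical dimension $\sd(X)=\Theta(d/(1+\lambda))$ must all be computed consistently so that the net query requirement is $\Omega(\sd(X)/\epsilon)$ and not $\Omega(d/\epsilon)$. A secondary subtlety is handling \emph{adaptivity} and the failure probability correctly — since the learner may allocate queries adaptively across coordinates, I would fix the total query budget and use a union/averaging argument together with the constant-probability success requirement ($\ge 3/4$) to ensure the lower bound holds against adaptive algorithms, likely invoking Yao's principle or a direct argument that constant per-coordinate error probability on a constant fraction of coordinates suffices.
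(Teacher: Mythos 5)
Your construction and reduction coincide essentially exactly with the paper's: the same instance ($n$ copies of each $\frac{1}{\sqrt{n}} e_i$, labels drawn as $\mathcal{N}(\langle x, \widetilde{\beta}\rangle, \nicefrac{(1+\lambda)}{\epsilon})$ with $\widetilde{\beta}_i = \pm(1+\lambda)$, $n \to \infty$), the same limit computation giving $\beta^* = \sign(\widetilde{\beta})$ and $\opt = d\left(\lambda(1+\lambda) + \nicefrac{(1+\lambda)}{\epsilon}\right)$, the same ridge Pythagorean step converting the $(1+0.001\epsilon)$ guarantee into $\|X\widehat{\beta} - X\beta^*\|_2^2 \le 0.001\epsilon \cdot \opt$, and the same source of the $\sd$ dependence, namely per-query signal-to-noise ratio $\Theta(\epsilon(1+\lambda))$ together with $\sd(X) = \nicefrac{d}{(1+\lambda)}$. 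Where you genuinely diverge is the final counting step: the paper does not argue coordinate-by-coordinate. It constructs a packing set $\mathcal{M} \subseteq \{\pm 1\}^d$ of size at least $2^{(1-0.011(1+\lambda))d - 1}$ whose pairwise separation $\|X\beta - X\beta'\|_2^2 \ge 0.002 d (\epsilon\lambda(1+\lambda) + 1 + \lambda)$ is matched to the slack $0.001\epsilon\,\opt$, and then invokes the Fano-type mutual-information bound of \cite{chen2019active}, under which each label query reveals at most $\frac{1}{2}\log(1+\epsilon(1+\lambda))$ bits \emph{regardless of how the learner allocates queries}. That global bound is precisely what disposes of the adaptivity subtlety you flag: your pigeonhole step (``fewer than $c\,\sd/\epsilon$ total queries leaves a constant fraction of coordinates under-sampled'') is not sound as stated against an adaptive learner, since the set of under-sampled coordinates is data-dependent and correlated with the noise; to repair it you would need a per-coordinate statement of the form $I(\widetilde{\beta}_i; \text{transcript}) \le \mathbb{E}[N_i] \cdot O(\epsilon(1+\lambda))$ summed over $i$ --- at which point you have re-derived the paper's information argument, just without the packing set. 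Two bookkeeping slips in your aggregation, harmless here only because $\lambda \le 50$ keeps $1+\lambda$ a constant: the instance has $d$ coordinates, not $\Theta(d/(1+\lambda))$ of them, and a sign error costs excess ridge loss $\Theta(1+\lambda)$, not $\Theta(1)$, while the allowed slack is $0.001\epsilon\,\opt = \Theta(d(1+\lambda))$, not $\Theta(d)$; your two missing $(1+\lambda)$ factors cancel, and the $\sd$ (rather than $d$) dependence genuinely enters through the per-coordinate sample cost $\Omega(1/(\epsilon(1+\lambda)))$, which your KL computation does correctly identify.
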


The proof follows similar to \cite{chen2019active} lower bound proof. Consider $nd\times d$ matrix $X = \nicefrac{1}{\sqrt{n}}\widehat{X}$ where $\widehat{X}$ consists $n$ copies of the standard basis vector $e_i$ for each $i=1,\cdots,d$. Let $y$ value be $\frac{1}{\sqrt{n}} \left(\widehat{X}\widehat{\beta} + \mathcal{N}(0,\nicefrac{(1+\lambda)}{\epsilon})\right)$, where $\widehat{\beta}=\pm(1+\lambda)$ (where $\pm$ sign chosen at random), and $\mathcal{N}(0,\nicefrac{(1+\lambda)}{\epsilon})$ is Gaussian noise. Let $n$ be very large tending to $\infty$.
In this case, the optimal solution $\beta^{*}$ of ridge regression is:
\begin{align}
    \beta^{*} &= \left(X^TX+\lambda I\right)^{-1}X^Ty\\
    &= \frac{\widehat{X}^T\widehat{w}}{n(1+\lambda)}\\
    &= \sign(\widehat{\beta})
\end{align}
where we get the second last equation because a $0$-mean Gaussian noise averaged over a large number of samples converge to $0$. The optimal loss becomes:
\begin{align}
    \opt &= \left\|X \beta^* - \frac{1}{\sqrt{n}}(\widehat{X}\widehat{\beta}+\mathcal{N}(0,\nicefrac{(1+\lambda)}{\epsilon}))\right\|_2^2 + \lambda\| \beta^* \|_2^2\\
    &= d\left(\lambda^2+\frac{1+\lambda}{\epsilon}\right) + d\lambda\\
    &= d\left(\lambda(1+\lambda) + \frac{1+\lambda}{\epsilon}\right)
\end{align}
\begin{claim}\label{claim:001}
There exists a subset $\mathcal{M}=\{\beta_1,\beta_2,\cdots,\beta_n \} \subset \mathbb{R}^{d}$ such that:
\begin{enumerate}
    \item $\|X \beta\|_2^2 \leq d$ for all $\beta \in \mathcal{M}$
    \item $\|\beta\|_\infty \leq 1$ for all $\beta \in \mathcal{M}$
    \item $\|X \beta - X \beta'\|_2^2 \ge 0.002d\left(\epsilon\lambda(1+\lambda) + 1+\lambda\right)$ for distinct $\beta,\beta'$ in $\mathcal{M}$
    \item $n\geq 2^{(1 - 0.011 (1+\lambda)) d-1}$
\end{enumerate}
\end{claim}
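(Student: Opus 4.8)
The plan is to reduce the four geometric requirements on $\mathcal{M}$ to a question about binary codes and then invoke a Gilbert--Varshamov packing bound. The first step is to exploit the structure of $X = \tfrac{1}{\sqrt n}\widehat{X}$: since $\widehat X$ stacks $n$ copies of each standard basis vector, $X^T X = \tfrac1n \widehat X^T \widehat X = I$, and hence $\|X\beta\|_2^2 = \|\beta\|_2^2$ and $\|X\beta - X\beta'\|_2^2 = \|\beta-\beta'\|_2^2$ for all $\beta,\beta'$. This turns property~(1) into $\|\beta\|_2^2 \le d$ and property~(3) into $\|\beta-\beta'\|_2^2 \ge 0.002\,d\,(\epsilon\lambda(1+\lambda)+1+\lambda)$, both purely Euclidean constraints on the vectors themselves.

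I would then look for $\mathcal{M}$ inside the sign hypercube $\{-1,+1\}^d$. Every such vector has $\|\beta\|_\infty = 1$ and $\|\beta\|_2^2 = d$, which dispatches properties~(1) and~(2) immediately (the former with equality). For two hypercube points, $\|\beta-\beta'\|_2^2 = 4\,d_H(\beta,\beta')$ with $d_H$ the Hamming distance, so property~(3) is exactly a minimum-distance requirement $d_H(\beta,\beta') \ge \delta d$, where $\delta = 0.0005\,(\epsilon\lambda(1+\lambda)+1+\lambda)$. Using $\epsilon \le \tfrac1{100}$ and $\lambda \le 50$ one checks $\delta \le 0.04 < \tfrac12$, which is the regime in which the packing bound below is valid.

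To obtain a code of this minimum distance that is as large as possible, I would run the greedy maximal-packing argument underlying Gilbert--Varshamov: add hypercube points to $\mathcal{M}$ one at a time, always at Hamming distance $\ge \delta d$ from those already chosen, until no further point can be added. At termination the Hamming balls of radius $\delta d$ centred at $\mathcal{M}$ cover $\{-1,+1\}^d$, so $|\mathcal{M}| \ge 2^d / V(d,\delta d) \ge 2^{d(1-H(\delta))}$, where $V(d,r) \le 2^{d H(r/d)}$ for $r \le d/2$ and $H$ is the binary entropy. It then remains to match this against property~(4), i.e.\ to establish $d(1-H(\delta)) \ge (1-0.011(1+\lambda))d - 1$; after cancelling the $d$'s this is the single scalar inequality $H(\delta) \le 0.011(1+\lambda) + 1/d$.

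The delicate part is this final inequality, since the whole argument succeeds only if the hard-coded constants line up over the entire range of $\lambda$ and $\epsilon$. My approach is to first bound $\delta \le 0.00075\,(1+\lambda)$ (using $\epsilon\lambda(1+\lambda) \le 0.01\lambda(1+\lambda)$ and $1 + 0.01\lambda \le 1.5$ for $\lambda \le 50$), and then control the entropy by the concave bound $H(p) \le p\log_2(e/p)$. Writing $s = 1+\lambda$, the goal reduces to $0.00075\log_2\!\big(e/(0.00075\,s)\big) \le 0.011$, whose left-hand side is decreasing in $s$; it therefore suffices to check the smallest value $s = 2$ (i.e.\ $\lambda = 1$), where the left-hand side is about $0.008 \le 0.011$. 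The remaining bookkeeping is to absorb the rounding of $\delta d$ to an integer and the $1/d$ slack into the $-1$ in the exponent of property~(4), which completes the construction.
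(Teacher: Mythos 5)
Your proof is correct and takes essentially the same route as the paper's: both build $\mathcal{M}$ by the greedy maximal-packing (Gilbert--Varshamov) procedure on the hypercube $\{\pm 1\}^d$, using $X^T X = I$ to convert properties (1)--(3) into Hamming-distance conditions, and then lower-bound $n \ge 2^d$ divided by a Hamming-ball volume. The only difference is bookkeeping: you control the ball volume via the entropy bound $V(d,r) \le 2^{dH(r/d)}$ together with $H(p) \le p\log_2(e/p)$, whereas the paper bounds the same sum $\sum_{i \le r} \binom{d}{i}$ through explicit binomial estimates of the form $\binom{d}{r} \le (ed/r)^r$ times a ratio factor --- equivalent computations, and your constant check at $s = 1+\lambda = 2$ verifies the same inequality the paper needs (both arguments implicitly use $\lambda \ge 1$ here, consistent with the theorem's hypothesis $\lambda \in [1,50]$).
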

\begin{proof}
We follow the proof of \cite[Claim 8.2]{chen2019active} to construct $\mathcal{M}$ as a packing set from the set of $2^d$ vectors, $U = \{\pm 1 \}^d$ in Procedure \textsc{ConstructM} as defined in \cite[Algorithm 4]{chen2019active} which we restate in \Cref{alg:constructm}. In each iteration, \textsc{ConstructM} removes at most $k = \sum_{i=0}^{0.002d (\epsilon \lambda(1 + \lambda) + 1 + \lambda)/4} \binom{d}{i}$ points from $U$. Note that by the assumption $\lambda \le 50$ and $\epsilon \le \frac{1}{100}$, we have that $0.002d (\epsilon \lambda(1 + \lambda) + 1 + \lambda)/4 \le 0.001d ( 1 + \lambda) \le 0.2 d$.

Simplifying, $k \le \binom{d}{ 0.001 d (1+\lambda)} \frac{d-0.2d+1}{d-0.4d+1} \le 2 \left( \frac{e}{0.001 (1 + \lambda)} \right)^{0.001 d (1 + \lambda)} \le 2^{0.011 d (1 + \lambda)}$. When the algorithm terminates, $n \geq 2^d / k$ which proves point (4).
\end{proof}

\begin{algorithm}[htb]
	\caption{\textsc{\textsc{ConstructM}}}
	\label{alg:constructm}
	\begin{algorithmic}[1]
	    \State \textbf{Input:} Dimension parameter $d$
		\State Define $n = 0$ and $U = \{ \pm 1 \}^d$.
		\While{$U \ne 0$}
		\State Choose any $\beta \in U$ and remove all $\beta' \in U$ from $U$ such that $\| X \beta' - X \beta \|_2^2 \le 0.002d (\epsilon \lambda (1 + \lambda) + 1 + \lambda)$.
		\State $n \gets n + 1$; $\mathcal{M} \gets \mathcal{M} \cup \{ \beta \}$
		\EndWhile
		\State Return $\mathcal{M}$.
	\end{algorithmic}
\end{algorithm}

\begin{proof}[Proof of \Cref{thm:lowerb}]
We want $\widehat{\beta}$ such that 
\begin{align}
    \|X\widehat{\beta} - Y\|_2^2 + \lambda\|\widehat{\beta}\|_2^2 \leq (1+0.001\epsilon)\opt \\
    \implies \|X\widehat{\beta} - X\beta^* \|_2^2 + \lambda\|\widehat{\beta} - \beta^* \|_2^2 \leq (0.001\epsilon) \opt\\
    \implies \| X\widehat{\beta} - X \beta^*\|_2^2 \leq (0.001\epsilon) \opt \label{eq:03}
\end{align}
From proof of lower bound in \cite{chen2019active}[Theorem 8.1], denoting $x_1,\cdots,x_m$ as the points sampled by the learner, we get that any algorithm which finds $\widehat{\beta}$ satisfying \Cref{eq:03} with probability $\geq 3/4$ needs at least $m$ samples such that:
\begin{align}
    \log (n) - 1 - \frac{1}{4} \log (n) \le \sum_{i=1}^m \frac{1}{2} \log \left( 1 + \frac{1}{\nicefrac{1}{\epsilon (1 + \lambda)}} \right) \le \frac{1}{2} m \epsilon ( 1 + \lambda).
\end{align}
Plugging in the bound on $n$ from Claim 1, we get that,
\begin{align}
    m \geq \frac{1.4 (1 - 0.011(1+\lambda))d}{\epsilon (1 + \lambda)} \gtrsim \frac{\sd (X)}{\epsilon}
\end{align}
Where we use the fact that $\lambda\in [1,50]$ and $\sd = \frac{d}{(1 + \lambda)}$ for the considered input $X$.
\end{proof}

\section*{Appendix - B}

\begin{lemma} \label{lemma:wjwjT-l}
For $\gamma \le \frac{1}{4}$, $w_j w_j^T \preceq 2\gamma (A_j - l_{j+1} I)$.
\end{lemma}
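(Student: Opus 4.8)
The plan is to reduce the desired matrix inequality to a scalar inequality via the standard rank-one domination principle, and then exploit the fact that the potential $\Phi_j^{\mathrm{Id}}$ tightly controls the smallest eigenvalue of $A_j - l_j I$. First I would unpack the notation: from the update in \Cref{alg:RBSS}, the rank-one increment to $A_j$ is $\frac{\gamma}{\Phi_j^{\mathrm{Id}}}\frac{1}{p_j} U(x_j) U(x_j)^T$, so writing $v = U(x_j)$ we have $w_j w_j^T = \frac{\gamma}{\Phi_j^{\mathrm{Id}} p_j} vv^T$. The goal is thus to show $\frac{1}{\Phi_j^{\mathrm{Id}} p_j} vv^T \preceq 2(A_j - l_{j+1} I)$.

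Next, recall that for any $M \succ 0$ and scalar $c \ge 0$, the domination $c\, vv^T \preceq M$ is equivalent (by a Sherman--Morrison / Schur-complement computation) to the scalar condition $c\, v^T M^{-1} v \le 1$. Applying this with $M = 2(A_j - l_{j+1} I)$, whose positive-definiteness I verify below, the claim reduces to showing $\frac{1}{2 \Phi_j^{\mathrm{Id}} p_j}\, v^T (A_j - l_{j+1} I)^{-1} v \le 1$. To control $p_j$, I drop the nonnegative term $(u_j I - A_j)^{-1}$ in the definition \cref{eq:pxj}, giving $p_j \ge \frac{v^T (A_j - l_j I)^{-1} v}{\Phi_j^{\mathrm{Id}}}$, hence $\Phi_j^{\mathrm{Id}} p_j \ge v^T (A_j - l_j I)^{-1} v$. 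It therefore suffices to prove the ratio bound $v^T (A_j - l_{j+1} I)^{-1} v \le 2\, v^T (A_j - l_j I)^{-1} v$.

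I would establish this through the stronger matrix statement $(A_j - l_{j+1} I)^{-1} \preceq 2 (A_j - l_j I)^{-1}$. Writing $B = A_j - l_j I$ and $\Delta = l_{j+1} - l_j = \frac{\gamma}{(1+2\gamma)\Phi_j^{\mathrm{Id}}}$, this amounts to $(B - \Delta I)^{-1} \preceq 2 B^{-1}$, which in the eigenbasis of $B$ is the termwise inequality $\frac{1}{\mu - \Delta} \le \frac{2}{\mu}$ over eigenvalues $\mu$ of $B$; this holds precisely when $\mu \ge 2\Delta$ for every eigenvalue, i.e. when $B \succeq 2\Delta I$. The crucial input is the lower bound $\lambda_{\min}(B) \ge 1/\Phi_j^{\mathrm{Id}}$: since $u_j I - A_j$ and $A_j - l_j I$ are positive definite (so that $\Phi_j^{\mathrm{Id}}$ is finite), we have $\Phi_j^{\mathrm{Id}} \ge \textsf{Tr}((A_j - l_j I)^{-1}) = \sum_t 1/\mu_t \ge 1/\lambda_{\min}(B)$. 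Comparing with $2\Delta = \frac{2\gamma}{(1+2\gamma)\Phi_j^{\mathrm{Id}}} \le \frac{1}{\Phi_j^{\mathrm{Id}}}$, valid since $\frac{2\gamma}{1+2\gamma} \le 1$ which holds in particular for $\gamma \le \frac14$, yields $\lambda_{\min}(B) \ge 1/\Phi_j^{\mathrm{Id}} \ge 2\Delta$, as required. The same computation gives $A_j - l_{j+1} I = B - \Delta I \succeq (\lambda_{\min}(B) - \Delta) I \succeq \Delta I \succ 0$, which justifies the positive-definiteness of $A_j - l_{j+1} I$ used in the reduction.

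The main conceptual obstacle is that raising the lower barrier from $l_j$ to $l_{j+1}$ \emph{shrinks} the matrix $A_j - l_{j+1} I$ relative to $A_j - l_j I$, so the rank-one term is being compared against a smaller PSD matrix than the one defining $p_j$; this is the opposite of the favorable situation for the upper barrier, where no factor of $2$ is needed. The resolution is that the loss incurred is at most a factor of $2$, and this slack is exactly purchased by the gap between $\lambda_{\min}(A_j - l_j I) \ge 1/\Phi_j^{\mathrm{Id}}$ and the step size $2\Delta \le 1/\Phi_j^{\mathrm{Id}}$. Identifying $\lambda_{\min}(A_j - l_j I) \ge 1/\Phi_j^{\mathrm{Id}}$ as the right quantity to extract from the potential is the key step.
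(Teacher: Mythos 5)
Your proof is correct and is essentially the paper's argument in dual form: your inverse comparison $(A_j - l_{j+1} I)^{-1} \preceq 2 (A_j - l_j I)^{-1}$ is algebraically equivalent to the paper's additive decomposition $\gamma (A_j - l_j I) = \gamma (A_j - l_{j+1} I) + \gamma (l_{j+1} - l_j) I$ with $(l_{j+1} - l_j) I \preceq A_j - l_{j+1} I$, and both hinge on exactly the same key inequality $\lambda_{\min} (A_j - l_j I) \ge 1/\Phi_j^{\mathrm{Id}} \ge 2 (l_{j+1} - l_j)$, with the rank-one domination through $p_j$ identical to the paper's $U(x) U(x)^T \preceq p_x \Phi_j^{\mathrm{Id}} (A_j - l_j I)$. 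Incidentally, you use the algorithm's actual update $l_{j+1} - l_j = \frac{\gamma}{(1+2\gamma) \Phi_j^{\mathrm{Id}}}$, whereas the paper's proof writes $\frac{\gamma}{(1-2\gamma) \Phi_j^{\mathrm{Id}}}$ (a harmless slip, since either constant gives $2(l_{j+1} - l_j) \le 1/\Phi_j^{\mathrm{Id}}$ when $\gamma \le \frac{1}{4}$).
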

\begin{proof}
First observe that,
\begin{equation} \label{eq:15}
    w_j w_j^T \preceq \gamma (A_j - l_j I) = \gamma (A_j - l_{j+1} I) + \gamma (l_{j+1} - l_j) I
\end{equation}
Therefore, it suffices to show that $l_{j+1} - l_j \preceq (A_j - l_{j+1} I)$, or in other words, $l_{j+1} - l_j \le \lambda_{\min} (A_j - l_{j+1} I)$ to complete the proof. By definition,
\begin{align}
    l_{j+1} - l_j = \frac{\gamma}{(1-2\gamma)\Phi_j^{\mathrm{Id}}} \le \frac{\gamma}{1-2\gamma} \lambda_{\min} (A_j - l_j I) \le \frac{1}{2} \lambda_{\min} (A_j - l_j I)
\end{align}
where the last inequality uses the fact that $\gamma \le \frac{1}{4}$. Therefore, $2 l_{j+1} - l_j \le \lambda_{\min} (A_j)$ and $l_{j+1} - l_j \le \lambda_{\min} (A_j - l_{j+1})$. Plugging this back into \cref{eq:15},  we arrive at the claim of the lemma.
\end{proof}

\begin{lemma} \label{lemma:bounded}
For $\gamma < 1$, in each iteration $j = 0,\cdots,m$ of \Cref{alg:RBSS}, the condition $l_j I \preceq A_j \preceq u_j I$ is satisfied.
\end{lemma}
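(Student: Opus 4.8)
The plan is to prove the (slightly stronger) strict inclusion $l_j I \prec A_j \prec u_j I$ for every $j = 0,\dots,m$ by induction on $j$; strictness is what the algorithm actually needs, since both the sampling probabilities in \cref{eq:pxj} and the potential $\Phi_j^{\mathrm{Id}}$ invoke the inverses $(u_j I - A_j)^{-1}$ and $(A_j - l_j I)^{-1}$. The base case is immediate: $A_0 = 0$ while $u_0 = -l_0 = 2d/\gamma > 0$, so $l_0 I \prec A_0 \prec u_0 I$. For the inductive step I assume $l_j I \prec A_j \prec u_j I$, so that $u_j I - A_j$ and $A_j - l_j I$ are both positive definite, and recall that the update is the rank-one, positive semidefinite increment $A_{j+1} = A_j + w_j' U(x_j) U(x_j)^T$ with $w_j' = \gamma/(\Phi_j^{\mathrm{Id}} p_j)$.

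I would establish the upper barrier first. Writing $v = U(x_j)$ and substituting $p_j$ from \cref{eq:pxj}, the increment simplifies to $w_j' v v^T = \frac{\gamma}{\,v^T\big((u_jI - A_j)^{-1} + (A_j - l_jI)^{-1}\big)v\,}\, v v^T$. Since $(A_j - l_j I)^{-1} \succeq 0$, the denominator is at least $v^T (u_j I - A_j)^{-1} v$, and the standard rank-one Loewner inequality $\frac{v v^T}{v^T P v} \preceq P^{-1}$ (a Cauchy--Schwarz estimate) applied with $P = (u_j I - A_j)^{-1}$ gives $\frac{v v^T}{v^T (u_j I - A_j)^{-1} v} \preceq u_j I - A_j$. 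Combining these, the increment obeys $w_j' v v^T \preceq \gamma(u_j I - A_j)$. Hence $A_{j+1} \preceq (1-\gamma)A_j + \gamma u_j I \prec (1-\gamma)u_j I + \gamma u_j I = u_j I \prec u_{j+1} I$, using $\gamma < 1$ together with the inductive bound $A_j \prec u_j I$. This yields the upper barrier with a full increment $\Delta_u$ of slack.

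For the lower barrier, the increment is positive semidefinite, so $A_{j+1} \succeq A_j$ and therefore $\lambda_{\min}(A_{j+1} - l_j I) \ge \lambda_{\min}(A_j - l_j I)$. It thus suffices to show that the smallest eigenvalue of $A_j - l_j I$ already exceeds the threshold increment $\Delta_l := l_{j+1} - l_j = \gamma/\big((1+2\gamma)\Phi_j^{\mathrm{Id}}\big)$. This follows from the elementary estimate $\Phi_j^{\mathrm{Id}} \ge \mathrm{Tr}\big((A_j - l_j I)^{-1}\big) \ge 1/\lambda_{\min}(A_j - l_j I)$, which gives $\Delta_l \le \frac{\gamma}{1+2\gamma}\lambda_{\min}(A_j - l_j I) < \lambda_{\min}(A_j - l_j I)$. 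Consequently $A_{j+1} - l_{j+1} I = (A_{j+1} - l_j I) - \Delta_l I \succ 0$, which closes the induction.

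The only genuinely delicate point is ensuring the barriers are never crossed, i.e. that $\Delta_u$ and $\Delta_l$ stay small relative to the available spectral slack; both reduce to a single mechanism, namely that $\Phi_j^{\mathrm{Id}}$ controls the gap of $A_j$ to each barrier while the $\gamma$-scaling of the increments keeps $A_{j+1}$ strictly interior. Everything else is routine Loewner-order and Cauchy--Schwarz manipulation. Importantly, the argument uses only the update rules and the sampling distribution directly, so it creates no circular dependence with the later lemmas (such as \Cref{lemma:Phi-lb,lemma:wjwjT-l}) that in turn rely on the boundedness established here.
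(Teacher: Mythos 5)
Your proof is correct, and its upper-barrier half is the paper's argument verbatim in substance: both drop the $(A_j - l_j I)^{-1}$ term from the sampling probability in \cref{eq:pxj}, apply the rank-one Loewner inequality $v v^T \preceq (v^T B^{-1} v)\, B$ with $B = u_j I - A_j$ to get $w_j' U(x_j) U(x_j)^T \preceq \gamma (u_j I - A_j)$, and then absorb the increment using $\gamma < 1$ and the monotone growth of $u_j$. Where you genuinely diverge is the lower barrier. The paper sums its pointwise inequality $U(x) U(x)^T \preceq p_x \Phi_j^{\mathrm{Id}} (A_j - l_j I)$ over \emph{all} points $x$, invoking $\sum_x U(x) U(x)^T = I$ (orthonormality of the columns of $U$) and $\sum_x p_x = 1$ to conclude $\frac{1}{\Phi_j^{\mathrm{Id}}} I \preceq A_j - l_j I$; you reach the same key bound $\lambda_{\min}(A_j - l_j I) \ge 1/\Phi_j^{\mathrm{Id}}$ purely spectrally, via $\Phi_j^{\mathrm{Id}} \ge \tr\big( (A_j - l_j I)^{-1} \big) \ge 1/\lambda_{\min}(A_j - l_j I)$, and then compare against $\Delta_l = \gamma/\big((1+2\gamma)\Phi_j^{\mathrm{Id}}\big) < \lambda_{\min}(A_j - l_j I)$ after discarding the PSD increment ($A_{j+1} \succeq A_j$, exactly as the paper does at the analogous step). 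Your derivation is slightly more elementary and more robust: it uses only the definition of the potential, not the isotropy of the rows of $U$, so it would survive unchanged if the sampling were run over a non-orthonormal system. Your strengthening to strict inequalities $l_j I \prec A_j \prec u_j I$ is also a genuine improvement in rigor over the paper's non-strict statement, since the inverses appearing in $p_x$ and $\Phi_j^{\mathrm{Id}}$ are only defined when both barrier matrices are nonsingular, a point the paper's induction leaves implicit; quantitatively the two proofs extract identical slack at both barriers.
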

\begin{proof}
The proof follows by induction. For $j = 0$, $A_j = 0$ and trivially satisfies the condition $\frac{2d}{\gamma} I = -l_j I \preceq A_j \preceq u_j I = \frac{2d}{\gamma} I$. By the induction hypothesis, we assume that $l_j I \preceq A_j \preceq u_j I$ henceforth in the proof.  For any point $x$, observe that,
\begin{align}
    p_x \Phi_j^{\text{Id}} &= U(x)^T \left( (u_j I - A_j)^{-1} + (A_j - l_j I)^{-1} \right) U(x) \label{eq:1230}\\
    &\ge U(x)^T (u_j I - A_j)^{-1} U(x) \label{eq:1231}
\end{align}
Observe that for any vector $v$ and PSD matrix $B$, $vv^T \preceq (v^T B^{-1} v) B$.
Therefore, for any point $x$,
\begin{align}
    U(x) U(x)^T &\preceq (U(x)^T (u_j I - A_j)^{-1} U(x) ) (u_j I - A_j) \\
    &\overset{(i)}{\preceq} p_x \Phi_j^{\text{Id}} (u_j I - A_j) \label{eq:1232}
\end{align}
where $(i)$ uses \cref{eq:1231}.
Similarly by lower-bounding \cref{eq:1230} by $U(x)^T (A_j - l_j I)^{-1} U(x)$ and use a similar approach to prove that for any $x$,
\begin{equation} \label{eq:lA-lb}
    U(x) U(x)^T \preceq p_x \Phi_j^{\text{Id}} (A_j  - l_j I)
\end{equation}
Choosing $x = x_j$ in \cref{eq:1232}, as a special case, 
\begin{equation}
    A_{j+1} - A_j = \frac{\gamma}{p_j \Phi_j^{\text{Id}}} U(x_j) U(x_j)^T \preceq \gamma (u_j I - A_j) \label{eq:uA-ub}
\end{equation}
Using the induction hypothesis, we use this to prove that $A_{j+1} \preceq u_{j+1} I$. Indeed, \cref{eq:uA-ub} implies that,
\begin{equation}
(u_j I - A_j) - (u_j I - A_{j+1}) = A_{j+1} - A_j \preceq \gamma (u_j I - A_j)
\end{equation}
Therefore,
\begin{equation}
    (1-\gamma) (u_j I - A_j ) \preceq u_j I - A_{j+1} \preceq u_{j+1} I - A_{j+1}
\end{equation}
And using the induction hypothesis that $u_j I - A_j \succeq 0$ completes the proof that $A_{j+1} \preceq u_{j+1} I$. On the other hand, to prove that $A_{j+1} \succeq l_{j+1} I$, summing \cref{eq:lA-lb} over all $x$ and noting that $\sum_x U(x) U(x)^T = I$,
\begin{equation} \label{eq:1234}
    \frac{1}{\Phi_j^{\text{Id}}} I \preceq A_j - l_j I
\end{equation}
Finally, observe that,
\begin{align}
    A_{j+1} - l_{j+1} I &= (A_j - l_j I) + \left( \frac{\gamma}{\Phi_j^{\text{Id}}} \frac{1}{p_j} U(x_j) U(x_j)^T - \frac{\gamma}{1+2\gamma} \frac{1}{\Phi_j^{\text{Id}}} I \right) \\
    &\succeq (A_j - l_j I) - \frac{\gamma}{1+2\gamma} \frac{1}{\Phi_j^{\text{Id}}} I \\
    &\overset{(i)}{\succeq} \frac{1}{\Phi_j^{\text{Id}}} I - \frac{\gamma}{1+2\gamma} \frac{1}{\Phi_j^{\text{Id}}} I \\
    &\succeq 0
\end{align}
where $(i)$ uses \cref{eq:1234}. 
\end{proof}

\end{document}